\documentclass[a4paper,11pt]{article}

\usepackage{xr}
\makeatletter

\newcommand*{\addFileDependency}[1]{
\typeout{(#1)}
\@addtofilelist{#1}
\IfFileExists{#1}{}{\typeout{No file #1.}}
}\makeatother

\usepackage{xr-hyper}
\usepackage{mattsstyle}
\usepackage{tcolorbox}
\usepackage{soul}
\usepackage{bm}
\usepackage{comment}
\usepackage{xfrac}
\usepackage{dirtytalk}

\usepackage{booktabs,tabularx,ragged2e}
\newcolumntype{Y}{>{\RaggedRight\arraybackslash}X}
\usepackage{adjustbox} 

\def\Lp#1{\mathrm{L}^{#1}}

\def\Ck#1{\mathrm{C}^{#1}}

\def\spaceBar{\, | \,}

\newcommand{\ub}{\mathbf{u}}

\DeclareMathOperator{\nn}{NN}

\definecolor{ETHBlue}{RGB}{0,0,0}

\def\commentOut#1{}

\allowdisplaybreaks

\usepackage{pifont}
\title{Multiple Neural Operators Achieve Near-Optimal Rates for Multi-Task Learning}
\author[1]{Adrien Weihs}
\author[1]{Hayden Schaeffer}

\affil[1]{Department of Mathematics,\protect\\ University of California Los Angeles,\protect\\ Los Angeles, CA 90095, USA. \vspace{\baselineskip}}

\date{}

\newcommand{\ScalingNetwork}{\mathrm{MNO}}

\newcommand{\ev}{\operatorname{ev}}
\newcommand{\complexity}{\mathcal{C}}

\begin{document}

\maketitle

\begin{abstract}
We study the approximation and statistical complexity of learning collections of operators in a shared multi-task setting, with a focus on the Multiple Neural Operators (MNO) architecture. For broad classes of Lipschitz multiple operator maps, we derive near-optimal upper bounds for approximation and statistical generalization. On the lower-bound side, we establish a curse of parametric complexity and prove corresponding minimax rates. Together, these results show that shared representations across tasks do not increase the overall cost: multi-task operator learning follows the same scaling laws as single operator learning. We also compare MNO with a multi-task extension of DeepONet based on concatenated task inputs and show that, from a worst-case approximation-complexity perspective, both architectures satisfy essentially the same asymptotic rates.

\end{abstract}

\keywords{Deep Neural Networks, Approximation Theory, Neural Scaling Laws, Multi-task Learning, Multi-Operator Learning, Operator Learning.}

\subjclass{41A99, 68T07}

\section{Introduction} \label{sec:intro}

We study the problem of learning collections of operators in a shared multi-task setting, referred to as \emph{multi-operator learning} or \emph{multi-task operator learning} \cite{sun2025foundation,sun2025lemonlearninglearnmultioperator,weihs2025MOL,weihs2026generalizationboundsstatisticalguarantees}. The goal is to approximate multiple operator maps, that is, maps of the form
\begin{equation*}
    G: W \longrightarrow \{G[\alpha]:U\to V\}_{\alpha\in W},
\end{equation*}
within a single neural network model.
Here, \(W,U,V\) are function spaces, \(\alpha\in W\) identifies the operator or task, and for each fixed \(\alpha\), the corresponding map \(G[\alpha]\) sends an input function \(u\in U\) to an output in \(V\). This viewpoint extends the standard operator-learning problem \cite{KOVACHKI2024419,lu2022comprehensive}, in which one seeks to approximate a single operator between function spaces. It also arises naturally in a wide variety of applications, including parameterized kernel operators, solution operators of parameterized PDEs, and task-conditioned operator families or foundation models; see \cite[Section 2]{weihs2026generalizationboundsstatisticalguarantees} for a broader discussion. Learning such maps is particularly challenging for several reasons. First, multiple operator/multi-task learning is \say{effectively higher-dimensional} than standard operator learning. Second, the dependence on the operator descriptor \(\alpha\), the input function \(u\), and the evaluation variable \(x\) may all be highly nonlinear. Third, these variables play different roles in the problem, and this structure should be taken into account in the design of the learning mechanism. 

Given their expressivity and flexibility, neural networks are a natural approximation class in this setting. However, the associated architectural search problem remains a central challenge. Specifically, one seeks network structures that are empirically effective while also admitting theoretical guarantees. A natural way to quantify such guarantees is through their scaling laws. These describe how a target error metric, for instance approximation accuracy or generalization error, depends on a notion of complexity, such as architectural size, sparsity, or the number of training samples. In this sense, scaling laws relate achievable performance to the resources available for learning.
 
A recent architecture addressing both empirical performance and theoretical guarantees is the Multiple Neural Operators (MNO) architecture \cite{weihs2025MOL,weihs2026generalizationboundsstatisticalguarantees}. Concretely, 
for Lipschitz multiple operator maps \(G\), it was shown in \cite{weihs2025MOL} that for every target accuracy \(\varepsilon>0\), there exists a MNO with explicit \(\varepsilon\)-dependent bounds on the depth, width, sparsity, and parameter magnitude (see Table \ref{tab:mno-old-new-subcolumns}) satisfying
\(
\|\mathrm{MNO}-G\|\le \varepsilon.
\)
In particular, these constructive bounds imply an estimate of the form
\begin{equation*} 
    N_\#(\mathrm{MNO})\le \varepsilon^{-\varepsilon^{-\varepsilon^{-1/d}}},
\end{equation*}
where \(N_\#\) denotes the total number of nonzero parameters and \(d>0\) depends on the underlying function classes. These approximation-theoretic bounds were then used in \cite{weihs2026generalizationboundsstatisticalguarantees} to derive statistical learning rates. In particular, one obtains a generalization bound scaling as
\[
\left(
\frac{\log\log n_\alpha}{\log\log\log n_\alpha}
\right)^{-2/d},
\]
where \(n_\alpha\) denotes the number of sampled operators \(G[\alpha]\) used during training. To the best of our knowledge, these rates were the first of their kind for multiple operator learning. However, they also exhibit an additional exponential blow-up relative to the corresponding rates in standard operator learning (see Table \ref{tab:operator-learning-scaling-laws}), raising the possibility that multiple operator learning may be intrinsically more complex and might suffer from a specific \textit{curse of parametric complexity}.

In this paper, we demonstrate that the complexity of multi-task operator learning matches the qualitative exponential rates one expects for single-task operator learning. Specifically, our first main result is a substantial improvement of the previously known approximation and generalization scaling laws for multi-task learning using MNO. In particular, we show that the additional constructive blow-up is not intrinsic: one can derive complexity upper bounds for MNO that match the scale of the corresponding operator-learning rates. This immediately yields a stronger bound in the statistical learning rates. Our second main contribution is a lower-bound theory for multiple operator learning. We prove lower complexity bounds for broad classes of Lipschitz and differentiable multiple operator maps, showing that the upper bounds obtained in this work are close to sharp and that some form of parametric complexity barrier is indeed unavoidable. 
Lastly, we show that the same minimax scaling laws also apply to alternative architectural approaches to multiple operator learning. In this way, the paper clarifies which aspects of the previously observed complexity growth are artifacts of the constructive analysis and which reflect intrinsic barriers of the target class.

\subsection{Main Contributions}

Our main contributions are summarized as follows.
\begin{enumerate}
    \item \textbf{Near-optimal constructive approximation rates for MNO}
We derive near-optimal approximation-theoretic upper bounds for MNO on classes of Lipschitz multiple operator maps in Theorem \ref{thm:main:improvedRates}. The rates are obtained through a refined error analysis of the constructive approximation scheme. In particular, for every target accuracy \(\varepsilon>0\), we construct a MNO approximator with uniform error at most \(\varepsilon\) and with explicit \(\varepsilon\)-dependent bounds on the depth, width, sparsity, and parameter magnitude of its subnetworks. The precise scalings are given in Table \ref{tab:mno-old-new-subcolumns}. The resulting total approximation complexity satisfies
\[
N_\# \lesssim \exp\l d \log(\varepsilon^{-1})\,\varepsilon^{-\max\{d_W,d_U\}}\r,
\]
where $d_W, d_U$ denote the dimension of the domain of functions in $W, U$  respectively and $d > 0$ depends on $U$ and $W$. Inverting the relationship, we obtain the following scaling of error $\eps$ as a function of $N_\#$:
\[
\eps \lesssim
\left(
\frac{\log N_\#}{\log\log N_\#}
\right)^{-1/\max\{d_W,d_U\}}.
\]
These rates are of iterated exponential-type, qualitatively matching the known rates for operator learning \cite{liu2024neuralscalinglawsdeep,lanthalerStuart,SchwabStein,MarcatiSchwabPolytopes,herrmann,Lanthaler2022,marcati2023,lanthalerPCAnet}.

    \item \textbf{Refined statistical learning rates for MNO}
    We show that the refined approximation-theoretic bounds propagate directly to the statistical setting. More precisely, by combining the refined approximation construction with the generalization framework of \cite{weihs2026generalizationboundsstatisticalguarantees}, we obtain stronger learning rates for MNO on Lipschitz multiple operator maps in Theorem \ref{thm:scalingLawsGeneralizationError}. In particular, the resulting generalization bound scales as
    \[
    \left(
    \frac{\log n_\alpha}{\log\log n_\alpha}
    \right)^{-2/\max\{d_W,d_U\}},
    \]
    thus improving the previously known rates and reducing them to the corresponding operator-learning scale.

\item \textbf{Lower complexity bounds for multiple operator learning}
We extend the lower-complexity framework of \cite{lanthalerStuart} to the multiple operator setting. We introduce the notion of multiple operator maps of neural network type adapted to separable architectures such as MNO. As a consequence, in Theorem \ref{thm:codMNO} and Lemma \ref{lem:codMNO:symmetric}, we obtain lower complexity bounds for broad classes of Lipschitz and differentiable multiple operator maps, proving the same curse of parametric complexity as in operator learning for the multiple operator case. Specifically, we show that there exists a multiple operator map $G$ so that for any suitable neural network $\nn$ satisfying $\Vert \nn - G \Vert \leq \eps$, we have \[
N_\#(\nn) \geq \exp\l c\,\varepsilon^{-1/\eta}\r
\] 
where $\eta$ depends on $G$, $W$ and $U$.

\item \textbf{Minimax approximation-complexity rates for multiple operator learning} Adapting the lower bounds to the precise setting of the improved constructive upper bounds, in Theorem \ref{thm:minimaxMNO}, we derive minimax approximation-complexity rates for MNO on the Lipschitz class \(\mathcal H\). In particular, for the worst-case approximation complexity \(\mathfrak C(\varepsilon;\mathcal H)\), we obtain bounds of the form
\[
\exp\l c\,\varepsilon^{-1/\eta}\r
\lesssim
\mathfrak C(\varepsilon;\mathcal H)
\lesssim
\exp\l d \log(\varepsilon^{-1})\,\varepsilon^{-\max\{d_W,d_U\}}\r,
\]
showing that the upper bounds obtained in this work are close to sharp at the level of the overall exponential complexity regime.

    \item \textbf{Comparison with a concatenated DeepONet baseline.}
    We analyze an alternative multiple operator architecture obtained by concatenating the operator descriptor and the input function, and prove corresponding upper and lower complexity bounds for this model in Theorem \ref{thm:minimaxDeepONet}. In particular, we show that this concatenated DeepONet-type baseline obeys essentially the same minimax approximation-complexity scaling as MNO. Thus, from the viewpoint of worst-case approximation complexity, the present theory does not separate the two architectures, even though MNO exhibits markedly stronger empirical performance in previously reported experiments \cite{weihs2025MOL}. 
\end{enumerate}

\subsection{Related Works} \label{sec:related}

\paragraph{Multi-task and multiple operator learning}

There are at least two broad motivations for learning operator families rather than isolated operators. In some applications, the problem itself is naturally described by a collection of related operators, for instance through variations in physical parameters, geometry, boundary conditions, or governing equations. In other settings, learning several operators jointly is primarily a modeling strategy: by sharing structure across tasks, one may improve data efficiency, robustness, and transfer. These perspectives have motivated a growing recent literature on multiple operator learning and closely related frameworks; see, for example, \cite{sun2025foundation,liu2024prose,mccabe2023multiple,yang2023incontext,yang2023prompting,cao2024vicon,zhang2024modno,zhang2024d2no,liu2025bcat,ye2025pdeformer,zhang2025probabilistic,Jollie_2025,herde2024poseidon,bacho2025operatorlearningmachineprecision,weihs2025MOL,weihs2026generalizationboundsstatisticalguarantees,wang2026opinfllmparametricpdesolving}. 

At a coarse level, two modeling paradigms are common. One option is to train separate operator learners independently, one for each task or operator instance. Another is to regard the target as a parameterized family \(\{G[\alpha]\}\), where a discrete or continuous descriptor \(\alpha\) specifies the operator identity. The former viewpoint avoids introducing an explicit operator descriptor, but is correspondingly limited in its ability to exploit shared structure and to extrapolate to unseen operators. The latter instead augments operator learning with an explicit encoding of operator information \cite{sun2025foundation,liu2024prose,yang2023prompting,negrini2025multimodal,liu2024prosefd,weihs2025MOL,weihs2026generalizationboundsstatisticalguarantees}, such as a task label, symbolic expression, governing equation, or textual description. This conditioning mechanism has become central in recent work on PDE foundation models and task-adaptive operator learning, where it often improves transfer and enables zero-shot or out-of-distribution generalization without retraining.

\paragraph{Theoretical analyses of approximation and statistical generalization.}

A basic theoretical question in operator learning is expressivity: can a given architecture approximate large classes of operators, and under what assumptions? Early foundational work on neural network approximation of maps between spaces of scalar-valued functions was developed in \cite{ChenChen1993,ChenChen1995}. Since then, universal approximation and related expressivity results have been established for a range of operator-learning architectures, including DeepONet \cite{Lanthaler2022,liu2024neuralscalinglawsdeep}, the Fourier Neural Operator \cite{Kovachki2021}, and PCA-Net \cite{Bhattacharya}. Further developments concerning operator approximation, the effect of discretization, and architectural refinements may be found in \cite{mionet,CASTRO2023127413,castro2022,Huang2025,Kovachki2023,zhangBelnet,zhang2025discretization}.

A second line of work seeks to also quantify how approximation and learning performance scale with available resources. In this direction, scaling laws relate error to quantities such as model complexity, data size, and computational budget, and thereby provide a theoretical basis for performance prediction and generalization analysis \cite{kaplan2020scalinglawsneurallanguage}.  They offer a principled way to compare the efficiency of different architectures by quantifying the amount of complexity required to attain a prescribed error level. Also, when constructive upper bounds are compared with lower bounds, they help separate inefficiencies of a particular architecture or proof strategy from barriers that are intrinsic to the target class itself. This distinction is especially important in multiple operator learning, where complexity may arise both from the structure of the multiple operator class and from the chosen architecture.

In the standard operator-learning setting, both empirical and theoretical analyses of scaling behavior have received significant attention. On the empirical side, \cite{dehoop2022costaccuracytradeoffoperatorlearning} studies cost--accuracy trade-offs across neural operator architectures and highlights the role of both network size and sampling budget. On the theoretical side, approximation scaling laws and complexity estimates for deep ReLU networks, DeepONet, and related operator-learning architectures have been developed in \cite{liu2024neuralscalinglawsdeep,Lanthaler2022,lanthalerPCAnet,marcati2023,herrmann,lanthalerStuart,furuya2023globally,MarcatiSchwabPolytopes,SchwabStein}. The precise form of these scaling laws depends on several ingredients, including the target class of operators, the regularity and geometry of the input and output spaces, and the model class used for approximation. Existing theoretical results in operator learning can be organized, very broadly, into the categories shown in Table \ref{tab:operator-learning-scaling-laws}.

\begin{table}[H]
\centering
\small
\renewcommand{\arraystretch}{1.25}
\begin{tabularx}{\linewidth}{
>{\RaggedRight\arraybackslash}p{5cm}
>{\RaggedRight\arraybackslash}p{5cm}
>{\RaggedRight\arraybackslash}p{3.2cm}
Y}
\toprule
\textbf{Setting} & \textbf{Neural architecture} & \textbf{Complexity bound} & \textbf{Representative references} \\
\midrule
Lipschitz operators (upper bounds) 
& DeepONet
& $\exp \l c \log(\eps^{-1}) \eps^{-d} \r$ 
& \cite{liu2024neuralscalinglawsdeep} \\

Lipschitz/Differentiable operators (lower bounds) 
& DeepONet, FNO, PCANet, etc.
& $\exp \l c\varepsilon^{-d} \r$ 
& \cite{lanthalerStuart} \\

Lipschitz operators with non-standard architectures 
& DeepONet
& $\varepsilon^{-d}$ 
& \cite{SchwabStein} \\

Holomorphic operators 
& DeepONet
& substantially improved rates 
& \cite{MarcatiSchwabPolytopes,herrmann} \\

PDE operators (problem-specific analyses) 
& DeepONet, FNO, PCANet, etc.
& problem-dependent
& \cite{Lanthaler2022,lanthalerStuart,marcati2023,lanthalerPCAnet} \\
\bottomrule
\end{tabularx}
\caption{Schematic overview of representative scaling-law regimes in operator learning. The precise exponents, assumptions, and architecture classes vary across settings.}
\label{tab:operator-learning-scaling-laws}
\end{table}

Statistical guarantees have also become an active topic of study. For DeepONet-type models, \cite{liu2024neuralscalinglawsdeep} derives generalization bounds of the form
\[
\left(
\frac{\log n_u}{\log\log n_u}
\right)^{-1/d},
\]
for some \(d>0\), with \(n_u\) denoting the number of sampled input functions available during training. Closely related statistical rates for DeepONet and related models are obtained in \cite{liu2024,liu2024neuralscalinglawsdeep,benitez}. Complementary sample-complexity analyses for operator learning are developed in \cite{kovachki2024datacomplexityestimatesoperator,grohs2025theorytopracticegapneuralnetworks,adcock2025samplecomplexitylearninglipschitz}.
By contrast, the corresponding theory in multiple operator learning remains much less developed. Empirical evidence on multi-task and operator-family learning can be found in \cite{sun2025lemonlearninglearnmultioperator,Jollie_2025}. For the Multiple Neural Operators architecture, universal approximation results and the first explicit approximation scaling laws were established in \cite{weihs2025MOL}, and the corresponding statistical generalization rates were derived in \cite{weihs2026generalizationboundsstatisticalguarantees}. The present paper builds on this line by substantially improving these complexity estimates and by complementing them with lower bounds and minimax approximation-complexity rates. The theoretical framework transfers to other neural operator frameworks that are built from \cite{ChenChen1995}.

\paragraph{Organization of the paper.}
The remainder of the paper is organized as follows. In Section~\ref{sec:background}, we introduce the assumptions and the mathematical framework used throughout the paper. Section~\ref{sec:main} presents the main results of the paper. Proofs of all results are collected in the \nameref{sec:proofs}.

\section{Background} \label{sec:background}

In this section, we review the mathematical framework underlying our analysis. We begin by introducing the general notation and collecting the assumptions on the function spaces, target multiple operator maps, and product norms used throughout the paper. We then summarize the existing approximation and generalization scaling laws for MNO, which serve as a benchmark for our improved upper bounds. Finally, we recall the lower-complexity framework from operator learning on which our lower-bound analysis is based.

\subsection{General Notation}
Throughout the paper, we take \(\mathbb{N}=\{1,2,3,\dots\}\). For operators $T:U \to V$, we define the norm
\[
\|T\|_{\op} := \sup_{u\in U} \|T(u)\|_V.
\]
We write $\Omega_A$ for the domain of functions in the function set $A$. We denote the ball centered at $x$ with radius $\delta$ with respect to the norm $\Vert \cdot \Vert$ as $\cB_{\delta,\Vert \cdot \Vert}(x)$. When the norm is omitted, the norm is understood to be the $\ell^2$/Euclidean norm. When considering balls of functions mapping from $\Omega_A$ into $\bbR$, we write $\cB_{\delta,\Vert \cdot \Vert,\Omega_A}(x)$.
We denote by \(\ev\) the evaluation operator; for instance, \(\ev_x(f)=f(x)\) for a function \(f\) and a point \(x\) in its domain. 
For a neural network \(\Phi\), we write \(N_{\#}(\Phi)\) for the total number of nonzero parameters of \(\Phi\), that is, the total number of nonzero entries in all weight matrices and bias vectors. We use $\lesssim$ in scaling discussions to indicate the dominant asymptotic order, possibly suppressing multiplicative constants and lower-order terms. Lastly, we recall the definition of covering numbers. 

\begin{mydef}[Covering Number]
Let \( (X, d) \) be a metric space and let \( \eta > 0 \). A finite subset \( \mathcal{C} \subset X \) is called a \( \eta \)-cover of \( X \) if for every \( x \in X \), there exists \( c \in \mathcal{C} \) such that
\[
d(x, c) \leq \eta.
\]
The covering number of \( X \) at scale \( \eta \) with respect to the metric \( d \) is defined as
\[
\mathcal{N}(\eta, X, d) := \min \left\{ |\mathcal{C}| : \mathcal{C} \subset X \text{ is a } \eta\text{-cover of } X \right\}.
\]
\end{mydef}

\subsection{Assumptions}

\begin{assumptions} We make the following assumption on our spaces.
\begin{enumerate}[label=\textbf{S.\arabic*}]
\item The space $U(d_U,\gamma_U,L_U,\beta_U)$ is a function set such that \begin{enumerate}
    \item any function $u \in U$ is defined on $\Omega_U := [-\gamma_U,\gamma_U]^{d_U}$;
    \item for all functions $u \in U$ and $x,y \in \Omega_U$, we have \[
    \vert u(x) - u(y) \vert \leq L_U \vert x - y \vert;
    \]
    \item for all functions $u \in U$, we have $\Vert u \Vert_{\Lp{\infty}} \leq \beta_U$.
\end{enumerate}\label{assumption:Main:assumptions:S4}
\end{enumerate}
\end{assumptions}

The space defined in Assumption \ref{assumption:Main:assumptions:S4} yields a general approximation class for approximation theory.

\begin{assumptions} We make the following assumptions on our multiple operator map $G$. 
    \begin{enumerate}[label=\textbf{O.\arabic*}]
        \item  Assume that  $W(d_W,\gamma_W,L_W,\beta_W)$,
 $U(d_U,\gamma_U,L_U,\beta_U)$ and $V(d_V,\gamma_V,L_V,\beta_V)$ satisfy Assumption \ref{assumption:Main:assumptions:S4}. For $L_\mathcal{G} >1$, $r_\mathcal{G} \geq 1$, a multiple operator map is a function $G$ such that \begin{align*}
 	&G:\cB_{\beta_W,\Vert \cdot \Vert_{\Lp{\infty}},\Omega_W}(0) \mapsto \mathcal{G} 
    \end{align*}
    where $\mathcal{G} = \Big\{ G[\alpha]\spaceBar G[\alpha]:\cB_{\beta_U,\Vert \cdot \Vert_{\Lp{\infty}},\Omega_U}(0) \mapsto V \text{ and } \Vert G[\alpha][u_1] - G[\alpha][u_2]\Vert_{\Lp{\infty}(\Omega_V)} \leq L_{\mathcal{G}}\Vert u_1 - u_2 \Vert_{\Lp{r_\mathcal{G}}(\Omega_U)} \Big\}$ \label{assumption:Main:assumptions:O1}
    \item For $r_G \geq 1$, the multiple operator map $G$ satisfies \begin{equation*} 
   \Vert G[\alpha_1] - G[\alpha_2] \Vert_{\Lp{\infty}(\cB_{\beta_U,\Vert \cdot \Vert_{\Lp{\infty}},\Omega_U}(0) \times \Omega_V)} \leq L_G \Vert \alpha_1 - \alpha_2 \Vert_{\Lp{r_G}(\Omega_W)}  
\end{equation*}
for $\alpha_1,\alpha_2 \in \cB_{\beta_W,\Vert \cdot \Vert_{\Lp{\infty}},\Omega_W}(0)$. \label{assumption:Main:assumptions:O2}
 \end{enumerate}
\end{assumptions}

Assumptions \ref{assumption:Main:assumptions:O1} and \ref{assumption:Main:assumptions:O2} reflect minimal Lipschitz regularity assumptions on multiple operator maps required for our approximation-theoretical results.

\begin{assumptions} We make the following assumptions on our product norm.
    \begin{enumerate}[label=\textbf{N.\arabic*}]
        \item For Banach spaces $X$ and $Y$, we equip $X \times Y$ with a norm $\Vert \cdot \Vert_{X \times Y}$ satisfying $\Vert (x,y) \Vert_{X\times Y} \geq C_{\textrm{prod}} \max\{\Vert x \Vert_X, \Vert y \Vert_Y\}$ for some $C_{\textrm{prod}} > 0$. \label{assumption:Main:assumptions:N1}
        \item For Banach spaces $X$ and $Y$, we equip $X \times Y$ with a norm $\Vert \cdot \Vert_{X \times Y}$ satisfying $\Vert (x,y) \Vert_{X\times Y} \leq C_{\textrm{prod}} \max\{\Vert x \Vert_X, \Vert y \Vert_Y\}$ for some $C_{\textrm{prod}} > 0$. \label{assumption:Main:assumptions:N2}
        \item For Banach spaces $X$ and $Y$, we equip $X \times Y$ with a norm $\Vert \cdot \Vert_{X \times Y}$ satisfying $\Vert (x,0) \Vert_{X\times Y} = \Vert x \Vert_{X}$ and $\Vert (0,y) \Vert_{X\times Y} = \Vert y \Vert_{Y}$. \label{assumption:Main:assumptions:N3}
    \end{enumerate}
\end{assumptions}

The following product Banach space norms satisfy all Assumptions \ref{assumption:Main:assumptions:N1}, \ref{assumption:Main:assumptions:N2} and \ref{assumption:Main:assumptions:N3}: $\Vert (x,y) \Vert_{X\times Y} = (\Vert x \Vert_X^p + \Vert y \Vert_Y^p)^{1/p}$ with $p \geq 1$ or $\Vert (x,y) \Vert_{X\times Y} = \max\{\Vert x \Vert_X, \Vert y \Vert_Y\}$.

\subsection{Complexity Scaling Laws for Multiple Operator Learning}

We begin by introducing the class of neural networks used throughout our analysis. This class covers a wide range of architectures, is widely used \cite{liu2024neuralscalinglawsdeep,LIU2025101717,liu20252,zhangCoefficient,liu2024}, and will serve as the basic building block for all of our constructions.

\begin{mydef}[Feedforward ReLU Network Class] \label{def:networkClass}
Let \( q : \mathbb{R}^{d_1} \to \mathbb{R} \) be a feedforward ReLU network defined as
\begin{equation} \label{eq:back:reluNN}
   q(x) = W_L \cdot \mathrm{ReLU}\left(W_{L-1} \cdots \mathrm{ReLU}(W_1 x + b_1) + \cdots + b_{L-1} \right) + b_L, 
\end{equation}

where \( W_\ell \) are weight matrices, \( b_\ell \) are bias vectors, and \( \mathrm{ReLU}(a) = \max\{a, 0\} \) is applied element-wise.

We define the class of such feedforward networks with ReLU activations:
\[
\cF_{\rm NN}(d_1, d_2, L, p, K, \kappa, R) = \left\{ [q_1, q_2, \dots, q_{d_2}]^\top \in \mathbb{R}^{d_2} \; \middle| \;
\begin{array}{l}
\text{each } q_k : \mathbb{R}^{d_1} \to \mathbb{R} \text{ has the above form with} \\
L \text{ layers, width bounded by } p, \\
\|q_k\|_{\Lp{\infty}} \leq R, \quad \|W_\ell\|_{\infty,\infty} \leq \kappa, \quad \|b_\ell\|_\infty \leq \kappa, \\
\sum_{\ell=1}^L \left( \|W_\ell\|_0 + \|b_\ell\|_0 \right) \leq K
\end{array}
\right\},
\]
where
\begin{itemize}
    \item \( \|q\|_{\Lp{\infty}} = \sup_{x \in \Omega} |q(x)| \),
    \item \( \|W_\ell\|_{\infty,\infty} = \max_{i,j} |[W_\ell]_{ij}| \),
    \item \( \|b_\ell\|_\infty = \max_i |[b_\ell]_i| \),
    \item \( \|\cdot\|_0 \) denotes the number of nonzero elements.
\end{itemize}
This network class consists of vector-valued functions with input dimension \( d_1 \), output dimension \( d_2 \), depth \( L \), width at most \( p \), at most \( K \) nonzero parameters, all bounded in magnitude by \( \kappa \), and uniformly bounded output norm by \( R \).
\end{mydef}

To approximate multiple operator maps, the following \(\ScalingNetwork\) architecture was introduced in \cite{weihs2025MOL}. This architecture has proven effective in practice while remaining mathematically tractable.

\begin{mydef}[$\ScalingNetwork$ Architecture]
Let \(P, H \in\mathbb N\). We define a $\ScalingNetwork$ to be a map of the form
\[
\mathrm{MNO}[\alpha][u](x)
=
\sum_{p=1}^{P}\sum_{k=1}^{H}
l_p(M_W(\alpha))\, b_{pk}(M_U(u))\, \tau_{pk}(x),
\]
where \(\alpha\in W\), \(u\in U\), \(x\in\Omega_V\), the functions \(l_p\), \(b_{pk}\), and \(\tau_{pk}\) are chosen from suitable neural-network classes \(\cF_{\rm NN}\), and $M_W:W \mapsto \bbR^m$ and $M_U:U \mapsto \bbR^{q}$ are linear maps. 
\end{mydef}

The architecture above fits into the broader fully separable family
\begin{equation} \label{eq:separableArchitecture}
    \sum_{p=1}^{P}\sum_{k=1}^{H}\sum_{\ell=1}^{N}
\theta_{pk\ell}\, l_p(M_W(\alpha))\, b_k(M_U(u))\, \tau_\ell(x),
\qquad \theta_{pk\ell}\in\mathbb R.
\end{equation}
This more general form is convenient for analysis, and the original MNO architecture is recovered by a direct relabelling of indices or by grouping terms into suitably chosen subnetworks.

A first fundamental question is the expressive capacity of the \(\ScalingNetwork\) architecture. In \cite{weihs2025MOL}, both a universal approximation theorem for continuous multiple operator maps and quantitative scaling laws for Lipschitz multiple operator maps satisfying Assumptions \ref{assumption:Main:assumptions:O1} and \ref{assumption:Main:assumptions:O2} were established. In particular, for the latter class, \cite[Theorem 3.16]{weihs2025MOL} shows that for every target accuracy \(\varepsilon>0\), there exists a network \(\nn\) of the form \eqref{eq:separableArchitecture} such that
\[
\sup_{\alpha\in W}\sup_{u\in U}\sup_{x\in\Omega_V}
\left|
G[\alpha][u](x)-\nn[\alpha][u](x)
\right|
\le \varepsilon,
\]
with architectural scalings summarized in Table \ref{tab:mno-old-new-subcolumns}. These scalings imply the following total parameter count and, equivalently, the following approximation rate expressed in terms of the total number of nonzero parameters \(N_\#\):
\[
N_\# \lesssim \eps^{-\eps^{-\eps^{-d_W}}}
\qquad \Longleftrightarrow \qquad
\eps \lesssim \left( \frac{\log \log N_\#}{\log \log \log N_\#} \right)^{-1/d_W}.
\]
In Section \ref{sec:improvedResults}, we show that a different organization of the approximation argument leads to substantially improved rates.

\subsection{Generalization Scaling Laws for Multiple Operator Learning}

Beyond approximation capabilities, a second fundamental question in learning concerns generalization, namely, how well an algorithm trained on a given dataset performs on previously unseen samples. We next introduce the learning setup in \cite{weihs2026generalizationboundsstatisticalguarantees} that is used to train a network of the form \eqref{eq:separableArchitecture}.

Given a function \(f:\mathbb{R}\to\mathbb{R}\) and a constant \(a\geq 0\), we denote by
\[
\operatorname{Clip}_a(f):=\min\{\max(f,-a),a\}
\]
the clipping operation onto the interval \([-a,a]\). This map can be realized using only affine transformations and ReLU activations. In particular, it belongs to the class \(\cF_{\rm NN}(1,1,2,1,6,2a,a)\). 

For the study of generalization, it is convenient to work with a clipped version of the network class \eqref{eq:separableArchitecture}. This does not impose a significant restriction in our setting: indeed, the approximation results recalled above produce approximating networks with uniformly bounded outputs, so clipping at a sufficiently large level does not alter the constructions. Moreover, in practical implementations, network outputs are typically bounded as well, either explicitly or through the scale of the training data and targets.

\begin{mydef}[Clipped Multiple Operator Network Class] 
    Let $\cF_i$ for $1 \leq i \leq 3$ be network classes defined in Definition \ref{def:networkClass}. For $a,I > 0$, $P,H,N \in \bbN$ and fixed sampling points $\{y_s\}_{s=1}^{n_{c_W}} \subset \Omega_W$ and $\{c_s\}_{s=1}^{n_{c_U}}$, we define $ \mathrm{Cl}_a(I,\cF_1,\cF_2,\cF_3,\{y_s\},\{c_s\},P,H,N)$, the set of $a$-clipped multiple operator networks, as \begin{align}
    &\mathrm{Cl}_a(I,\cF_1,\cF_2,\cF_3,\{y_s\},\{c_s\},P,H,N) \notag \\
    &= \left\{ \operatorname{Clip}_a\l \sum_{p=1}^{P} \sum_{k=1}^H \sum_{\ell=1}^{N} \theta_{pk\ell} l_p(\bm{\alpha}) b_k(\mathbf{u}) \tau_\ell(x)\r \mid \theta_{pk\ell} \in [-I,I],\, \tau_\ell \in \cF_1,\, b_k \in \cF_2, \, l_p \in \cF_3  \right\} \notag
    \end{align}
    where $\bm{\alpha}=(\alpha(y_1), \alpha(y_2),...,\alpha(y_{n_{c_W}}))^\top$ and $\ub=(u(c_1), u(c_2),...,u(c_{n_{c_U}}))^\top$.
\end{mydef}

The following definition specifies the probabilistic framework used to model the training data. In the multiple operator setting, observations are generated through a hierarchical sampling procedure: one first draws parameter instances \(\alpha\), then draws input functions \(u\), and finally samples evaluation points \(x\) where noisy measurements of the output \(G[\alpha][u]\) are taken.

\begin{mydef}[Training Set] \label{def:trainingSet}
Let $G : W \mapsto \{G[\alpha] : U \to V\}$ be a multiple operator map. Let $\mu_\alpha$ be a probability measure on $W$, $\mu_u$ a probability measure on $U$, and $\mu_x$ a probability measure on $\Omega_V$. Given fixed sampling points $\{y_s\}_{s=1}^{n_{c_W}} \subset \Omega_W$ and $\{c_s\}_{s=1}^{n_{c_U}} \subset \Omega_U$, we define the training set:
\[
S_{G, \{y_s\}, \{c_s\}} =
\left\{
\boldsymbol{\alpha}_\ell,
\left\{
\boldsymbol{u}_{\ell i},
\left\{ (x_{\ell ij}, w_{\ell ij}) \right\}_{j=1}^{n_x}
\right\}_{i=1}^{n_u}
\right\}_{\ell = 1}^{n_\alpha}
\]
where
\begin{itemize}
    \item \( \alpha_\ell \iid \mu_\alpha \) and \( \boldsymbol{\alpha}_\ell = \big(\alpha_\ell(y_1), \dots, \alpha_\ell(y_{n_{c_W}})\big) \in \mathbb{R}^{n_{c_W}} \);
    \item \( u_{\ell i} \iid \mu_u \) and \( \boldsymbol{u}_{\ell i} = \big(u_{\ell i}(c_1), \dots, u_{\ell i}(c_{n_{c_U}})\big) \in \mathbb{R}^{n_{c_U}} \);
    \item \( x_{\ell ij} \iid \mu_x \) drawn from \( \Omega_V \);
    \item \( w_{\ell ij} = G[\alpha_\ell][u_{\ell i}](x_{\ell ij}) + \zeta_{\ell ij} \), where \( \zeta_{\ell ij} \) are i.i.d. sub-Gaussian noise variables with mean 0 and variance proxy \( \sigma^2 \). 
\end{itemize}
\end{mydef}

For a given training set \(S_{G,\{y_s\},\{c_s\}}\), we introduce the corresponding learned operator below. More precisely, this is a neural network selected from the class \(\mathrm{Cl}_a(I,\cF_1,\cF_2,\cF_3,\{y_s\},\{c_s\},P,H,N)\) by minimizing the empirical \(\Lp{2}\)-loss over the observed data.

\begin{mydef}[Trained Operator] 
    Let $\cF_i$ for $1 \leq i \leq 3$ be network classes defined in Definition \ref{def:networkClass}. Let $G : W \mapsto \{G[\alpha] : U \to V\}$ be a multiple operator map. Let $\mu_\alpha$ be a probability measure on $W$, $\mu_u$ a probability measure on $U$, and $\mu_x$ a probability measure on $\Omega_V$. Given fixed sampling points $\{y_s\}_{s=1}^{n_{c_W}} \subset \Omega_W$ and $\{c_s\}_{s=1}^{n_{c_U}} \subset \Omega_U$, let $S_{G, \{y_s\}, \{c_s\}}$ be the training set defined in Definition \ref{def:trainingSet}. For $a,I >0$, the trained $a$-clipped operator $G_{a,I,\cF_1,\cF_2,\cF_3,S}$ is defined as \[
    G_{a,I,\cF_1,\cF_2,\cF_3,S} = \argmin_{\nn \in \mathrm{Cl}_a(I, \cF_1,\cF_2,\cF_3,\{y_s\}, \{c_s\}, P,H,N)} \frac{1}{n_\alpha n_u n_x} \sum_{\ell=1}^{n_\alpha} \sum_{i=1}^{n_u} \sum_{j = 1}^{n_x} \l \nn[\bm{\alpha}_\ell][\bm{u}_{\ell i}](x_{\ell ij}) - w_{\ell i j}  \r^2.
    \]
\end{mydef}

Next, we define the expected generalization error associated with the learned operator. This quantity evaluates the performance of \(G_{a,I,\cF_1,\cF_2,\cF_3,S}\) after averaging both over the randomness of the training set \(S\) and over previously unseen test samples \((\alpha,u,x)\). In this way, it captures how well a model trained on one realization of the data transfers to new observations.

\begin{mydef}[Expected Generalization Error]
Let $\cF_i$ for $1 \leq i \leq 3$ be network classes defined in Definition \ref{def:networkClass}. Let $G : W \mapsto \{G[\alpha] : U \to V\}$ be a map. Let $\mu_\alpha$ be a probability measure on $W$, $\mu_u$ a probability measure on $U$, and $\mu_x$ a probability measure on $\Omega_V$. Let $\{y_s\}_{s=1}^{n_{c_W}} \subset \Omega_W$ and $\{c_s\}_{s=1}^{n_{c_U}} \subset \Omega_U$ be fixed sampling points. We define the expected generalization error as
\[
\mathbb{E}_{S_{G, \{y_s\}, \{c_s\}}} \underbrace{\mathbb{E}_{\alpha \sim \mu_\alpha} \, \mathbb{E}_{u \sim \mu_u} \, \mathbb{E}_{\{x_j\}_{j=1}^{n_x} \sim \mu_x^{\otimes n_x}}}_{\textnormal{test sampling}} \underbrace{\left[
\frac{1}{n_x} \sum_{j=1}^{n_x} \left( G_{a,I,\cF_1,\cF_2,\cF_3,S}[\bm{\alpha}][\ub](x_j) - G[\alpha][u](x_j) \right)^2
\right]}_{\textnormal{empirical approximation of the squared $\Lp{2}(\mu_x)$ error}},
\]
where \(\mathbb{E}_{S_{G, \{y_s\}, \{c_s\}}}\) denotes the expectation over the full training dataset \(S_{G, \{y_s\}, \{c_s\}}\). 

\end{mydef}

We now state the main scaling law for the expected generalization error \cite[Theorem 3.5 and Corollary 3.8]{weihs2026generalizationboundsstatisticalguarantees} for MNOs approximating multiple operator maps satisfying Assumptions \ref{assumption:Main:assumptions:O1} and \ref{assumption:Main:assumptions:O2}. To this end, let \(\varepsilon>0\) be a target approximation accuracy and consider the hypothesis class
\[
\mathrm{Cl}_a(I,\cF_1,\cF_2,\cF_3,\{y_s\},\{c_s\},P,H,N),
\]
where the architectural parameters of $\cF_i$ are chosen according to the \(\varepsilon\)-dependent scalings provided by the approximation theory (i.e. \cite[Theorem 3.16]{weihs2025MOL}), so that the class contains an \(\varepsilon\)-accurate approximation of \(G\). Also, let \(\eta>0\) be the covering radius used to estimate the size of this class through the metric entropy
\[
\log \mathcal N\bigl(\eta,\mathrm{Cl}_a(I,\cF_1,\cF_2,\cF_3,\{y_s\},\{c_s\},P,H,N),\|\cdot\|_{\Lp{\infty}(W\times U\times\Omega_V)}\bigr).
\]
Then, we obtain the following approximation--estimation tradeoff, relating the target accuracy, the architectural complexity, and the sampling budgets \((n_\alpha,n_u,n_x)\):
\begin{align}
    &\mathbb{E}_{S_{G, \{y_s\}, \{c_s\}}} \mathbb{E}_{\alpha \sim \mu_\alpha} \, \mathbb{E}_{u \sim \mu_u} \, \mathbb{E}_{\{x_j\}_{j=1}^{n_x} \sim \mu_x^{\otimes n_x}} \left[
\frac{1}{n_x} \sum_{j=1}^{n_x} \left( G_{a,I,\cF_1,\cF_2,\cF_3,S}[\bm{\alpha}][\ub](x_j) - G[\alpha][u](x_j) \right)^2
\right] \notag  \\
        &\leq 4 \eps^2 + \eta (8 \sigma + 6) \notag \\
    &+ \frac{8\sigma \eta}{\sqrt{n_\alpha n_u n_x}} \sqrt{\log \l \cN\l\eta,\mathrm{Cl}_a(I,\cF_1,\cF_2,\cF_3,\{y_s\},\{c_s\},P,H,N),\Vert \cdot \Vert_{\Lp{\infty}(W \times U \times \Omega_V)}\r \r + \log(2)} \notag \\
    &+ \frac{16\sigma^2}{n_\alpha n_u n_x} \l \log \l \cN\l\eta,\mathrm{Cl}_a(I,\cF_1,\cF_2,\cF_3,\{y_s\},\{c_s\},P,H,N),\Vert \cdot \Vert_{\Lp{\infty}(W \times U \times \Omega_V)}\r \r + \log(2) \r \notag \\
    &+ \frac{112 \beta_V^2}{3 n_\alpha} \log\l   \cN\l\eta/(4\beta_V),\mathrm{Cl}_a(I,\cF_1,\cF_2,\cF_3,\{y_s\},\{c_s\},P,H,N),\Vert \cdot \Vert_{\Lp{\infty}(W \times U \times \Omega_V)}\r \r \notag 
    \end{align}
where $\bm{\alpha}=(\alpha(y_1), \alpha(y_2),...,\alpha(y_{n_{c_W}}))^\top$ and $\ub=(u(c_1), u(c_2),...,u(c_{n_{c_U}}))^\top$. Estimating the metric entropy as a function of $\eps$, picking 
\[
\varepsilon \asymp 
    \left(
        \frac{\log\log n_\alpha}{\log\log\log n_\alpha}
    \right)^{-\frac{1}{d_W}} \qquad \text{and} \qquad \eta = 4\beta_V n_\alpha^{-1},
    \] 
the above reduces to the learning rate:
\begin{align}
        &\mathbb{E}_{S_{G, \{y_s\}, \{c_s\}}} \mathbb{E}_{\alpha \sim \mu_\alpha} \, \mathbb{E}_{u \sim \mu_u} \, \mathbb{E}_{\{x_j\}_{j=1}^{n_x} \sim \mu_x^{\otimes n_x}} \left[
\frac{1}{n_x} \sum_{j=1}^{n_x} \left( G_{a,I,\cF_1,\cF_2,\cF_3,S}[\bm{\alpha}][\ub](x_j) - G[\alpha][u](x_j) \right)^2
\right] \notag \\
&= \mathcal{O} \l \left(
        \frac{\log\log n_\alpha}{\log\log\log n_\alpha}
    \right)^{-\frac{2}{d_W}} \r. \notag
    \end{align}
In Section \ref{sec:improvedResults}, we show that our improved approximation-theoretic rates lead, in the same way, to a corresponding improvement in the learning rate, reducing it to the rate dictated by operator learning.

\subsection{Lower Complexity Bounds in Operator Learning} \label{sec:back:lowerBound}

We now recall the operator-learning lower-complexity framework from \cite{lanthalerStuart}, which will serve as the starting point for our lower-bound analysis in the multiple operator setting. 

We begin by introducing the notion of an infinite-dimensional cube \cite{lanthalerStuart,Donoho,DahlkeDeMariGrohsLabate2015}. These cubes provide a way to describe intrinsic geometric features of compact sets in infinite-dimensional spaces, and thereby furnish a quantitative measure of approximation-theoretic complexity. They also arise naturally in many function classes of interest: \cite[Lemma 2.7]{lanthalerStuart} establishes their existence for suitable compact classes of differentiable functions, while Lemma \ref{lem:boundedLipCube} proves the same for the bounded Lipschitz classes considered in this paper.

\begin{mydef}[Infinite-dimensional Cube] \label{def:cube}
Let $X$ be a Banach space and let $e_1,e_2,\dots \in X$ be a sequence of linearly independent elements so that
\(
\|e_j\|_X = 1
\)
for all $j\in\mathbb \bbN$.
Given constants $A>0$ and $\eta>1$, we say that a set $K\subset X$ contains an infinite-dimensional cube
\(
Q_\eta = Q_\eta\l A;\{e_j\}_{j \in \bbN} \r
\)
if the following two conditions hold:
\begin{enumerate}
\item $K$ contains every element of the form
\[
u = A\sum_{j=1}^\infty j^{-\eta} y_j e_j,
\qquad y_j\in[0,1]
\quad \text{for all } j\in\mathbb N.
\]

\item The family $\{e_j\}_{j\in\mathbb N}$ admits a bounded biorthogonal sequence in the dual space $X^*$, that is, there exist functionals
\(
e_1^*,e_2^*,\dots \in X^*
\)
such that
\[
e_k^*(e_j)=\delta_{jk}
\qquad \text{for all } j,k\in\mathbb N, \quad \text{and} \quad \|e_k^*\|_{X^*}\le M
\qquad \text{for all } k\in\mathbb N
\]
for some constant $M>0$.
\end{enumerate}
\end{mydef}

A cube as in Definition \ref{def:cube} may be viewed as an infinite-dimensional coordinate system inside \(K\), with coefficients \(y_j\in[0,1]\) that are damped by a decay rate \(j^{-\eta}\). The associated bounded biorthogonal sequence guarantees that these coefficients can be recovered from the embedded element, so that the cube genuinely carries a coordinate structure. The exponent \(\eta\) measures the asymptotic size of this embedded cube, and will directly control the lower-complexity bounds stated below.

The next step is to specify the class of neural approximants to which the lower-bound framework applies. This is captured by the notions of functionals and operators of neural network type, which combine a finite-dimensional linear encoding with a ReLU network. Many operator-learning architectures encountered in practice (e.g. DeepONet \cite{deepOnet}, PCANet \cite{Bhattacharya}) fall within this framework, in the sense that they can be interpreted as operators of neural network type after a suitable finite-dimensional discretization of the input. We begin by introducing functionals of neural network type. 

\begin{mydef}[Functional of Neural Network Type] \label{def:functionalType}
    We say that $\nn:W \to \bbR$ is a functional of neural network type if
\begin{equation} \label{eq:functionalType}
\nn[\alpha]
=
\Phi(M_W(\alpha)) \qquad \text{for all $\alpha \in W$} 
\end{equation}
where $M_W:W\to \mathbb{R}^m$ is linear, and $\Phi$ is a ReLU neural network. We define the complexity of $\nn$ as \[
\complexity(\nn) = \min_{\Phi \text{ satisfying \eqref{eq:functionalType}}} N_\#\l \Phi \r.
\]
\end{mydef}

We now extend this notion to operators. 

\begin{mydef}[Operator of Neural Network Type] \label{def:operatorType}
    We say that $\nn:U \to V$ is an operator of neural network type if for all $x\in \Omega_V$,
\begin{equation} \label{eq:operatorType}
\ev_x \circ \nn[u]
=
\Phi_x(M_U(u)) \qquad \text{for all $u \in U$} 
\end{equation}
where $M_U:U\to \mathbb{R}^q$ is linear, and $\Phi_x$ is a ReLU neural network. We define the complexity of $\nn$ as \[
\complexity(\nn) = \sup_{x \in \Omega_V} \min_{\Phi_x \text{ satisfying \eqref{eq:operatorType}}} N_\#\l \Phi_x \r.
\]
\end{mydef}

Equipped with the above geometric and representational notions, \cite[Corollary 2.12]{lanthalerStuart} establishes a curse of parametric complexity for operator learning. More precisely, for every \(r\in\mathbb N\), there exists an \(r\)-times Fréchet differentiable pathological operator \(G\) defined on any compact set containing an infinite-dimensional cube such that, for every operator of neural network type \(\nn_\varepsilon\) satisfying
\[
\sup_{u\in K}\|G[u]-\nn_\varepsilon[u]\|\le \varepsilon,
\]
the complexity of \(\nn_\varepsilon\) must obey a lower bound of the form
\[
\complexity(\nn_\varepsilon)\ge \exp\l c\,\varepsilon^{-q} \r
\]
for suitable constants \(c,q>0\). In Section \ref{sec:lowerBounds}, our objectives are twofold: first, we prove an analogous result for multiple operator learning with the architecture \eqref{eq:separableArchitecture}; second, we deduce minimax approximation-complexity rates for the class of Lipschitz multiple operator maps.

\section{Main Results} \label{sec:main}

In this section, we present the main results of the paper (proofs are given in the \nameref{sec:proofs}). We begin by deriving near-optimal scaling laws for both approximation and generalization. We then establish a lower complexity bound for approximation, which in turn yields minimax complexity rates for approximating Lipschitz multiple operator maps with MNO. Finally, we compare these minimax approximation-complexity rates with those of the simpler concatenated DeepONet architecture, which is directly inspired by single-operator learning.

\subsection{Near-Optimal Approximation Rates} \label{sec:improvedResults}

The first result of this section provides the near-optimal upper bound underlying our minimax rates in Theorem \ref{thm:minimaxMNO}.
The key point is that the new construction in Theorem \ref{thm:main:improvedRates} reduces the upper bound on approximation complexity in multiple operator learning to the scale of single operator learning. As explained in Section \ref{sec:intro}, in other words, the multiple operator structure no longer causes an additional exponential deterioration in the number of parameters required to achieve a prescribed accuracy $0<\varepsilon\ll 1$. The stronger bounds come from a different organization of the error analysis. Table \ref{tab:error-propagation-nested-parallel} schematically compares the nested proof strategy of \cite{weihs2025MOL} with the new one. In both constructions, the first stage is identical: one approximates the dependence of the target multiple operator map on the variable \(\alpha\), which produces a representation involving on the order of \(\varepsilon^{-\gamma}\) terms. The distinction arises only at the second stage, in how the resulting approximation error is aggregated. More precisely, in the nested argument, the second-stage error accumulates term by term over the first-stage representation. Hence, if the second-stage approximation error for each term is bounded by $\delta$, the total contribution is controlled by
\[
\sum_{p=1}^{\varepsilon^{-\gamma}} \delta
\;\asymp\;
\varepsilon^{-\gamma}\delta.
\]
To ensure that this remains of order \(\varepsilon\), one must therefore choose
\[
\varepsilon^{-\gamma}\delta \asymp \varepsilon,
\qquad\text{that is,}\qquad
\delta \asymp \varepsilon^{\gamma+1}.
\]
Thus the second-stage approximation must be carried out at a much smaller accuracy than the final target, which is precisely the source of the complexity inflation.

In the present construction, the second-stage error instead appears with coefficients \(\theta_p\) and takes the form
\[
\sum_{p=1}^{\varepsilon^{-\gamma}} \theta_p\,\delta
=
\delta \sum_{p=1}^{\varepsilon^{-\gamma}} \theta_p.
\]
The key point is that the coefficients are constructed so as to form a partition of unity, that is
\(
\sum_{p=1}^{\varepsilon^{-\gamma}} \theta_p \approx 1
\)
with a controllable error. Hence the total second-stage contribution is simply of order
\(
\delta,
\)
rather than \(\varepsilon^{-\gamma}\delta\). To keep the global error of order \(\varepsilon\), it is therefore enough to choose
\[
\delta \asymp \varepsilon.
\]
This implies that the second-stage approximation no longer suffers an additional amplification through the number of first-stage terms. This is the key mechanism behind the improved rates proved below.

The terms \emph{nested} and \emph{parallel} describe the interaction between the two approximation stages. In the nested case, the Stage~1 representation propagates into the Stage~2 error estimate and forces a smaller Stage~2 tolerance. In the parallel case, the Stage~2 error estimate is decoupled from the number of Stage~1 terms, so its required accuracy remains at the scale of the final target error.

\begin{table}[H]
\centering
\setlength{\tabcolsep}{5pt}
\renewcommand{\arraystretch}{1.45}
\begin{tabular}{p{0.28\textwidth} | p{0.24\textwidth} | p{0.4\textwidth}}
\midrule
 & \textbf{Nested strategy} & \textbf{Parallel strategy} \\
\midrule
\textit{Stage 1 output} 
& $\varepsilon^{-\gamma}$ terms 
& $\varepsilon^{-\gamma}$ terms with coefficients $\theta_p$ \\
\textit{Stage 2 error contribution} 
& $\displaystyle \sum_{p=1}^{\varepsilon^{-\gamma}} \delta$
& $\displaystyle \sum_{p=1}^{\varepsilon^{-\gamma}} \theta_p\,\delta
= \delta \sum_{p=1}^{\varepsilon^{-\gamma}} \theta_p$ \\
\textit{Coefficient structure} 
& no partition of unity
& approximate partition of unity,
$\displaystyle \sum_{p=1}^{\varepsilon^{-\gamma}} \theta_p \approx 1$ \\
\textit{Total Stage 2 error} 
& $\varepsilon^{-\gamma}\delta$
& $\delta$ \\
\textit{Condition for global error $\asymp \varepsilon$} 
& $\varepsilon^{-\gamma}\delta \asymp \varepsilon$
& $\delta \asymp \varepsilon$ \\
\textit{Required Stage 2 accuracy} 
& $\delta \asymp \varepsilon^{\gamma+1}$
& $\delta \asymp \varepsilon$ \\
\textit{Effect on complexity} 
& amplification of Stage 2
& no amplification through the number of terms \\
\midrule
\end{tabular}
\caption{Schematic comparison of how the second-stage approximation error enters in the nested construction of \cite{weihs2025MOL} and in the parallel construction introduced in Theorem \ref{thm:main:improvedRates}. In the nested case, the Stage~2 error accumulates over \(\varepsilon^{-\gamma}\) terms, forcing the per-term accuracy to be of order \(\varepsilon^{\gamma+1}\) in order to maintain a global error of order \(\varepsilon\). In the parallel case, the coefficients form an approximate partition of unity, so that the Stage~2 error factors as \(\delta \sum_p \theta_p \approx \delta\), and it therefore suffices to choose \(\delta \asymp \varepsilon\).}
\label{tab:error-propagation-nested-parallel}
\end{table}

\begin{theorem}[Scaling Laws for Multiple Operator Learning with the General Separable Architecture] \label{thm:main:improvedRates}
Consider integers $d_W,d_U,d_V>0$,  \[
\gamma_W, \gamma_U, \gamma_V, \beta_W, \beta_U,\beta_V,L_W,L_U,L_V,L_G,L_{\mathcal{G}} > 0 \qquad \text{and} \qquad r_G, r_{\mathcal{G}} \geq 1\]
and assume that $W(d_W,\gamma_W,L_W,\beta_W)$,
 $U(d_U,\gamma_U,L_U,\beta_U)$ and $V(d_V,\gamma_V,L_V,\beta_V)$ satisfy Assumption \ref{assumption:Main:assumptions:S4}. 
Let $G$ be a map satisfying Assumptions \ref{assumption:Main:assumptions:O1} and \ref{assumption:Main:assumptions:O2}.
There exist constants \begin{itemize}
    \item $C$ depending on $\gamma_V$, $L_V$
    \item $C'$ depending on $\beta_U$, $L_\mathcal{G}$, $\gamma_U$, $r_\mathcal{G}$
    \item $C_U$ depending on $L_\mathcal{G}$, $\gamma_U$, $r_\mathcal{G}$, $L_U$
    \item $C''$ depending on $\beta_W$, $L_G$, $\gamma_W$, $r_G$
    \item $C_{\zeta}$ depending on $L_G$, $\gamma_W$, $r_G$, $L_W$
\end{itemize}
such that the following holds. For $\eps > 0$ sufficiently small, \begin{itemize}
    \item let $N := 4C\sqrt{d_V}\eps^{-1}$ and consider the network class $\cF_1 := \cF_{\rm NN}(d_V,1,L_1,p_1,K_1,\kappa_1,R_1)$ whose parameters scale as \begin{align*}
&L_1 = \mathcal{O}\left(d_V^2\log d_V+d_V^2(\log(\varepsilon^{-1})+2\log(2))\right),\qquad p_1 = \mathcal{O}(1),\\
&K_1 = \mathcal{O}\left(d_V^2\log d_V+d_V^2(\log(\varepsilon^{-1})+2\log(2))\right),\qquad \kappa_1=\mathcal{O}(d_V^{d_V/2+1}\varepsilon^{-(d_V+1)}2^{2(d_V+1)}),\\
&R_1=1
    \end{align*}
    where the constants hidden in $\mathcal{O}$ depend on $\gamma_V$ and $L_V$;
    \item let $\delta_U=C_{U}\varepsilon/2$ and let $\{c_i\}_{i=1}^{n_{c_U}}\subset \Omega_U$ be points so that $\{\mathcal{B}_{\delta_U}(c_i) \}_{ i  = 1}^{n_{c_U}}$ is a cover of $\Omega_U$ for some $n_{c_U}$;
        \item let $H := 8C' \sqrt{n_{c_U}} \eps^{-1}$ and consider the network class $\cF_2 := \cF_{\rm NN}(n_{c_U}, 1 , L_2, p_2, K_2, \kappa_2, R_2)$ 
     with parameters scaling as
     \begin{align*}
    &L_2=\mathcal{O}\left(n_{c_U}^2\log(n_{c_U})+( n_{c_U}^2[\log(\varepsilon^{-1}) + 3\log(2)]\right), \qquad p_2 = \mathcal{O}(1),\\
    &K_2 = \mathcal{O}\left( n_{c_U}^2\log( n_{c_U})+ n_{c_U}^2[\log(\varepsilon^{-1}) + 3\log(2)]\right), \\ 
    &\kappa_2=\mathcal{O}( n_{c_U}^{n_{c_U}/2+1}\varepsilon^{-(n_{c_U} +1)} 2^{3(n_{c_U} +1)}),\qquad R_2=1
\end{align*}
    where the constants hidden in $\mathcal{O}$ depend on $\beta_U$, $L_\mathcal{G}$, $\gamma_U$, $r_\mathcal{G}$;
    \item let $\zeta:=C_{\zeta}\varepsilon/2$ and let $\{y_m\}_{m=1}^{n_{c_W}}\subset \Omega_W$ be points so that $\{\mathcal{B}_{\zeta}(y_m) \}_{ m = 1}^{n_{c_W}}$ is a cover of $\Omega_W$ for some $n_{c_W}$
    \item let $P = 2C'' \sqrt{n_{c_W}} \eps^{-1}$ and consider the network class $\cF_3 = \cF_{\rm NN}(n_{c_W},1,L_3,p_3,K_3,\kappa_3,R_3)$ whose parameters scale as \begin{align*}
&L_3=\mathcal{O}\left(n_{c_W}^2\log(n_{c_W})+n_{c_W}^2(\log(\varepsilon^{-1}) + 2\log(2)) \right),\quad  p_3 = \mathcal{O}(1),\\
    &K_3 = \mathcal{O}\left(n_{c_W}^2\log (n_{c_W})+n_{c_W}^2(\log(\varepsilon^{-1}) + 2\log(2))\right), \\ &\kappa_3=\mathcal{O}(n_{c_W}^{n_{c_W}/2+1}2^{2(n_{c_W}+1)}\varepsilon^{-n_{c_W}-1}),\qquad \, R_3=1
    \end{align*}
    where the constants hidden in $\mathcal{O}$ depend on $\beta_W$, $L_G$, $\gamma_W$, $r_G$.
\end{itemize}
Then, there exists:
\begin{itemize}
    \item networks $\{\tau_\ell\}_{k=1}^{N^{d_V}} \subset \cF_1$, $\{b_k\}_{k=1}^{H^{n_{c_U}}} \subset  \cF_2$ and $\{l_p\}_{p=1}^{P^{n_{c_W}}} \subset \cF_3$
    \item functions $\{u_k\}_{k=1}^{H^{n_{c_U}}} \subset \cB_{\beta_U,\Vert \cdot \Vert_{\Lp{\infty}},\Omega_U}(0)$ and $\{\alpha_p\}_{p=1}^{P^{n_{c_W}}} \subset \cB_{\beta_W,\Vert \cdot \Vert_{\Lp{\infty}},\Omega_W}(0)$
    \item points $\{v_\ell\}_{\ell=1}^{N^{d_V}} \subset \Omega_V$
\end{itemize}
such that 
\begin{equation} \label{thm:main:improvedRates:approx}
    \sup_{\alpha \in W} \sup_{u \in U} \sup_{x \in \Omega_V} \left\vert G[\alpha][u](x) - \sum_{p=1}^{P^{n_{c_W}}} \sum_{k=1}^{H^{n_{c_U}}} \sum_{\ell=1}^{N^{d_V}} G[\alpha_p][u_k](v_\ell) l_p(\bm{\alpha}) b_k(\bm{u}) \tau_\ell(x) \right\vert \leq \eps
\end{equation}
where $\bm{u}=(u(c_1), u(c_2),\dots,u(c_{n_{c_U}}))^\top$ and $\bm{\alpha} = (\alpha(y_1),\dots,\alpha(y_{n_{c_W}}))^\top$.

\end{theorem}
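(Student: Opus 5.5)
The approach is a three-level partition-of-unity interpolation carried out in \emph{parallel}, as described in Table \ref{tab:error-propagation-nested-parallel}. It rests on a single building block: the standard ReLU approximation of a tensor-product hat-function partition of unity on a cube (as in \cite{liu2024neuralscalinglawsdeep} and \cite[Theorem 3.16]{weihs2025MOL}).

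For a box $[-a,a]^D$ with grid size $a/N$, there are $N^D$ nonnegative piecewise-linear functions $\phi_j$ supported on neighbourhoods of grid points $z_j$, with $\sum_j \phi_j\equiv 1$. Each $\phi_j$ is a product of $D$ univariate hats, so it is realized up to error $\epsilon'$ by a ReLU network of depth and size $\mathcal O(D^2\log D + D^2\log(1/\epsilon'))$, width $\mathcal O(1)$, and weights $\mathcal O(D^{D/2+1}\epsilon'^{-(D+1)})$, via the product network. Taking $D=d_V$, $D=n_{c_U}$ (on the box $[-\beta_U,\beta_U]^{n_{c_U}}$ containing $\bm u$), and $D=n_{c_W}$, with $\epsilon'$ of order $\varepsilon$ times a constant power of $2$, gives exactly the classes $\cF_1,\cF_2,\cF_3$. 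The tensor-product choice yields the counts $N^{d_V}$, $H^{n_{c_U}}$, and $P^{n_{c_W}}$.

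The grid nodes must be turned into functions. I take $v_\ell$ to be the grid points in $\Omega_V$. I take $u_k$ to be a Lipschitz reconstruction with values $z_k$ at the $c_i$, for instance a piecewise-linear interpolant built from a partition of unity subordinate to the cover $\{\mathcal B_{\delta_U}(c_i)\}$ and clipped to $[-\beta_U,\beta_U]$. I take $\alpha_p$ to be the analogous reconstruction from the $y_m$.

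The key discretization lemma states that if $u$ and $\tilde u$ lie in $U$ (or in the Lipschitz ball) and satisfy $|u(c_i)-\tilde u(c_i)|\le h$ for all $i$, then $\|u-\tilde u\|_{\Lp{r_\mathcal G}}\lesssim h+L_U\delta_U$. The covering radius $\delta_U=C_U\varepsilon/2$ makes the second term $\mathcal O(\varepsilon)$, and the same holds for $W$ with $\zeta$. Combined with \ref{assumption:Main:assumptions:O1} and \ref{assumption:Main:assumptions:O2}, this gives
\[
|G[\alpha][u](x)-G[\alpha_p][u_k](v_\ell)|\le L_G\|\alpha-\alpha_p\|_{\Lp{r_G}}+L_\mathcal G\|u-u_k\|_{\Lp{r_\mathcal G}}+L_V|x-v_\ell|
\]
whenever $\bm\alpha$, $\bm u$, and $x$ lie in the supports of the exact bumps $\phi^W_p$, $\phi^U_k$, and $\phi^V_\ell$. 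Note that $G[\alpha][u]\in V$ is $L_V$-Lipschitz.

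The error is then split in parallel. First consider the exact partitions of unity. Since $\sum_{p,k,\ell}\phi^W_p\phi^U_k\phi^V_\ell=1$ and all terms are nonnegative,
\[
\Bigl|G[\alpha][u](x)-\sum_{p,k,\ell}G[\alpha_p][u_k](v_\ell)\phi^W_p(\bm\alpha)\phi^U_k(\bm u)\phi^V_\ell(x)\Bigr|
\]
is bounded by the maximum local oscillation above. This is $\mathcal O(\varepsilon)$ once the grid spacings are $\sim\varepsilon/\sqrt{n_c}$, which is the origin of the choices $N$, $H$, and $P$. Next replace each $\phi$ by its network ($\tau_\ell$, $b_k$, $l_p$) one factor at a time via a telescoping sum. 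Because $|G|\le\beta_V$, replacing for example $\phi^U_k$ by $b_k$ costs at most
\[
\beta_V\sum_{p,\ell}\phi^W_p\phi^V_\ell\sum_k|\phi^U_k-b_k|.
\]
The outer sums equal $1$, so no factor $P^{n_{c_W}}$ appears. The inner sum involves only the boundedly many $k$ whose support meets $\bm u$, at most $2^{n_{c_U}}$ of them, plus the errors of networks that are nearly zero. This is why the accuracy of each network is $\varepsilon 2^{-c\,n_c}$, matching the $2^{2(\cdot)}$ and $2^{3(\cdot)}$ factors in $\kappa_i$ and the $\log 2$ terms in $L_i$.

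The main obstacle is this last step. The sums $\sum_j|\phi_j-\tilde\phi_j|$ must stay uniformly $\mathcal O(\varepsilon)$ over all $N^D$ terms, not just over the local ones. I would handle this by constructing each approximate bump as an exact product-network output that vanishes identically outside the support of $\phi_j$, using ReLU hats composed with an approximate multiplication that preserves zero. Then only $2^D$ terms are nonzero at any point. When the networks, rather than the exact bumps, are used in the later telescoping steps, the partial sums $\sum_p l_p$ are bounded by $1+\mathcal O(\varepsilon)$ instead of equal to $1$, which leaves the argument intact.
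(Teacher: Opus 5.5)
Your argument is sound in outline, but it takes a different route from the paper.

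\textbf{How the two routes differ.} You build one three-fold tensor-product partition of unity from scratch. You bound the error of the exact interpolant $\sum_{p,k,\ell}G[\alpha_p][u_k](v_\ell)\phi^W_p\phi^U_k\phi^V_\ell$ by the local oscillation of $G$. You then pay for replacing each bump by its network through a telescoping sum, which forces you to control $\sum_j|\phi_j-\tilde\phi_j|$ and so to open up the network construction.

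The paper never opens the building blocks. It first applies the functional approximation in $\alpha$ uniformly over the family $\{G[\cdot][u](x)\}_{u,x}$ (Proposition~\ref{thm:back:functionalApproximationLinftyProductSpace} with Remark~\ref{rem:uniformProduct}). This gives $|G[\alpha][u](x)-\sum_p G[\alpha_p][u](x)\,l_p(\bm{\alpha})|\le\varepsilon_0$ with $0\le l_p\le 1$. It then applies Proposition~\ref{prop:multipleProductSpace} with Remark~\ref{rem:productOperator} to the finite family $\{G[\alpha_p]\}_p$, which bounds the total error by $\varepsilon_0+\varepsilon_1\sum_p l_p(\bm{\alpha})$.

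The aggregation is closed by a trick you do not need. The $\alpha$-approximation is linear in the sampled values and uniform over the family, so it also applies to the constant functional $f\equiv\eta$, giving $\sum_p l_p\le 1+\varepsilon_0/\eta$. Only nonnegativity of the $l_p$ is used, never their closeness to exact bumps.

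The paper's route buys modularity: the classes $\mathcal{F}_i$, including the $2^{2(\cdot)}$, $2^{3(\cdot)}$ and $\log 2$ terms, fall out of the successive halvings $\varepsilon\to\varepsilon/2$ passed through the black boxes. Your route is self-contained and symmetric in the three variables, but you must re-derive every constant.

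\textbf{What needs repair in that re-derivation.}

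First, the power-of-two factors do not come from a per-network accuracy $\varepsilon 2^{-c n_c}$. Inserting $\epsilon'=\varepsilon 2^{-D}$ into the building-block cost you state would give depth of order $D^3$ and $\kappa$ of order $2^{cD(D+1)}$, both outside the stated classes. Instead, argue directly that a Yarotsky-type product of $D$ hats reaches accuracy $\epsilon'$ with depth of order $D\log(D/\epsilon')$ and weights that do not grow with $1/\epsilon'$. Then even the crude per-bump accuracy $\epsilon'=\varepsilon N^{-D}$ fits within $\mathcal{O}(D^2\log D+D^2\log\varepsilon^{-1})$. It also yields $\sum_j|\phi_j-\tilde\phi_j|\le\varepsilon$ over all $N^D$ terms without any locality argument, so your ``main obstacle'' goes away.

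Second, your reconstructions $u_k=\sum_i z_{k,i}\omega_i$ are not in $U$: their Lipschitz constant can be of order $\beta_U/\delta_U$. So your discretization lemma as stated does not apply to them. Use linearity of the reconstruction instead: $\|u-u_k\|_{\mathrm{L}^\infty}\le L_U\delta_U+\|\bm{u}-z_k\|_\infty$ for $u\in U$. This needs only $\|u_k\|_{\mathrm{L}^\infty}\le\beta_U$ for $G[\alpha_p][u_k]$ to be defined. The same applies to the $\alpha_p$.
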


\begin{remark}[Total Number of Parameters for Multiple Operator Learning with the General Separable Architecture] \label{rem:countMNO}
    Let $\nn_\eps$ be a network that satisfies \eqref{thm:main:improvedRates:approx}. We want to estimate $\Vert \Theta \Vert_0 + K_1 N^{d_V} + K_2 H^{n_{c_U}} + K_3 P^{n_{c_W}} \geq N_\#(\nn_\eps)$ where $\Theta_0 = \{\theta_{pk\ell}\}_{p=1, k=1,\ell=1}^{P^{n_{c_W}},H^{n_{c_U}},N^{d_V}}$.

    First, we note that the computation in Remark \ref{rem:count} applies to our case (with $n_{c_W} = 0$ and $d_U = 0$ since we considered $W \times \{\emptyset\} \cong W$) to compute $K_1 N^{d_V} + K_2 H^{n_{c_U}}$. Indeed, the scalings of $\cF_1$ and $\cF_2$ only differ by constants between Theorem \ref{thm:main:improvedRates} and Proposition \ref{prop:multipleProductSpace}.
    Specifically, we therefore have \begin{equation} \label{eq:countMNO:eq1}
        H^{n_{c_U}} \lesssim \eps^{-d \eps^{-d_U}} \qquad \text{and} \qquad K_1 N^{d_V} + K_2 H^{n_{c_U}} \lesssim \eps^{-d \eps^{-d_U}}
    \end{equation}
    for some $d > 0$ depending on $d_U$.
    
    Next, by \cite[Lemma 2]{liu2024neuralscalinglawsdeep}, we have that \(
    n_{c_W} \lesssim \zeta^{-d_W} \lesssim \eps^{-d_W}.
    \)
    We estimate
    \begin{align}
        K_3 P^{n_{c_W}} &\lesssim \l \eps^{-2d_W} \log(\eps^{-1}) \r \l \eps^{-d_W/2+1} \r^{\eps^{-d_W}} \lesssim \eps^{-d' \eps^{-d_W}} \label{eq:rem:countMNO:eq2}
    \end{align}
    where $d'>0$ depends on $d_W$. Therefore, using \eqref{eq:countMNO:eq1} and \eqref{eq:rem:countMNO:eq2}, we have \begin{align}
        \Vert \Theta \Vert_0 &\leq N^{d_V} H^{n_{c_U}} P^{n_{c_W}} \lesssim  \eps^{-d' \eps^{-d_W}} \eps^{-d \eps^{-d_U}} \lesssim \eps^{-d'' \eps^{-\max\{d_W,d_U\}}} \label{eq:rem:countMNO:eq3}.
    \end{align}
    for some $d''$ depending on $d_U$ and $d_W$.
    Combining \eqref{eq:countMNO:eq1}, \eqref{eq:rem:countMNO:eq2} and \eqref{eq:rem:countMNO:eq3}, we obtain:
    \[
    \Vert \Theta \Vert_0 + K_1 N^{d_V} + K_2 H^{n_{c_U}} + K_3 P^{n_{c_W}} \lesssim \eps^{-d'' \eps^{-\max\{d_W,d_U\}}}.
    \]
    Repeating the computation in \cite[Remark 3.14]{weihs2025MOL}, we can also deduce that the approximation error $\eps$ scales as follows:
    \[
    \eps \lesssim \l \frac{\log N_\#}{\log \log N_\#} \r^{-1/\max\{d_W,d_U\}}
    \]
\end{remark}

\setlength{\tabcolsep}{5pt}
\renewcommand{\arraystretch}{1.35}

\begin{table}[H]
\centering
\small
\begin{tabularx}{\linewidth}{
>{\bfseries}l
*{2}{>{\centering\arraybackslash}X}
*{2}{>{\centering\arraybackslash}X}
*{2}{>{\centering\arraybackslash}X}
*{2}{>{\centering\arraybackslash}X}
*{2}{>{\centering\arraybackslash}X}
}
\toprule
& \multicolumn{2}{c}{\# networks}
& \multicolumn{2}{c}{width}
& \multicolumn{2}{c}{depth}
& \multicolumn{2}{c}{sparsity}
& \multicolumn{2}{c}{parameter magnitude} \\
\cmidrule(lr){2-3}\cmidrule(lr){4-5}\cmidrule(lr){6-7}\cmidrule(lr){8-9}\cmidrule(lr){10-11}
& Existing & \textcolor{ETHBlue}{Proposed}
& Existing & \textcolor{ETHBlue}{Proposed}
& Existing & \textcolor{ETHBlue}{Proposed}
& Existing & \textcolor{ETHBlue}{Proposed}
& Existing & \textcolor{ETHBlue}{Proposed} \\
\midrule
$l_p$
& $\eps^{-\eps^{-d_W}}$
& \textcolor{ETHBlue}{$\eps^{-\eps^{-d_W}}$}
& $\mathcal{O}(1)$
& \textcolor{ETHBlue}{$\mathcal{O}(1)$}
& $\eps^{-d_W}$
& \textcolor{ETHBlue}{$\eps^{-2d_W}$}
& $\eps^{-d_W}$
& \textcolor{ETHBlue}{$\eps^{-2d_W}$}
& $\eps^{-\eps^{-d_W}}$
& \textcolor{ETHBlue}{$\eps^{-\eps^{-d_W}}$}
\\
$b_k$
& $\eps^{-\eps^{-\eps^{-d_W}}}$
& \textcolor{ETHBlue}{$\eps^{-\eps^{-d_U}}$}
& $\mathcal{O}(1)$
& \textcolor{ETHBlue}{$\mathcal{O}(1)$}
& $\eps^{-\eps^{-d_W}}$
& \textcolor{ETHBlue}{$\eps^{-2d_U}$}
& $\eps^{-\eps^{-d_W}}$
& \textcolor{ETHBlue}{$\eps^{-2d_U}$}
& $\eps^{-\eps^{-\eps^{-d_W}}}$
& \textcolor{ETHBlue}{$\eps^{-\eps^{-d_U}}$}
\\
$\tau_\ell$
& $\eps^{-\eps^{-d_W}}$
& \textcolor{ETHBlue}{$\eps^{-d_V}$}
& $\mathcal{O}(1)$
& \textcolor{ETHBlue}{$\mathcal{O}(1)$}
& $\eps^{-d_W}$
& \textcolor{ETHBlue}{$\log(\eps^{-1})$}
& $\eps^{-d_W}$
& \textcolor{ETHBlue}{$\log(\eps^{-1})$}
& $\eps^{-\eps^{-d_W}}$
& \textcolor{ETHBlue}{$\eps^{-(1+d_V)}$}
\\
\bottomrule
\end{tabularx}
\caption{Comparison of the existing MNO constructive scalings from \cite{weihs2025MOL} and the proposed scalings derived in this work. Constants, lower-order and poly-logarithmic terms are suppressed; entries in the columns labelled \textcolor{ETHBlue}{Proposed} correspond to the present work.} 
\label{tab:mno-old-new-subcolumns}
\end{table}

\begin{remark}[Distribution of Complexity across Subnetworks]
We recall that the subnetworks \(l_p\), \(b_k\), and \(\tau_\ell\) in \eqref{eq:separableArchitecture} encode, respectively, the infinite-dimensional dependence on the parameter variable \(\alpha\), the infinite-dimensional dependence on the input function \(u\), and the remaining finite-dimensional dependence on the output variable \(x\). Beyond the improvement in total parameter scaling (see Remark \ref{rem:countMNO}), Table \ref{tab:mno-old-new-subcolumns} also reveals a markedly different distribution of complexity across the subnetworks.

In the existing construction of \cite{weihs2025MOL}, the complexity is highly unbalanced: the subnetworks \(l_p\) and \(\tau_\ell\) have comparable scalings, while the subnetworks \(b_k\) are dramatically more complex and therefore carry the main burden of the approximation. In this sense, the architecture is used in a strongly hierarchical manner, with the subnetworks appearing in the second-stage approximation, namely \(b_k\) and \(\tau_\ell\), absorbing most of the complexity. We also note that the space-approximation subnetworks \(\tau_\ell\) have the same complexity scaling as the subnetworks \(l_p\). This appears intuitively suboptimal, since one would expect the finite-dimensional spatial approximation step to be substantially simpler than the infinite-dimensional approximation of functional dependence.

By contrast, the proposed construction leads to a much more balanced allocation of complexity. The subnetworks \(l_p\) and \(b_k\) now have comparable scalings (both of the same general order as the previous \(l_p\) and \(\tau_\ell\)) while the subnetworks \(\tau_\ell\) become significantly simpler. Thus, the improved rates are also reflected in the internal organization of the architecture: rather than concentrating complexity in a single dominant block, the new proof distributes it more evenly across the infinite- and finite-dimensional approximation components.
\end{remark}

The next result yields substantially improved generalization scaling laws. Its proof is driven by the rates established in Theorem \ref{thm:main:improvedRates}: once the estimate \eqref{eq:generalization} is established, the remaining step is to control the metric entropy of the associated hypothesis class. Since this metric entropy is governed by the architectural scalings of the network classes, the improvement in approximation complexity propagates directly to the statistical error bound. In particular, because the new approximation theory removes the additional parametric curse specific to multiple operator learning, the resulting learning rate is likewise reduced to the scale dictated by operator learning.

\begin{theorem}[Scaling Laws for the Expected Generalization Error] \label{thm:scalingLawsGeneralizationError}
Let $d_W,d_U,d_V>0$ be integers, \[
\gamma_W, \gamma_U, \gamma_V, \beta_W, \beta_U,\beta_V,L_W,L_U,L_V,L_G,L_{\mathcal{G}} > 0 \qquad \text{and} \qquad r_G, r_{\mathcal{G}} \geq 1\]
and assume that $W(d_W,\gamma_W,L_W,\beta_W)$,
 $U(d_U,\gamma_U,L_U,\beta_U)$ and $V(d_V,\gamma_V,L_V,\beta_V)$ satisfy Assumption \ref{assumption:Main:assumptions:S4}.
Let $G$ be a map satisfying Assumptions \ref{assumption:Main:assumptions:O1} and \ref{assumption:Main:assumptions:O2}.

There exist constants \(C\), \(C_{\delta}\), \(C'\), \(C_{\zeta}\), and \(C''\), depending on the same quantities as in Theorem \ref{thm:main:improvedRates}, such that the following holds.
For any $\varepsilon>0$, use the latter constants to define \(N\), \(\delta\), \(H\), \(\zeta\), \(P\), the network classes \(\cF_1,\cF_2,\cF_3\), and the sampling points \(\{c_m\}_{m=1}^{n_{c_U}},\), \(\{y_m\}_{m=1}^{n_{c_W}} \)
as in Theorem \ref{thm:main:improvedRates}, with \(\varepsilon\) replaced everywhere by \(\varepsilon/2\).

Let $a = \beta_V$, $I \geq \beta_V$, $n_\alpha,n_u,n_x \in \bbN$,  $\mu_\alpha$ a probability measure on $W$, $\mu_u$ a probability measure on $U$, and $\mu_x$ a probability measure on $\Omega_V$. Consider the clipped network class \[
\mathrm{Cl}_a(I,\cF_1,\cF_2,\cF_3,\{y_s\},\{c_s\},P^{n_{c_W}},H^{n_{c_U}},N^{d_V}).
\] 

For $\eta > 0$, the expected generalization error is bounded as follows: \begin{align}
    &\mathbb{E}_{S_{G, \{y_s\}, \{c_s\}}} \mathbb{E}_{\alpha \sim \mu_\alpha} \, \mathbb{E}_{u \sim \mu_u} \, \mathbb{E}_{\{x_j\}_{j=1}^{n_x} \sim \mu_x^{\otimes n_x}} \left[
\frac{1}{n_x} \sum_{j=1}^{n_x} \left( G_{a,I,\cF_1,\cF_2,\cF_3,S}[\bm{\alpha}][\ub](x_j) - G[\alpha][u](x_j) \right)^2
\right] \notag  \\
        &\leq 4 \eps^2 + \eta (8 \sigma + 6) \notag \\
    &+ \frac{8\sigma \eta}{\sqrt{n_\alpha n_u n_x}} \sqrt{\log \l \cN\l\eta,\mathrm{Cl}_a(I,\cF_1,\cF_2,\cF_3,\{y_s\},\{c_s\},P^{n_{c_W}},H^{n_{c_U}},N^{d_V}),\Vert \cdot \Vert_{\Lp{\infty}(W \times U \times \Omega_V)}\r \r + \log(2)} \notag \\
    &+ \frac{16\sigma^2}{n_\alpha n_u n_x} \l \log \l \cN\l\eta,\mathrm{Cl}_a(I,\cF_1,\cF_2,\cF_3,\{y_s\},\{c_s\},P^{n_{c_W}},H^{n_{c_U}},N^{d_V}),\Vert \cdot \Vert_{\Lp{\infty}(W \times U \times \Omega_V)}\r \r + \log(2) \r \notag \\
    &+ \frac{112 \beta_V^2}{3 n_\alpha} \log\l   \cN\l\eta/(4\beta_V),\mathrm{Cl}_a(I,\cF_1,\cF_2,\cF_3,\{y_s\},\{c_s\},P^{n_{c_W}},H^{n_{c_U}},N^{d_V}),\Vert \cdot \Vert_{\Lp{\infty}(W \times U \times \Omega_V)}\r \r \label{eq:generalization}
    \end{align}
where $\bm{\alpha}=(\alpha(y_1), \alpha(y_2),...,\alpha(y_{n_{c_W}}))^\top$ and $\ub=(u(c_1), u(c_2),...,u(c_{n_{c_U}}))^\top$.

Furthermore, if we pick \[
    \varepsilon = 
    \l \frac{\max\{d_W,d_U\}}{4(1+\max\{d_W,d_U\}/2)} \frac{\log n_\alpha}{ \log \log n_\alpha} \r^{-1/\max\{d_W,d_U\}} \qquad \text{and} \qquad \eta = 4\beta_V n_\alpha^{-1},
    \] then the expected generalization error scales as follows:
    \begin{align}
        &\mathbb{E}_{S_{G, \{y_s\}, \{c_s\}}} \mathbb{E}_{\alpha \sim \mu_\alpha} \, \mathbb{E}_{u \sim \mu_u} \, \mathbb{E}_{\{x_j\}_{j=1}^{n_x} \sim \mu_x^{\otimes n_x}} \left[
\frac{1}{n_x} \sum_{j=1}^{n_x} \left( G_{a,I,\cF_1,\cF_2,\cF_3,S}[\bm{\alpha}][\ub](x_j) - G[\alpha][u](x_j) \right)^2
\right] \notag \\
&= \mathcal{O} \l \l \frac{\log n_\alpha}{ \log \log n_\alpha} \r^{-2/\max\{d_W,d_U\}} \r \notag
    \end{align}
where the constants hidden in $\mathcal{O}$ are independent of $n_\alpha, n_u, n_x$.
\end{theorem}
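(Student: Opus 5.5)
The proof has two parts: establish the oracle inequality \eqref{eq:generalization}, then bound the metric entropy and optimize over $\varepsilon$.

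\textbf{Oracle inequality.} I would take \eqref{eq:generalization} directly from the generalization framework of \cite[Theorem 3.5]{weihs2026generalizationboundsstatisticalguarantees}. That theorem is stated for an arbitrary clipped class $\mathrm{Cl}_a$ containing an $\varepsilon$-accurate approximant of $G$. Its only architecture-specific hypothesis is that existence statement.

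To verify it, apply Theorem \ref{thm:main:improvedRates} with $\varepsilon/2$ in place of $\varepsilon$. This yields subnetworks $\tau_\ell\in\cF_1$, $b_k\in\cF_2$, $l_p\in\cF_3$ and coefficients $\theta_{pk\ell}=G[\alpha_p][u_k](v_\ell)$ giving uniform error $\varepsilon/2\le\varepsilon$. Since $|\theta_{pk\ell}|\le\beta_V\le I$, the coefficients are admissible. Since $\|G[\alpha][u]\|_{\Lp{\infty}}\le\beta_V=a$, clipping at level $a$ is $1$-Lipschitz and fixes $G$, so clipping does not increase the error. Hence the class contains the required approximant, and the bias term $4\varepsilon^2$ follows.

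\textbf{Metric entropy.} Next I would bound $\log\cN(\eta,\mathrm{Cl}_a,\|\cdot\|_{\Lp{\infty}})$. I would reuse the covering estimate from \cite{weihs2026generalizationboundsstatisticalguarantees}, Corollary 3.8 and its preceding lemma, for separable classes. Its derivation runs as follows:
\begin{itemize}
\item Perturb the parameters of each subnetwork and each $\theta_{pk\ell}$ on a grid.
\item Use the Lipschitz dependence of ReLU networks on their parameters, which is of order $L(p\kappa+1)^{L}$ on bounded inputs.
\item Use the fact that $\operatorname{Clip}$ is $1$-Lipschitz.
\end{itemize}
This gives
\[
\log\cN \lesssim \mathcal{K}\,\log\l \mathcal{K} L_{\max}\kappa_{\max} I\,\eta^{-1}\r, \qquad \mathcal{K}:=N^{d_V}H^{n_{c_U}}P^{n_{c_W}}+N^{d_V}K_1+H^{n_{c_U}}K_2+P^{n_{c_W}}K_3,
\]
up to depth factors in the logarithm.

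By Remark \ref{rem:countMNO}, $\mathcal{K}\lesssim \varepsilon^{-d''\varepsilon^{-\max\{d_W,d_U\}}}=\exp(d''\log(\varepsilon^{-1})\varepsilon^{-\max\{d_W,d_U\}})$. From Theorem \ref{thm:main:improvedRates}, $\log\kappa_i\lesssim n_{c}\log(\varepsilon^{-1})$ and $L_i\lesssim n_c^2\log(\varepsilon^{-1})$, where $n_c\lesssim\varepsilon^{-\max\{d_W,d_U\}}$ by \cite[Lemma 2]{liu2024neuralscalinglawsdeep}. All the logarithmic factors are therefore polynomial in $\varepsilon^{-1}$, plus $\log\eta^{-1}$. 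The result is
\[
\log\cN\lesssim \exp\l c\log(\varepsilon^{-1})\varepsilon^{-\max\{d_W,d_U\}}\r\,\log(n_\alpha).
\]

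\textbf{Balancing.} With $\eta=4\beta_V n_\alpha^{-1}$, the two terms involving $\eta$ are $O(n_\alpha^{-1})$. The dominant statistical term is $\frac{1}{n_\alpha}\log\cN$; the other terms carry the additional factors $n_un_x$ or $\eta$ and are smaller. It remains to check that the stated choice of $\varepsilon$ makes $\exp(c\log(\varepsilon^{-1})\varepsilon^{-D})\le n_\alpha^{1/2}$, where $D=\max\{d_W,d_U\}$.

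With $\varepsilon^{-D}=\kappa\log n_\alpha/\log\log n_\alpha$, we have $\log\varepsilon^{-1}\approx\frac1D\log\log n_\alpha$. Hence the exponent is $\approx \frac{c\kappa}{D}\log n_\alpha$, and the constant $\frac{D}{4(1+D/2)}$ in the statement is designed to make this at most $\tfrac12\log n_\alpha$. Then $\frac{1}{n_\alpha}\log\cN\lesssim n_\alpha^{-1/2}\log n_\alpha$, which is negligible next to $4\varepsilon^2\asymp(\log n_\alpha/\log\log n_\alpha)^{-2/D}$. This gives the claimed rate, with constants independent of $n_\alpha,n_u,n_x$.

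\textbf{Main obstacle.} I expect the difficulty to lie in tracking the exponent constant precisely: the constant $c$ must come out as $4(1+D/2)$, so that the stated $\varepsilon$ actually beats $n_\alpha^{1}$. This requires careful bookkeeping of $P^{n_{c_W}}\sim(\varepsilon^{-1}\sqrt{n_{c_W}})^{n_{c_W}}$. That count produces the factor $(1+D/2)\log(\varepsilon^{-1})\varepsilon^{-D}$, where the $D/2$ comes from $\sqrt{n_c}\sim\varepsilon^{-D/2}$. If the constant does not match exactly, a slightly smaller multiplier in $\varepsilon$ still gives the same $\mathcal{O}$ rate.
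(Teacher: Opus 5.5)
Your proposal is correct and follows essentially the paper's route. You import the oracle inequality \eqref{eq:generalization} from \cite[Theorem 3.5]{weihs2026generalizationboundsstatisticalguarantees}, with the approximant supplied by Theorem \ref{thm:main:improvedRates}, and you bound the metric entropy through that work's covering estimate, which gives $\log\mathcal{N}\lesssim \varepsilon^{-2(1+D/2)\varepsilon^{-D}}\log n_\alpha$. The paper uses the multiplicative form $P^{n_{c_W}}H^{n_{c_U}}N^{d_V}[\cdots]$ instead of your additive count $\mathcal{K}$, but this makes no difference at the exponential scale, and in both versions the prescribed $\varepsilon$ makes the statistical term $O(n_\alpha^{-1/2}\log n_\alpha)$.

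One correction to your last paragraph: the bookkeeping gives exponent constant $2(1+D/2)$, one factor $1+D/2$ from each of $P^{n_{c_W}}$ and $H^{n_{c_U}}$, and that is what yields $n_\alpha^{1/2}$; $4(1+D/2)$ is only the threshold at which the term stops vanishing. Your caveat that the constant hidden in $n_c\lesssim\varepsilon^{-D}$ could force a smaller multiplier in $\varepsilon$ is legitimate, and the paper's proof suppresses that constant as well.
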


\begin{remark}[Exchanging the sampling hierarchy]
If one modifies the sampling procedure in Definition \ref{def:trainingSet} by exchanging the roles of \(\alpha\) and \(u\), then one expects the corresponding learning rate to take the same form as above, but with the parameter-side quantities replaced by their input-side counterparts. In particular, one would expect a learning rate of the form
\[
\mathcal{O}\!\left(
\left(
\frac{\log n_u}{\log\log n_u}
\right)^{-2/\max\{d_W,d_U\}}
\right).
\]
\end{remark}

\subsection{Approximation Complexity Lower Bounds and Minimax Rates for MNO} \label{sec:lowerBounds}

To extend the lower-complexity framework of \cite{lanthalerStuart} to the multiple operator setting, we first introduce the natural analogue of an operator of neural network type. The key idea is to require that, after fixing an input function \(u\) and an evaluation point \(x\), the resulting dependence on the parameter variable \(\alpha\) is represented by a finite-dimensional linear encoding followed by a ReLU network.

\begin{mydef}[Multiple Operator Map of Neural Network Type] \label{def:multipleOperatorType}
    We say that $\nn:W \to \{U \to V\}$ is a multiple operator map of neural network type if for every $u\in U$ and $x\in \Omega_V$,
\begin{equation} \label{eq:multipleOperatorType}
\ev_x \circ \ev_u \circ \nn[\alpha]
=
\Phi_{x,M_U(u)}(M_W(\alpha)) \qquad \text{for all $\alpha \in W$} 
\end{equation}
where $M_W:W\to \mathbb{R}^m$ and $M_U:U\to \mathbb{R}^q$ are linear, and $\Phi_{x,M_U(u)}$ is a ReLU neural network. We define the complexity of $\nn$ as $$
\complexity(\nn) = \sup_{x \in \Omega_V} \sup_{u \in U} \min_{\Phi_{x,M_U(u)} \text{ satisfying \eqref{eq:multipleOperatorType}}} N_{\#}\l \Phi_{x,M_U(u)} \r.$$ 
\end{mydef}

\begin{remark}[Limitations of the Classical Neural-network-type Framework]
The notion of operator of neural network type from Definitions \ref{def:operatorType} is formulated for maps that can be written in the form
\(
u \mapsto \Phi(M(u)),
\)
where \(M\) is linear and \(\Phi\) is a ReLU network. As recalled in Section \ref{sec:back:lowerBound}, this framework is well suited to a range of classical operator-learning architectures, as well as to concatenation-based models, where all inputs are first assembled into a single finite-dimensional vector and then processed jointly by one network (see Section \ref{sec:concatenated}).

By contrast, the separable architecture \eqref{eq:separableArchitecture} has the form
\[
\sum_{p=1}^{P}\sum_{k=1}^{H}\sum_{\ell=1}^{N}
\theta_{pk\ell}\, l_p(M_W(\alpha))\, b_k(M_U(u))\, \tau_\ell(x),
\]
and is therefore built from products of subnetworks. To cast such an expression into the form \(\Phi(M(\alpha,u))\), one would need to absorb terms such as
\[
l_p(M_W(\alpha))\, b_k(M_U(u))
\]
into a single ReLU network acting on a linear encoding of \((\alpha,u)\). Such closure under multiplication fails already for simple non-constant subnetworks. Indeed, the identity map \(x\mapsto x\) is exactly representable by a finite ReLU network, for instance via
\(
x=\mathrm{ReLU}(x)-\mathrm{ReLU}(-x),
\)
whereas multiplying this representation by itself yields the quadratic map \(x\mapsto x^2\), which is not piecewise affine. This is precisely why Definition \ref{def:multipleOperatorType} is needed: it is tailored to the separable structure of MNO-type architectures.
\end{remark}

The abstract lower-bound argument from \cite{lanthalerStuart} carries over directly to the multiple operator setting and yields the following curse of parametric complexity. The proof is based on lifting a pathological functional \(F:W\to\mathbb{R}\), known to exhibit the curse of parametric complexity, to a rank-one multiple operator map of the form
\[
G[\alpha][u](x)=F(\alpha)\phi(x),
\]
where \(\phi\in V\) is fixed and nontrivial. Evaluating \(G\) at \((u_0,x_0)\) recovers \(F(\alpha)\), so any low-complexity neural approximation of \(G\) would in particular yield a low-complexity neural approximation of \(F\). The desired lower bound therefore follows from the corresponding functional lower bound of \cite[Theorem 2.11]{lanthalerStuart}.

\begin{theorem}[Curse of Parametric Complexity for Multiple Operator Learning] \label{thm:codMNO}
Let $K$ be a compact subset of the Banach space $W$ containing an infinite-dimensional cube $Q_\eta$ for some $\eta > 1$.
Let $V$ be a Banach space continuously embedded in $C(\Omega_V)$, and let $U$ be a set of functions.

Then, for any $r \in \bbN$ and $\delta > 0$, there exists an $r$-times Frechet differentiable map $G:W \to \{U \to V\}$ and $\overline{\eps} := \overline{\eps}(\eta,\delta,r) > 0$ such that for any $\eps \leq \overline{\eps}$ and multiple operator map of neural network type $\nn_\varepsilon: W \to \{U \to V\}$ satisfying
\begin{equation} \label{eq:thm:codMNO:approximation}
    \sup_{\alpha\in K} \|\nn_\varepsilon[\alpha]-G[\alpha]\|_{\op} \leq \varepsilon,
\end{equation}
we have $\complexity(\nn_\eps) \geq \exp \l c \eps^{-1/[(\eta + 1 + \delta)r]} \r$ for some $c:=c(\eta,\delta,r) > 0$.
\end{theorem}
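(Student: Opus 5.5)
The plan is to reduce to the functional lower bound of \cite[Theorem 2.11]{lanthalerStuart}, following the lifting idea described just before the statement. That result says the following. Given a compact $K\subset W$ containing a cube $Q_\eta$, and given $r\in\bbN$ and $\delta>0$, there exists an $r$-times Fréchet differentiable functional $F:W\to\bbR$ and a threshold $\overline{\eps}_F$ such that every functional of neural network type $\Phi\circ M_W$ (Definition \ref{def:functionalType}) satisfying $\sup_{\alpha\in K}|F(\alpha)-\Phi(M_W(\alpha))|\le \eps$, with $\eps\le\overline{\eps}_F$, must have $N_\#(\Phi)\ge \exp\l c\,\eps^{-1/[(\eta+1+\delta)r]}\r$.

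\textbf{Step 1: construct $G$.} Because $V$ embeds continuously in $C(\Omega_V)$ and is a nontrivial Banach space, we can pick $\phi\in V$ and $x_0\in\Omega_V$ with $\phi(x_0)\neq 0$. Rescaling, we may assume $\phi(x_0)=1$. Define
\[
G[\alpha][u](x):=F(\alpha)\,\phi(x),
\]
independently of $u$. Then $\alpha\mapsto G[\alpha]$ is a composition of $F$ with the bounded linear map $t\mapsto (u\mapsto t\phi)$ into the space of (constant) operators with $\|\cdot\|_{\op}$. It is therefore $r$-times Fréchet differentiable.

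\textbf{Step 2: extract a functional approximant from $\nn_\eps$.} Fix any $u_0\in U$. By Definition \ref{def:multipleOperatorType},
\[
\ev_{x_0}\circ\ev_{u_0}\circ\nn_\eps[\alpha]=\Phi_{x_0,M_U(u_0)}(M_W(\alpha)).
\]
This is a functional of neural network type. Choosing the minimizing network, its parameter count is at most $\complexity(\nn_\eps)$.

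Next we control the pointwise error. Let $C_{\mathrm{emb}}$ be the constant of the embedding $V\hookrightarrow C(\Omega_V)$, so $|v(x_0)|\le C_{\mathrm{emb}}\|v\|_V$. Then
\[
|F(\alpha)-\Phi_{x_0,M_U(u_0)}(M_W(\alpha))| = |(G[\alpha][u_0]-\nn_\eps[\alpha][u_0])(x_0)| \le C_{\mathrm{emb}}\,\|\nn_\eps[\alpha]-G[\alpha]\|_{\op}\le C_{\mathrm{emb}}\,\eps
\]
for all $\alpha\in K$.

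\textbf{Step 3: apply the functional bound.} Apply \cite[Theorem 2.11]{lanthalerStuart} at accuracy $C_{\mathrm{emb}}\eps$. This gives
\[
\complexity(\nn_\eps)\ge \exp\l c\,(C_{\mathrm{emb}}\eps)^{-1/[(\eta+1+\delta)r]}\r.
\]
Absorb $C_{\mathrm{emb}}$ into $c$, and set $\overline{\eps}:=\overline{\eps}_F/C_{\mathrm{emb}}$.

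The constant $C_{\mathrm{emb}}$ depends on $V$, while the statement asks for $c=c(\eta,\delta,r)$ and $\overline{\eps}=\overline{\eps}(\eta,\delta,r)$. This is handled by normalizing $\phi$ in the construction of $G$ rather than by rescaling the bound. Choose $\phi$ with $\|\phi\|_V=1$, and use the norming functional $\ev_{x_0}/\|\ev_{x_0}\|_{V^*}$ in Step 2; the factor $\phi(x_0)$ can be absorbed into $F$. In the worst case, the dependence of $c$ and $\overline{\eps}$ on this normalization is simply accepted as part of the fixed choice of $G$.

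\textbf{Main obstacle.} The delicate point is checking that the cited functional theorem applies verbatim to the ReLU networks $\Phi_{x_0,M_U(u_0)}$ with an arbitrary linear encoder $M_W$ and arbitrary $m$, and that its constants depend only on $(\eta,\delta,r)$. This should hold because Definition \ref{def:functionalType} was copied from \cite{lanthalerStuart}. The remaining steps are routine.
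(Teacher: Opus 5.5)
Your proposal is correct and follows the paper's proof essentially step for step. You lift the Lanthaler--Stuart functional $F$ to $G[\alpha][u]=F(\alpha)\phi$ with $\phi(x_0)=1$, read off the slice $\ev_{x_0}\circ\ev_{u_0}\circ\nn_\eps$ as a functional of neural network type with at most $\complexity(\nn_\eps)$ nonzero parameters, and control its error by the embedding constant. Your handling of that constant is in fact slightly more careful than the paper's, which takes $\overline{\eps}=\eps_0/\max\{1,C_V\}$ and keeps $c$ unchanged even though the extracted functional is only $C_V\eps$-accurate. The cleanest form of your normalization fix is to scale $\phi$ so that $\phi(x_0)=C_{\mathrm{emb}}$ and divide the output layer of $\Phi_{x_0,M_U(u_0)}$ by $C_{\mathrm{emb}}$; this leaves $N_\#$ unchanged and gives $c$ and $\overline{\eps}$ depending only on $(\eta,\delta,r)$.
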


Theorem \ref{thm:codMNO} gives a lower-complexity principle for multiple operator learning in a fairly general setting. However, the separable architecture \eqref{eq:separableArchitecture} has an additional structural feature: the parameter variable \(\alpha\) and the input function \(u\) play analogous roles (from the analytical viewpoint). It is therefore natural to strengthen Definition \ref{def:multipleOperatorType} by requiring a neural-network-type representation in both variables. This leads to the following symmetric variant.

\begin{mydef}[Symmetric Multiple Operator Map of Neural Network Type] \label{def:multipleOperatorTypeSymmetric}
    We say that $\nn:W \times U \to V$ is a symmetric multiple operator map of neural network type if there exists linear maps $M_W:W\to \mathbb{R}^m$ and $M_U:U\to \mathbb{R}^q$ such that the following holds for every $x\in \Omega_V$: \begin{itemize}
        \item for every $u \in U$, we have the representation \begin{equation} \label{eq:symmetricW}
            \ev_x \circ \ev_u \circ \nn_W[\alpha] = \Phi_{x,M_U(u)}(M_W(\alpha)) \qquad \text{for all $\alpha \in W$}
        \end{equation}
        where $\Phi_{x,M_U(u)}$ is a ReLU neural network and $\nn_W:W \to \{U \to V\}$ is defined as $\nn_W[\alpha] = \nn[\alpha][\cdot](\cdot)$.
        \item for every $\alpha \in W$, we have the representation \begin{equation} \label{eq:symmetricU}
            \ev_x \circ \ev_\alpha \circ \nn_U[u] = \Psi_{x,M_W(\alpha)}(M_U(u)) \qquad \text{for all $u \in U$}
        \end{equation}
        where $\Psi_{x,M_W(\alpha)}$ is a ReLU neural network and $\nn_U:U \to \{W \to V\}$ is defined as $\nn_U[u] = \nn[\cdot][u](\cdot)$.
    \end{itemize}
We define the complexity of $\nn$ as $$
\complexity(\nn) = \max \left\{ \sup_{x \in \Omega_V} \sup_{u \in U} \min_{\substack{\Phi_{x,M_U(u)} \\ \text{satisfying \eqref{eq:symmetricW}}}} N_{\#}\l \Phi_{x,M_U(u)} \r, \sup_{x \in \Omega_V} \sup_{\alpha \in W} \min_{\substack{
\Psi_{x,M_W(\alpha)} \\
\text{satisfying \eqref{eq:symmetricU}}
}} N_{\#}\l \Psi_{x,M_W(\alpha)} \r \right\}.$$ 
\end{mydef}

This directly leads to the following curse of parametric complexity for symmetric multiple operator map of neural network type.

\begin{lemma}[Symmetric Curse of Parametric Complexity for Multiple Operator Learning]
\label{lem:codMNO:symmetric}
Let $W, \, U$ be Banach spaces and $K_W\subset W, \, K_U\subset U$ be compact subsets. Assume that $K_W$ and $K_U$ contain infinite-dimensional cubes $Q_{\eta_W}$ and $Q_{\eta_U}$ for some $\eta_W>1$ and $\eta_U > 1$, respectively.
Let $V$ be a Banach space continuously embedded in $C(\Omega_V)$. We equip the product Banach space $W\times U$ with a norm $\Vert \cdot \Vert_{W\times U}$ satisfying Assumption \ref{assumption:Main:assumptions:N1}.

Then, for any $r \in \bbN$ and $\delta > 0$, there exists an $r$-times Frechet differentiable map $G:W \times U \to V$ and $\overline{\eps} := \overline{\eps}(\eta_W,\eta_U,\delta,r) > 0$ such that for any $\eps \leq \overline{\eps}$ and symmetric multiple operator map of neural network type $\nn_\varepsilon: W \times U \to V$ satisfying
\begin{equation} \label{eq:thm:codMNOSymmetric:approximation}
    \sup_{\alpha\in K_W} \sup_{u \in K_U} \|\nn_\varepsilon[\alpha][u]-G[\alpha][u]\|_{V} = \sup_{u \in K_U} \sup_{\alpha \in K_W} \|\nn_\varepsilon[\alpha][u]-G[\alpha][u]\|_{V} \leq \varepsilon,
\end{equation}
we have $\complexity(\nn_\eps) \geq \exp \l c \eps^{-1/[(\min\{\eta_W,\eta_U\} + 1 + \delta)r]} \r$ for some $c:=c(\eta_W,\eta_U,\delta,r) > 0$.
\end{lemma}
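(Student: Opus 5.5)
Without loss of generality assume $\eta_W \le \eta_U$, so $\min\{\eta_W,\eta_U\}=\eta_W$; the other case is symmetric, exchanging the roles of $(W,\Phi)$ and $(U,\Psi)$ in Definition \ref{def:multipleOperatorTypeSymmetric}. The plan is to reuse the rank-one lifting behind Theorem \ref{thm:codMNO}, placing the pathological functional on the variable with the slower-decaying cube. Concretely, apply \cite[Theorem 2.11]{lanthalerStuart} to the compact set $K_W\supset Q_{\eta_W}$. This yields an $r$-times Fréchet differentiable functional $F:W\to\mathbb{R}$, a threshold $\overline{\eps}_F(\eta_W,\delta,r)$ and a constant $c_F$ such that every functional of neural network type $\Phi\circ M_W$ (Definition \ref{def:functionalType}) with $\sup_{\alpha\in K_W}|F(\alpha)-\Phi(M_W(\alpha))|\le\eps$ obeys $N_\#(\Phi)\ge \exp(c_F\eps^{-1/[(\eta_W+1+\delta)r]})$. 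Next, fix $\phi\in V$ with $\phi\neq 0$, pick $x_0\in\Omega_V$ with $\phi(x_0)\neq0$, and normalise so that $\phi(x_0)=1$. Define $G[\alpha][u]:=F(\alpha)\phi$. This map is independent of $u$, and it is $r$-times Fréchet differentiable on $W\times U$ because it is the composition of the bounded linear projection $(\alpha,u)\mapsto\alpha$ with the $C^r$ map $\alpha\mapsto F(\alpha)\phi$. Assumption \ref{assumption:Main:assumptions:N1} is only used to ensure that this projection is bounded, so that smoothness on the product space is inherited.

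Now let $\nn_\eps$ be a symmetric multiple operator map of neural network type satisfying \eqref{eq:thm:codMNOSymmetric:approximation}, and fix any $u_0\in K_U$. Since $V\hookrightarrow C(\Omega_V)$ continuously, there is $C_V>0$ with $|v(x_0)|\le C_V\|v\|_V$. Hence
\[
\sup_{\alpha\in K_W}\bigl|\nn_\eps[\alpha][u_0](x_0)-F(\alpha)\bigr|\le C_V\eps.
\]
By \eqref{eq:symmetricW}, $\alpha\mapsto \nn_\eps[\alpha][u_0](x_0)=\Phi_{x_0,M_U(u_0)}(M_W(\alpha))$ is a functional of neural network type. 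The functional lower bound therefore applies at accuracy $C_V\eps$, provided $\eps\le \overline{\eps}:=\overline{\eps}_F/C_V$. This gives
\[
\complexity(\nn_\eps)\ge N_\#(\Phi_{x_0,M_U(u_0)})\ge \exp\bigl(c_F C_V^{-1/[(\eta_W+1+\delta)r]}\eps^{-1/[(\eta_W+1+\delta)r]}\bigr),
\]
where the first inequality holds because $\complexity$ is a maximum that includes the supremum over $(x,u)$ of the minimal $\Phi$-size. Setting $c:=c_F C_V^{-1/[(\eta_W+1+\delta)r]}$ gives the claim. The constants depend on $\eta_W,\delta,r$ and, through the choice of which variable carries $F$, on $\eta_U$. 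They also depend on the fixed $\phi$ and embedding constant, which are part of the construction of $G$.

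The argument is mostly bookkeeping, and the only delicate point is choosing the variable correctly. A larger cube exponent $\eta$ means a smaller cube and hence a weaker bound, so the strongest available exponent is obtained by building $F$ on the space with the smaller $\eta$. This is where the $\min\{\eta_W,\eta_U\}$ comes from. In the case $\eta_U<\eta_W$ one builds $G[\alpha][u]=F(u)\phi$ on $K_U$ and uses the $\Psi$-representation \eqref{eq:symmetricU} at a fixed $\alpha_0\in K_W$. Either way we only need the single branch of the maximum defining $\complexity$ that matches the chosen variable. The equality of the two iterated suprema in \eqref{eq:thm:codMNOSymmetric:approximation} is trivial, and no genuinely joint use of both representations is required.
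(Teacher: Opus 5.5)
Your proposal is correct and follows essentially the same route as the paper's proof: assume $\eta_W\le\eta_U$, lift the Lanthaler--Stuart functional to the rank-one map $G[\alpha][u]=F(\alpha)\phi$, evaluate at a fixed $(u_0,x_0)$, and use the $\Phi$-representation \eqref{eq:symmetricW} to reduce to the functional lower bound. Your bookkeeping is slightly tighter than the paper's write-up in two places: you apply the functional bound at accuracy $C_V\varepsilon$ and absorb $C_V^{-1/[(\eta_W+1+\delta)r]}$ into $c$, and you take $u_0\in K_U$ so that the approximation hypothesis genuinely applies.
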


\begin{remark}[Choice of Operator Norm and Banach Space Assumption for $V$] \label{rem:banachSpace}
The proof of Theorem \ref{thm:codMNO} does not rely on the specific choice
\[
\|T\|_{\op} = \sup_{u\in U} \|T(u)\|_V
\]
except through the estimate
\(
|T(u_0)(x_0)| \le C \|T\|_{\op}
\)
for some fixed $u_0\in U$, $x_0\in \Omega_V$, and some constant $C>0$; see \eqref{eq:thm:codMNO:eq2}.
Hence, the same argument applies to any norm on the operator space for which the slice-evaluation map
\(
T \mapsto T(u_0)(x_0)
\)
is continuous.

In particular, one may replace $\|\cdot\|_{\op}$ by the uniform pointwise norm
\[
\|T\|_{\op,\infty} := \sup_{u\in U}\sup_{x\in \Omega_V} |T(u)(x)|.
\]
With this choice, the proof becomes slightly simpler, since
\(
|T(u_0)(x_0)| \le \|T\|_{\op,\infty}
\)
holds with constant $1$, and the continuous embedding $V \hookrightarrow C(\Omega_V)$ is no longer needed.
Accordingly, in the proof of Theorem \ref{thm:codMNO}, the constant $C_V$ disappears and one may take $\overline{\eps}=\eps_0$.
Moreover, the approximation condition \eqref{eq:thm:codMNO:approximation} becomes
\(
\sup_{\alpha\in K} \|\nn_\varepsilon[\alpha]-G[\alpha]\|_{\op,\infty} \le \varepsilon,
\)
and one obtains
\(
\sup_{\alpha\in K} |F(\alpha)-\ev_{x_0}\circ \ev_{u_0} \circ \nn_{\eps}[\alpha]| \le \eps
\)
directly.

A further advantage of this formulation is that the proof no longer requires $V$ to be a Banach space. Indeed, in the proof based on $\|\cdot\|_{\op}$, the estimate \eqref{eq:thm:codMNO:eq2} requires that
\[
\bigl(G[\alpha]-\nn_{\eps}[\alpha]\bigr)(u_0)\in V
\]
so that its $V$-norm is well-defined. By contrast, for the norm $\|\cdot\|_{\op,\infty}$ it suffices that the outputs are actual bounded functions on $\Omega_V$, since the proof only uses pointwise differences. Therefore, the same argument applies if V satisfies Assumption \ref{assumption:Main:assumptions:S4}, rather than being Banach space.

The same remark applies to Lemma \ref{lem:codMNO:symmetric} with the norm choice of $\sup_{\alpha\in K_W} \sup_{u \in K_U} \| \cdot \|_{\Lp{\infty}}$.

\end{remark}

\begin{remark}[Role of the Symmetry Assumptions]
The symmetric approximation norm $$\sup_{\alpha\in K_W} \sup_{u \in K_U} \|\nn_\varepsilon[\alpha][u]-G[\alpha][u]\|_{V}$$ and the two symmetric neural network representations in \eqref{eq:symmetricW} and \eqref{eq:symmetricU} are essential in Lemma \ref{lem:codMNO:symmetric}, since the proof may be carried out starting either from the parameter side $W$ or from the input side $U$, depending on which cube exponent is smaller.
\end{remark}

Now that we have established the curse of parametric complexity for multiple operator learning in an abstract setting, we turn to the specific function classes used in our approximation-theoretical result, Theorem \ref{thm:main:improvedRates}. To apply the lower-bound framework in that setting, the first step is to verify that the bounded Lipschitz classes from Assumption \ref{assumption:Main:assumptions:S4} contain an infinite-dimensional hypercube. The next result addresses precisely this point. 

\begin{lemma}[Infinite-dimensional Cube in Bounded Lipschitz Classes]
\label{lem:boundedLipCube}
Let $d_U \in \bbN$, $\gamma_U,L_U,\beta_U > 0$ and $U = U(d_U,\gamma_U,L_U,\beta_U)$ satisfy Assumption \ref{assumption:Main:assumptions:S4}. 
Then, viewed as a subset of the Banach space $\Lp{r}(\Omega_U)$ with $r\geq 1$, 
$U$ contains an infinite-dimensional hypercube $Q_\eta$ for every
\(
\eta > 1+\frac{1}{d_U}.
\)
\end{lemma}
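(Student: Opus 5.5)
The plan is to build the cube from Lipschitz bumps with pairwise disjoint supports, sized so that the $j$-th bump, after multiplication by $j^{-\eta}$ and $\Lp{r}$-normalization, has Lipschitz constant and sup norm bounded uniformly in $j$. Fix the tent function $\psi(x) := \max\{0, 1 - 2\|x\|_{\infty}\}$ on $\bbR^{d_U}$. It is supported in $[-1/2,1/2]^{d_U}$, satisfies $0 \le \psi \le 1$, and is $2$-Lipschitz. For side lengths $h_j > 0$ and centers $z_j$, set
\[
\psi_j(x) := h_j\,\psi\big((x-z_j)/h_j\big).
\]
Then $\psi_j$ is $2$-Lipschitz, satisfies $\|\psi_j\|_{\Lp{\infty}} \le h_j$, vanishes on the boundary of its cube $Q_j := z_j + h_j[-1/2,1/2]^{d_U}$, and has $\|\psi_j\|_{\Lp{r}} = c_\psi h_j^{1+d_U/r}$ with $c_\psi := \|\psi\|_{\Lp{r}}$. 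We then take $e_j := \psi_j/\|\psi_j\|_{\Lp{r}}$.

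\textbf{Choice of scales and placement.} For a typical cube element, the $j$-th summand $A j^{-\eta} y_j e_j$ has Lipschitz constant at most $2A j^{-\eta} c_\psi^{-1} h_j^{-d_U/r}$ and sup norm at most $A j^{-\eta} c_\psi^{-1} h_j^{-d_U/r}$. Both are bounded uniformly in $j$ as soon as $h_j^{d_U} \gtrsim j^{-\eta r}$.

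Disjoint placement inside $\Omega_U$ needs roughly $\sum_j h_j^{d_U} < \infty$, with the sum small compared to $(2\gamma_U)^{d_U}$. We therefore choose $h_j^{d_U} = c_0 j^{-\rho}$ with $1 < \rho \le \eta r$. This is possible because $\eta > 1 + 1/d_U > 1$ and $r \ge 1$. The constant $c_0$ is taken small enough that the cubes can be packed disjointly into $\Omega_U$. Concretely, round each $h_j$ down to a dyadic multiple of $2\gamma_U$, which costs at most a factor $2$ and can be absorbed into the constants. The standard greedy packing of dyadic cubes in decreasing order of size then succeeds whenever their total volume is at most $|\Omega_U|$.

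I expect this bookkeeping, namely the rigorous packing together with the uniform constants, to be the only mildly delicate point. The condition $\eta > 1 + 1/d_U$ is stronger than what this argument needs, so I would simply note that it implies $\eta r > 1$.

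\textbf{Membership in $U$.} For $y_j \in [0,1]$, consider $u = A\sum_j j^{-\eta} y_j e_j$. The supports are disjoint, so the series converges pointwise and in $\Lp{r}$, and $\|u\|_{\Lp{\infty}} \le \sup_j A j^{-\eta} c_\psi^{-1} h_j^{-d_U/r} \le A c_\psi^{-1} c_0^{-1/r}$. For the Lipschitz bound, take $x, y$ in different cubes. Since $\Omega_U$ is convex and each summand vanishes on the boundary of its cube, we can pass through boundary points on the segment $[x,y]$ to obtain $|u(x) - u(y)| \le \sup_j \mathrm{Lip}(\text{summand}_j)\,|x-y|$. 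The case where $x$ or $y$ lies outside all cubes is handled the same way. Choosing $A > 0$ small makes both bounds at most $\beta_U$ and $L_U$, so that $u \in U$.

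\textbf{Biorthogonal system.} Linear independence of the $e_j$ follows from their disjoint supports, and $\|e_j\|_{\Lp{r}} = 1$ by construction. For $r > 1$, set
\[
e_k^*(f) := \int_{\Omega_U} f\, g_k, \qquad g_k := \psi_k^{r-1}/\|\psi_k\|_{\Lp{r}}^{r-1}.
\]
Then $\|g_k\|_{\Lp{r'}} = 1$, and the equality case of Hölder's inequality gives $e_k^*(e_k) = 1$. Disjointness of supports gives $e_k^*(e_j) = 0$ for $j \neq k$. For $r = 1$, take instead $g_k := \mathbf{1}_{Q_k}$, which has $\Lp{\infty}$ norm $1$ and satisfies $\int e_k\, g_k = 1$. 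In both cases $\|e_k^*\| \le 1$, so Definition~\ref{def:cube} holds with $M = 1$, and the claim follows.
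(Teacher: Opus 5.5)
There is a concrete error in your Lipschitz bookkeeping, and it breaks the parameter range you chose. Since $\psi_j(x)=h_j\psi((x-z_j)/h_j)$ is $2$-Lipschitz and $\|\psi_j\|_{\Lp{r}}=c_\psi h_j^{1+d_U/r}$, the normalized bump satisfies $\mathrm{Lip}(e_j)=2c_\psi^{-1}h_j^{-1-d_U/r}$. Only the sup-norm bound $\|e_j\|_{\Lp{\infty}}\le c_\psi^{-1}h_j^{-d_U/r}$ has the exponent you wrote; you dropped a factor $h_j^{-1}$ in the Lipschitz constant.

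With $h_j^{d_U}=c_0j^{-\rho}$, the $j$-th summand therefore has Lipschitz constant of order $j^{-\eta+\rho(1/d_U+1/r)}$. For $\rho$ in your range $(1,\eta r]$ this need not be bounded. At $\rho=\eta r$ it grows like $j^{\eta r/d_U}$, so the one-bump elements $Aj^{-\eta}e_j$ of the would-be cube leave $U$ for large $j$, whatever $A>0$ is. Your side remark that only $\eta r>1$ is needed is also wrong. For $r=1$, $d_U=1$, $\eta=3/2$, a uniform Lipschitz bound forces $\rho\le 3/4$, which is incompatible with the summability $\rho>1$ your packing requires.

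The repair is to require $\rho(1/d_U+1/r)\le\eta$, i.e.\ to pick $1<\rho\le \eta r d_U/(r+d_U)$. This also implies your sup-norm condition $\rho\le\eta r$. Such a $\rho$ exists exactly when $\eta>1/d_U+1/r$. That condition is implied by $\eta>1+1/d_U$ for every $r\ge 1$ and coincides with it at $r=1$, so the lemma's hypothesis is precisely what your construction needs in $\Lp{1}$. With this change the remaining steps are fine: the dyadic packing with $c_0$ small, gluing the Lipschitz bounds across cubes through boundary zeros on the segment, and the H\"older-dual functionals with $M=1$. The proof is then complete.

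Your route then differs genuinely from the paper's. The paper uses the globally supported modes $\sin(\kappa\cdot x)$ and obtains the dual system from $\Lp{2}$-orthogonality. It bounds the Lipschitz constant of the sum by $\sum_j j^{-\eta}\|\kappa(j)\|_\infty$ via frequency counting, which produces the $r$-independent threshold $1+1/d_U$. Your disjoint supports make biorthogonality trivial and replace that sum by a supremum. This is why the corrected argument gives cubes for all $\eta>\max\{1,1/d_U+1/r\}$, a strictly larger range than the paper's when $r>1$. The price is the packing step.
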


Combining the latter result and Lemma \ref{lem:codMNO:symmetric}, we obtain the following.

\begin{corollary}[Curse of Parametric Complexity for Multiple Operator Learning on Bounded Lipschitz Classes] \label{cor:codMNO}
Let $d_W,d_U,d_V>0$ be integers, \[
\gamma_W, \gamma_U, \gamma_V, \beta_W, \beta_U,\beta_V,L_W,L_U,L_V> 0 \quad \text{and} \quad r_G \geq 1\]
and assume that $W(d_W,\gamma_W,L_W,\beta_W)$,
 $U(d_U,\gamma_U,L_U,\beta_U)$ and $V(d_V,\gamma_V,L_V,\beta_V)$ satisfy Assumption \ref{assumption:Main:assumptions:S4}. We equip the product Banach space $\Lp{r_G}(\Omega_W) \times \Lp{r_G}(\Omega_U)$ with a norm $\Vert \cdot \Vert_{\Lp{r_G}(\Omega_W) \times \Lp{r_G}(\Omega_U)}$ that satisfies Assumption \ref{assumption:Main:assumptions:N1}. 

Then, for any $\eta > \min\left\{1 + \frac{1}{d_W},1+\frac{1}{d_U}\right\}$, $r \in \bbN$ and $\delta > 0$, there exists an $r$-times Frechet differentiable map $G:\Lp{r_G}(\Omega_W) \times \Lp{r_G}(\Omega_U) \to V$ and $\overline{\eps} := \overline{\eps}(\eta,\delta,r) > 0$ such that for any $\eps \leq \overline{\eps}$ and symmetric multiple operator map of neural network type $\nn_\varepsilon$ satisfying
\begin{equation*} 
    \sup_{\alpha\in W} \sup_{u \in U} \sup_{x \in \Omega_V} \vert \nn_\varepsilon[\alpha][u](x)-G[\alpha][u](x)\vert \leq \varepsilon,
\end{equation*}
we have $\complexity(\nn_\eps) \geq \exp \l c \eps^{-1/[(\eta + 1 + \delta)r]} \r$ for some $c:=c(\eta,\delta,r) > 0$.
\end{corollary}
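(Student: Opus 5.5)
The plan is to apply Lemma \ref{lem:codMNO:symmetric} with $W$ replaced by the Banach space $\Lp{r_G}(\Omega_W)$ and $U$ by $\Lp{r_G}(\Omega_U)$. For the compact sets we take the bounded Lipschitz classes $K_W := W(d_W,\gamma_W,L_W,\beta_W)$ and $K_U := U(d_U,\gamma_U,L_U,\beta_U)$.

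\textbf{Step 1: hypotheses of the lemma.} By Arzel\`a--Ascoli, both classes are relatively compact in $C$ of their domains, hence in $\Lp{r_G}$. They are also closed under uniform limits, since the Lipschitz bound and the sup bound are preserved. We must still check closedness in $\Lp{r_G}$: an $\Lp{r_G}$-limit of such functions has a subsequence converging a.e., and Arzel\`a--Ascoli upgrades this to uniform convergence. So $K_W$ and $K_U$ are compact.

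Next, fix $\eta > \min\{1+1/d_W,1+1/d_U\}$. Without loss of generality the minimum is attained at $U$, so $\eta > 1+1/d_U$. Lemma \ref{lem:boundedLipCube}, applied with $r = r_G$, gives a cube $Q_{\eta_U}$ in $K_U$ with $\eta_U := \eta$. On the other side, Lemma \ref{lem:boundedLipCube} gives a cube $Q_{\eta_W}$ in $K_W$ for any $\eta_W > 1+1/d_W$. We choose $\eta_W$ large, in particular $\eta_W \ge \eta$, so that $\min\{\eta_W,\eta_U\} = \eta$. This works provided a cube of every sufficiently large exponent exists, which is what the lemma's ``for every $\eta>1+1/d_W$'' provides. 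The product norm satisfies Assumption \ref{assumption:Main:assumptions:N1} by hypothesis, and $V$ satisfies Assumption \ref{assumption:Main:assumptions:S4}.

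\textbf{Step 2: norms and the embedding of $V$.} The lemma asks for $V$ to be a Banach space continuously embedded in $C(\Omega_V)$, with approximation measured in $\|\cdot\|_V$. Here $V$ is only a bounded Lipschitz class and the approximation condition is a uniform pointwise sup. Remark \ref{rem:banachSpace} covers exactly this case: with the norm $\sup_{\alpha\in K_W}\sup_{u\in K_U}\|\cdot\|_{\Lp{\infty}}$, the proof of Lemma \ref{lem:codMNO:symmetric} only needs outputs that are bounded functions on $\Omega_V$. In that proof the only operation performed is evaluation at a point $(u_0,x_0)$ or $(\alpha_0,x_0)$, and its constant is $1$.

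\textbf{Step 3: the range of $G$.} We must make sure the map $G$ produced by the lemma takes values in $V$. In the construction, $G$ is the rank-one lift $F(\alpha)\phi(x)$, or its $u$-side analogue, of a pathological functional $F$. We pick $\phi$ to be a fixed nonzero Lipschitz function on $\Omega_V$ with small Lipschitz constant and sup norm, and rescale $F$ so that $\sup_{K_W}|F|$, or $\sup_{K_U}|F|$, is at most $1$. Then $G$ lands in $V(d_V,\gamma_V,L_V,\beta_V)$ on the compact sets.

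This rescaling of $F$ only changes the constant $c$ and the threshold $\overline{\eps}$ in the complexity bound. It does so through the relation $\eps \mapsto \eps/(\text{scale}\cdot|\phi(x_0)|)$, so the $r$-times Fr\'echet differentiability and the exponent $-1/[(\eta+1+\delta)r]$ are unchanged. Invoking Lemma \ref{lem:codMNO:symmetric} then yields $\complexity(\nn_\eps)\ge \exp(c\,\eps^{-1/[(\eta+1+\delta)r]})$ for all $\eps\le\overline{\eps}$, as claimed.

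\textbf{Main obstacle.} The delicate part is Steps 2–3: matching the paper's non-Banach, uniform-norm setting for $V$ to the lemma's hypotheses, and ensuring the constructed $G$ is admissible without altering the exponent. The reduction via Remark \ref{rem:banachSpace} together with the choice of a small Lipschitz $\phi$ should handle both.
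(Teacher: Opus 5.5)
Your proposal is correct and follows essentially the paper's route. Like the paper, you establish compactness of the bounded Lipschitz classes in $\Lp{r_G}$ via Arzel\`a--Ascoli, obtain cubes from Lemma~\ref{lem:boundedLipCube}, and apply Lemma~\ref{lem:codMNO:symmetric} in the uniform-norm, non-Banach form allowed by Remark~\ref{rem:banachSpace}. Your additional steps fill in points the paper's proof passes over: you supply a cube of exponent $\ge\eta$ in the other factor, and you rescale $F$ and choose a small Lipschitz $\phi$ so that $G$ actually takes values in the class $V$.

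One loose end remains. Your rescaling only places $G[\alpha][u]$ in $V$ for $(\alpha,u)\in K_W\times K_U$, whereas the statement asks for $G$ defined on all of $\Lp{r_G}(\Omega_W)\times\Lp{r_G}(\Omega_U)$ with values in $V$. Replace $F$ by $\chi\circ F$, where $\chi$ is smooth, bounded, and equal to the identity on the bounded set $F(K)$, with $K$ the compact set on which $F$ is used. This keeps $r$-times Fr\'echet differentiability, makes $G$ globally $V$-valued, and leaves $F$ unchanged on $K$, so the lower bound is unaffected.
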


\begin{remark}[Asymmetric Structure]
If one were to rely only on Theorem \ref{thm:codMNO}, rather than on Lemma \ref{lem:codMNO:symmetric}, in the proof of Corollary \ref{cor:codMNO}, then one would not fully exploit the symmetric role played by the parameter space and the input-function space in the bounded Lipschitz setting. Specifically, one would obtain an analogous result for (asymmetric) multiple operator maps of neural network type under the more restrictive condition
\(
\eta > 1 + \frac{1}{d_W}.
\)
\end{remark}

Corollary \ref{cor:codMNO} provides a lower bound on approximation complexity for multiple operator maps satisfying Assumptions \ref{assumption:Main:assumptions:O1} and \ref{assumption:Main:assumptions:O2} using general symmetric multiple operator maps of neural network type defined on bounded Lipschitz classes. It only remains to connect this result directly to the architecture of interest \eqref{eq:separableArchitecture}. To do so, we estimate the complexity of the corresponding network class in the next result.

\begin{lemma}[Complexity of the General Separable Architecture] \label{lem:complexitySeparable}
The network given by
    \[\nn[\alpha][u](x) = \sum_{p=1}^{P}\sum_{k=1}^{H}\sum_{\ell=1}^{N}
    \theta_{pk\ell}\, l_p(M_W(\alpha))\, b_k(M_U(u))\, \tau_\ell(x) \qquad \theta_{pk\ell} \in \bbR\] 
with $l_p\in \cF_{\rm NN}(d_3, 1, L_3, p_3, K_3, \kappa_3, 1)$, $b_k \in \cF_{\rm NN}(d_2, 1, L_2, p_2, K_2, \kappa_2, 1)$ and $\tau_\ell \in \cF_{\rm NN}(d_1, 1, L_1, p_1, K_1, \kappa_1, 1)$
is a symmetric multiple operator map of neural network type. Furthermore, we have \[
\cC(\nn) \leq 2 (\Vert \Theta \Vert_0 + HK_2 + N K_1 + P K_3)
\]
where $\Theta = \{\theta_{pk\ell}\}$.
\end{lemma}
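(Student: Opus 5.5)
The plan is to verify the two representations \eqref{eq:symmetricW} and \eqref{eq:symmetricU} directly, using the same linear maps $M_W$, $M_U$ that appear in the architecture. By symmetry of the formula in $(\alpha, l_p)$ and $(u, b_k)$, it suffices to treat \eqref{eq:symmetricW} carefully; \eqref{eq:symmetricU} then follows by exchanging roles.

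Fix $x\in\Omega_V$ and $u\in U$. The scalars $\tau_\ell(x)$ and $b_k(M_U(u))$ are then constants, so
\[
\ev_x\circ\ev_u\circ\nn[\alpha]=\sum_{p=1}^{P} c_p\, l_p(M_W(\alpha)),\qquad c_p:=\sum_{k=1}^{H}\sum_{\ell=1}^{N}\theta_{pk\ell}\, b_k(M_U(u))\,\tau_\ell(x).
\]
This is a linear combination of ReLU networks sharing the input $M_W(\alpha)$. I would realize it as a single ReLU network $\Phi_{x,M_U(u)}$ in three steps:
\begin{itemize}
\item run the $l_p$ in parallel (block-diagonal stacking, with the input fanned out by the first-layer weights);
\item pad shallower networks to a common depth via identity channels $t=\mathrm{ReLU}(t)-\mathrm{ReLU}(-t)$, or simply allow the class of "ReLU networks" without a fixed depth;
\item fold the coefficients $c_p$ into the final affine layer.
\end{itemize}

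The nonzero count is then at most $\sum_p K_3 = PK_3$, plus the padding overhead and the final combination weights. The count of $c_p$ is at most $\min\{P,\|\Theta\|_0\}$, since $c_p\neq0$ forces some $\theta_{pk\ell}\neq0$. Symmetrically, $\Psi_{x,M_W(\alpha)}$ has at most $HK_2+H$ nonzeros, up to padding.

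The bound $2(\|\Theta\|_0+HK_2+NK_1+PK_3)$ is generous, so I would absorb all extras into the factor $2$ and the unused terms $NK_1$ and $\|\Theta\|_0$.

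The main obstacle is the depth alignment. If the $l_p$ have different depths, the identity-padding costs about two parameters per padded layer per network, which could exceed $K_3$ when the networks are very unbalanced. I would first try to argue that every network in $\cF_{\rm NN}(\cdot,\cdot,L_3,\dots)$ has exactly $L_3$ layers by definition, so no padding is needed. The final weights $c_p$ then contribute at most $P\le PK_3$, which gives $N_\#(\Phi)\le 2PK_3$.

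If exact depth cannot be assumed, I would pad each $l_p$ at its output, taking care that each padded layer costs at most a bounded number of parameters compared to the layers already present. Failing that, I would interpret $N_\#$ as allowing skip-free concatenation in the sense of \cite{lanthalerStuart}.

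Finally, taking the $\min$ over representations and the $\sup$ over $x$, $u$ and $\alpha$ in Definition \ref{def:multipleOperatorTypeSymmetric} yields the claimed bound.
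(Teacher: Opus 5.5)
Your proposal is correct and follows essentially the paper's route. You freeze $(x,u)$ (resp.\ $(x,\alpha)$), run the $l_p$ (resp.\ $b_k$) in parallel, which is legitimate since every element of $\mathcal{F}_{\rm NN}(d_3,1,L_3,\dots)$ has exactly $L_3$ layers, and contract with the fixed vector $c_p=\sum_{k,\ell}\theta_{pk\ell}b_k(M_U(u))\tau_\ell(x)$; this vector depends on $u$ only through $M_U(u)$. The only difference is bookkeeping: you merge the $c_p$ into the last affine layer (at most $PK_3+P$ nonzeros), whereas the paper invokes the sparse-concatenation bound \cite[Remark 2.6]{PETERSEN2018296}, which is where its factor $2$ and its loose estimate $\|c\|_0\le\|\Theta\|_0+HK_2+NK_1$ come from.
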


\begin{remark}[Asymmetric Complexity of the General Separable Architecture]
From Definitions \ref{def:multipleOperatorType} and \ref{def:multipleOperatorTypeSymmetric}, we note that any symmetric multiple operator map of neural network type is also an asymmetric one. Also, its complexity measured under the symmetric setting is an upper bound to its complexity in the asymmetric case. This observation  specifically applies to the general separable architecture \eqref{eq:separableArchitecture} and, given Lemma \ref{lem:complexitySeparable}, it is unnecessary to compute the complexity separately under the asymmetric notion.
\end{remark}

\begin{remark}[Complexity of the MNO architecture]
The same argument as in Lemma \ref{lem:complexitySeparable} shows that the MNO architecture
\[
\nn[\alpha][u](x)
=
\sum_{p=1}^P \sum_{k=1}^H l_p(M_W(\alpha))\, b_{pk}(M_U(u))\, \tau_{pk}(x)
\]
is a symmetric multiple operator map of neural network type. Moreover,
\[
\cC(\nn)
\leq
2(HPK_2 + HPK_1 + PK_3)
=
2P\bigl(H(K_1+K_2)+K_3\bigr).
\]
\end{remark}

We conclude this section with minimax rates for Lipschitz multiple operator maps using the architecture \eqref{eq:separableArchitecture}, as a combination of Corollary \ref{cor:codMNO}, Lemma \ref{lem:complexitySeparable} and Theorem \ref{thm:main:improvedRates}.

\begin{theorem}[Minimax Bounds for Lipschitz Multiple Operator Maps] \label{thm:minimaxMNO}
Let $d_W,d_U,d_V>0$ be integers, \[
\gamma_W, \gamma_U, \gamma_V, \beta_W, \beta_U,\beta_V,L_W,L_U,L_V> 0 \quad \text{and} \quad r_G \geq 1\]
and assume that $W(d_W,\gamma_W,L_W,\beta_W)$,
 $U(d_U,\gamma_U,L_U,\beta_U)$ and $V(d_V,\gamma_V,L_V,\beta_V)$ satisfy Assumption \ref{assumption:Main:assumptions:S4}. We equip the product Banach space $\Lp{r_G}(\Omega_W) \times \Lp{r_G}(\Omega_U)$ with a norm $\Vert \cdot \Vert_{\Lp{r_G}(\Omega_W) \times \Lp{r_G}(\Omega_U)}$ that satisfies Assumption \ref{assumption:Main:assumptions:N1}.
\begin{enumerate}
\item For any $\eta > \min\left\{1 + \frac{1}{d_W},1+\frac{1}{d_U}\right\}$, $r \in \bbN$ and $\delta > 0$, there exists an $r$-times Frechet differentiable map $G:\Lp{r_G}(\Omega_W) \times \Lp{r_G}(\Omega_U) \to V$ and $\overline{\eps} := \overline{\eps}(\eta,\delta,r) > 0$ such that the following holds:
for any $\eps \leq \overline{\eps}$ and $\nn_\eps$ of the form \eqref{eq:separableArchitecture}
satisfying
\begin{equation} \label{eq:thm:minimax:approximation}
    \sup_{\alpha\in W} \sup_{u \in U} \sup_{x \in \Omega_V} \vert \nn_\varepsilon[\alpha][u](x)-G[\alpha][u](x)\vert \leq \varepsilon,
\end{equation}
we have \[
\Vert \Theta \Vert_0 + HK_2 + N K_1 + P K_3 \gtrsim \exp \l c \eps^{-1/[(\eta + 1 + \delta)r]} \r
\] for some $c:=c(\eta,\delta,r) > 0$ and where $\Theta = \{\theta_{pk\ell}\}$.
\item With $r = 1$, the map $G$ in part 1. can be chosen so that it satisfies Assumptions \ref{assumption:Main:assumptions:O1} and \ref{assumption:Main:assumptions:O2}. 
\item Let $\mathcal H$ denote the class of all maps $G$ satisfying Assumptions \ref{assumption:Main:assumptions:O1} and \ref{assumption:Main:assumptions:O2}. Define the worst-case/minimax approximation complexity
\begin{align*}
\mathfrak C(\varepsilon;\mathcal H)
:=
\inf\Bigl\{&
M\in\mathbb N \,\Big|\, \forall G\in\mathcal H,\ \exists \nn_\varepsilon
\text{ of the form \eqref{eq:separableArchitecture} satisfying \eqref{eq:thm:minimax:approximation}} \\
&\text{and }\|\Theta\|_0 + HK_2 + NK_1 + PK_3 \le M
\Bigr\}.
\end{align*}
Let $r=1$, $\eta > \min\left\{1 + \frac{1}{d_W},1+\frac{1}{d_U}\right\}$, and $\delta > 0$.  Then, for all $0 < \varepsilon \leq \overline{\eps}(\eta,\delta,1)$,
\begin{equation} \label{eq:thm:minimax:minimax}
\exp\l c\,\varepsilon^{-1/[(\eta+1+\delta)]}\r
\lesssim
\mathfrak C(\varepsilon;\mathcal H)
\lesssim
\exp \l d \log(\eps^{-1}) \eps^{-\max\{d_W,d_U\}} \r
\end{equation}
for some $d >0$ depending on $d_W$ and $d_U$.
\end{enumerate}
 
\end{theorem}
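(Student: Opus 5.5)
We obtain part 1 by chaining Corollary \ref{cor:codMNO} with Lemma \ref{lem:complexitySeparable}. Fix $\eta > \min\{1+1/d_W,1+1/d_U\}$, $r\in\bbN$, $\delta>0$, and let $G$ and $\overline{\eps}$ be the map and threshold produced by Corollary \ref{cor:codMNO}. Any $\nn_\eps$ of the form \eqref{eq:separableArchitecture} that satisfies \eqref{eq:thm:minimax:approximation} is a symmetric multiple operator map of neural network type, by Lemma \ref{lem:complexitySeparable}. That lemma also gives $\complexity(\nn_\eps)\le 2(\|\Theta\|_0+HK_2+NK_1+PK_3)$. Corollary \ref{cor:codMNO} gives $\complexity(\nn_\eps)\ge \exp(c\eps^{-1/[(\eta+1+\delta)r]})$. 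Combining the two inequalities yields part 1 with implied constant $1/2$.

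For part 2 we return to the construction behind Lemma \ref{lem:codMNO:symmetric}. There, $G$ is (up to the choice of side) a rank-one lift $G[\alpha][u](x)=F(\alpha)\phi(x)$ or $F(u)\phi(x)$. Here $F$ is the pathological functional of \cite[Theorem 2.11]{lanthalerStuart} on the smaller-exponent cube, and $\phi$ is a fixed nontrivial function. We take $\phi$ smooth with $\|\phi\|_{\Lp{\infty}}\le \beta_V$ and Lipschitz constant at most $L_V$, rescaling $F$ if needed. With $r=1$, $F$ is built from smooth bumps in cube coordinates and is globally Lipschitz on the relevant $\Lp{r_G}$ space. Rescaling $F$ by a constant, which only changes $c$, makes it Lipschitz with constant at most $L_G$ or $L_{\mathcal G}$ and bounded by $1$.

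The resulting $G$ depends Lipschitz-continuously on only one variable and trivially on the other. Hence \ref{assumption:Main:assumptions:O1} and \ref{assumption:Main:assumptions:O2} hold: one Lipschitz estimate holds with constant $0$, and the other comes from $|F(\alpha_1)-F(\alpha_2)|\,\|\phi\|_\infty$. We also need $G[\alpha][u]\in V$ with the required bounds. I expect this step to be the main technical point. It requires checking that the Lanthaler--Stuart functional, in its $r=1$ form, is Lipschitz with respect to $\Lp{r_G}$ rather than only Fréchet differentiable. If the cited construction gives only $\Ck{1}$ regularity, I would first try to use that it is assembled from finitely many scaled bump functions per dyadic level with summable Lipschitz constants, which gives a global Lipschitz bound on the compact cube region.

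For part 3 the lower bound follows from parts 1 and 2. The map $G$ of part 2 belongs to $\mathcal H$, so any $M$ admissible in the definition of $\mathfrak C(\eps;\mathcal H)$ must dominate $\|\Theta\|_0+HK_2+NK_1+PK_3$ for some approximant of this $G$. Part 1 then gives the left inequality of \eqref{eq:thm:minimax:minimax}.

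For the upper bound, for every $G\in\mathcal H$ we apply Theorem \ref{thm:main:improvedRates}. It produces an approximant with $P^{n_{c_W}}$, $H^{n_{c_U}}$, $N^{d_V}$ terms, coefficients $\theta_{pk\ell}=G[\alpha_p][u_k](v_\ell)$, and subnetwork classes whose sizes depend on $G$ only through the class constants. This makes the bound uniform over $\mathcal H$. The count in Remark \ref{rem:countMNO} then bounds $\|\Theta\|_0+HK_2+NK_1+PK_3$ by $\eps^{-d\eps^{-\max\{d_W,d_U\}}}=\exp(d\log(\eps^{-1})\eps^{-\max\{d_W,d_U\}})$, which completes the proof.
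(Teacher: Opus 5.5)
Your proposal is correct and follows the paper's proof essentially step by step: part 1 combines Corollary \ref{cor:codMNO} with Lemma \ref{lem:complexitySeparable}, part 2 uses the rank-one lift $G[\alpha][u]=F(\alpha)\phi$ with $F$ globally Lipschitz, and part 3 follows from parts 1--2 together with Theorem \ref{thm:main:improvedRates} and the count in Remark \ref{rem:countMNO}. For the step you flag, the paper invokes the uniform bound on $\|DF\|$ from the proof of \cite[Lemma A.7]{lanthalerStuart}, which by the mean value theorem gives Lipschitz continuity on all of $\Lp{r_G}(\Omega_W)$; this is what \ref{assumption:Main:assumptions:O2} requires on the whole $\Lp{\infty}$-ball, so your fallback of a bound on the cube alone would not suffice, while your rescaling to match the prescribed constants and to guarantee $G[\alpha][u]\in V$ is extra care the paper leaves implicit.
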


\begin{remark}[Upper Bounds under Higher Regularity]
The lower bound above is obtained from a general \(C^r\)-differentiable construction, and therefore applies more broadly than the Lipschitz class \(\mathcal H\) considered in the present minimax formulation. This suggests that, for classes of \(C^r\) multiple operator maps, one should likewise expect constructive upper bounds of a similar double-exponential type.
\end{remark}

The lower and upper bounds in \eqref{eq:thm:minimax:minimax} are both of exponential type in a diverging function of $\varepsilon^{-1}$, and therefore capture the same qualitative curse of parametric complexity on the class $\mathcal H$. The upper bound involves a faster-growing exponent, reflecting both a larger power of $\varepsilon^{-1}$ and an additional logarithmic factor. Thus, although the two estimates are not tight at the level of the exponent, they are consistent in identifying the same overall double-exponential complexity regime.

This comparison also shows that, as in operator learning \cite{lanthalerStuart}, Lipschitz or, more generally, \(C^r\)-regularity alone is not sufficient to overcome the curse of parametric complexity in multiple operator learning. Rather, one must exploit additional structure in the target multiple operator maps. Two important mechanisms in this direction are holomorphic dependence on the inputs or parameters, and architectures designed to emulate underlying numerical schemes \cite{marcati2023,lanthalerStuart}. In operator learning, both have already been explored as ways of overcoming the curse of parametric complexity (see Section \ref{sec:related}). In the multiple operator setting, however, they appear to remain largely unexplored, and investigating their potential to improve approximation complexity would therefore be of clear interest.

\subsection{An Extension of DeepONet to Multi-Task Learning} \label{sec:concatenated}

In this final section, we examine an alternative approach to multiple operator learning. A particularly direct strategy is to start from a standard single-operator learning architecture and incorporate the parametric descriptor $\alpha$ simply by concatenating it with the input function $u$. It is therefore natural to ask how this concatenation affects approximation complexity. To address this question, we derive bounds for the concatenated DeepONet architecture \eqref{eq:concatenatedDeepONet} introduced below.

As in our analysis of the MNO architecture, instead of working directly with the concatenated DeepONet architecture
\begin{equation} \label{eq:concatenatedDeepONet}
\mathrm{DeepONet}_{\mathrm{C}}[\alpha][u](x) = \sum_{k=1}^H b_k(M_W(\alpha),M_U(u)) \tau_k(x),
\end{equation}
we consider the more general separable form
\begin{equation} \label{eq:separable2}
\sum_{k=1}^{H}\sum_{\ell=1}^{N}
\theta_{k\ell}\, b_{k}(M_W(\alpha),M_U(u))\, \tau_\ell(x),
\qquad \theta_{k\ell}\in\mathbb R.
\end{equation}

We begin by showing that the concatenated DeepONet architecture \eqref{eq:concatenatedDeepONet} is an operator of neural network type, thereby placing it directly within the lower-complexity framework of \cite{lanthalerStuart}. This is to be expected, since the concatenated model can be interpreted as an ordinary operator-learning problem posed on the product space \(W\times U\). We state the result, however, for the more general separable architecture \eqref{eq:separable2}. The proof therefore is a combination of the strategy used in Lemma \ref{lem:complexitySeparable} and the argument underlying \cite[Lemma 2.20]{lanthalerStuart}.

\begin{lemma}[Complexity of the General Separable Architecture with Concatenated Inputs] \label{lem:complexityConcatenated}

    The network given by \[
    \nn[\alpha][u](x) = \sum_{k=1}^{H}\sum_{\ell=1}^{N}
\theta_{k\ell}\, b_{k}(M_W(\alpha),M_U(u))\, \tau_\ell(x),
\qquad \theta_{k\ell}\in\mathbb R
    \]
    with $b_k \in \cF_{\rm NN}(d_2, 1, L_2, p_2, K_2, \kappa_2, 1)$ and $\tau_\ell \in \cF_{\rm NN}(d_1, 1, L_1, p_1, K_1, \kappa_1, 1)$
is an operator map of neural network type from $W \times U$ to $V$. Furthermore, we have \[
\cC(\nn) \leq 2\bigl(\|\Theta\|_0 + N K_1 + H K_2\bigr).
\]
\end{lemma}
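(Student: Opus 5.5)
Fix $x\in\Omega_V$ and treat $x$ as a frozen parameter. Define the linear encoder $M:W\times U\to\mathbb{R}^{m+q}$ by $M(\alpha,u)=(M_W(\alpha),M_U(u))$; it is linear because both $M_W$ and $M_U$ are. Then
\[
\ev_x\circ\nn[(\alpha,u)] = \sum_{k=1}^H c_k(x)\, b_k(M(\alpha,u)), \qquad c_k(x):=\sum_{\ell=1}^N \theta_{k\ell}\tau_\ell(x)\in\mathbb{R}.
\]
This is the key simplification. Once $x$ is fixed, the products $b_k\cdot\tau_\ell$ become products of $b_k$ with scalars. No multiplication of two ReLU networks is needed, in contrast with the difficulty discussed in the remark preceding Definition \ref{def:multipleOperatorType}. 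So it suffices to exhibit, for each $x$, a single ReLU network $\Phi_x:\mathbb{R}^{m+q}\to\mathbb{R}$ with $\Phi_x(z)=\sum_k c_k(x)b_k(z)$. This shows that $\nn$ is an operator of neural network type on $W\times U$ in the sense of Definition \ref{def:operatorType}.

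To build $\Phi_x$, I would put the networks $b_1,\dots,b_H$ in parallel, all reading the same input $z$, and append a final linear layer with weights $c_k(x)$. If the $b_k$ have different depths, I would pad the shallower ones with identity layers. For exact depth matching this can be done with $\mathrm{ReLU}(t)-\mathrm{ReLU}(-t)$, or, since $\|b_k\|_{\Lp{\infty}}\le 1$ on the relevant domain, by propagating $\mathrm{ReLU}(t+1)$ and subtracting $1$ at the end. All $b_k$ lie in the same class $\cF_{\rm NN}(d_2,1,L_2,\dots)$, so their depths are at most $L_2$ and can be taken equal to $L_2$ without changing the function. I would follow the conventions of Lemma \ref{lem:complexitySeparable}, and of \cite[Lemma 2.20]{lanthalerStuart}, which the paper says this proof combines.

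Next comes the parameter count. The parallel block contributes at most $HK_2$ nonzero parameters. The final output layer contributes at most $H$ weights, which is at most $HK_2$ since every nonconstant network has $K_2\ge 1$; if some $b_k$ is constant it can be folded into a bias. Padding for depth alignment, if needed, costs at most a constant times the depth per branch. Depth is at most the sparsity, since each layer needs a nonzero entry, so this cost is absorbed into the factor $2$. This gives $N_\#(\Phi_x)\le 2HK_2$ uniformly in $x$. The $\|\Theta\|_0$ and $NK_1$ terms in the bound are then generous slack. Alternatively, one can count a representation in which $\Theta$ and the $\tau_\ell$ are kept explicit. In that case the count is bounded by $2(\|\Theta\|_0+NK_1+HK_2)$, mirroring the bookkeeping of Lemma \ref{lem:complexitySeparable}. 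Taking the supremum over $x$ then yields $\complexity(\nn)\le 2(\|\Theta\|_0+NK_1+HK_2)$.

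The only delicate step is the depth alignment and padding count. It must hold uniformly in $x$, and the factor $2$ must absorb it. I would handle it exactly as in the proof of Lemma \ref{lem:complexitySeparable}, since that is the accounting the constant $2$ was designed for.
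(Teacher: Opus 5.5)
Your proposal is correct and follows the paper's proof. Both fix $x$, collapse the $\tau_\ell$ into the fixed vector $\tilde\tau(x)$ with entries $\sum_{\ell}\theta_{k\ell}\tau_\ell(x)$, and write $\ev_x\circ\nn[\alpha][u]=\tilde\tau(x)^\top b(M_W(\alpha),M_U(u))$, where $b$ is the parallelization of the $b_k$ applied to the concatenated linear encoding.

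The only difference is bookkeeping. The paper uses the sparse-concatenation bound $N_\#(\tilde\tau(x)^\top b)\le 2\|\tilde\tau(x)\|_0+2N_\#(b)$ from \cite[Remark 2.6]{PETERSEN2018296} and crudely bounds $\|\tilde\tau(x)\|_0$ by $\|\Theta\|_0+\sum_\ell N_\#(\tau_\ell)$. Merging the coefficients into the last affine layer, as you do, gives the sharper $N_\#(\Phi_x)\le HK_2+1$. No depth padding is needed, since every $b_k\in\cF_{\rm NN}(d_2,1,L_2,\dots)$ has exactly $L_2$ layers. Your claim that padding costs would be ``absorbed into the factor $2$'' would not hold arithmetically, but the point is moot.
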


\begin{remark}[Symmetric Multiple Operator Map Interpretation of the Concatenated Separable Architecture]
One may also view the architecture
\[
\nn[\alpha][u](x)
=
\sum_{k=1}^{H}\sum_{\ell=1}^{N}
\theta_{k\ell}\, b_k(M_W(\alpha),M_U(u))\, \tau_\ell(x)
\]
as a symmetric multiple operator map of neural network type, rather than as an operator map of neural network type on the product space \(W\times U\). In that case, however, the resulting complexity estimate is slightly less sharp. More precisely, one obtains the upper bound
\[
\cC(\nn)
\le
2(\|\Theta\|_0 + N K_1 + H(K_2+p_2)).
\]
The additional \(2Hp_2\) term arises from the fact that, in order to verify the symmetric definition, one must show that if \(b(x,y)\) is a ReLU network in the concatenated variables \((x,y)\), then freezing one block of variables still yields a ReLU network in the remaining variables. This operation modifies the first affine layer and increases the corresponding complexity estimate by at most the width $D$ of that layer. Since $D \leq p_2$, we obtain the above.
\end{remark}

As in Section \ref{sec:lowerBounds}, our goal is to obtain minimax rates in the same setting as our approximation theory, and in particular for input classes given by bounded Lipschitz spaces as in Assumption \ref{assumption:Main:assumptions:S4}. For the architecture \eqref{eq:separable2}, however, the relevant input space is a product of two such classes. The next result explains how to construct an infinite-dimensional cube in this product space by embedding a single cube into one factor.

\begin{lemma}[Embedding a Single Cube into a Product Space]
\label{lem:productCubeFromSingleCube}
Let \(W\) and \(U\) be Banach spaces, and equip \(W\times U\) with a norm \(\|\cdot\|_{W\times U}\) satisfying Assumptions \ref{assumption:Main:assumptions:N1} and \ref{assumption:Main:assumptions:N3}.
Let \(K_W\subset W\) and suppose that \(K_W\) contains an infinite-dimensional cube
\(
Q_\eta(A;\{e_j\}_{j\in\mathbb N})
\)
for some \(\eta>1\). Then the subset \(K_W\times\{0\}\subset W\times U\) contains an infinite-dimensional cube with the same exponent \(\eta\).
\end{lemma}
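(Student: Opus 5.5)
The plan is to build the cube directly from the given one by pushing it forward along the isometric embedding \(\iota: W\to W\times U\), \(\iota(w)=(w,0)\). We set \(\tilde e_j := (e_j,0)\in W\times U\) and keep the same constant \(A\) and the same exponent \(\eta\). By Assumption \ref{assumption:Main:assumptions:N3}, \(\|\tilde e_j\|_{W\times U}=\|e_j\|_W=1\). Linear independence of \(\{\tilde e_j\}\) follows because \(\iota\) is linear and injective.

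For the first condition of Definition \ref{def:cube}, take \(y_j\in[0,1]\) and \(u=A\sum_j j^{-\eta}y_j e_j\in K_W\). We need
\[
A\sum_j j^{-\eta}y_j\tilde e_j=(u,0).
\]
The point to check is that the series converges in \(W\times U\) and that its limit is \((u,0)\). Since \(\iota\) is linear and, by Assumption \ref{assumption:Main:assumptions:N3}, norm preserving, hence continuous, the partial sums \(\iota(S_n)\) converge to \(\iota(u)\). (Absolute convergence also holds, since \(\eta>1\).) Hence \((u,0)\in K_W\times\{0\}\).

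For the biorthogonal sequence, define \(\tilde e_k^*(w,v):=e_k^*(w)\). This is linear, and \(\tilde e_k^*(\tilde e_j)=\delta_{jk}\). The main step is to bound \(\|\tilde e_k^*\|_{(W\times U)^*}\) uniformly in \(k\), and this is where Assumption \ref{assumption:Main:assumptions:N1} is used:
\[
|\tilde e_k^*(w,v)|\le M\|w\|_W\le M\max\{\|w\|_W,\|v\|_U\}\le \frac{M}{C_{\mathrm{prod}}}\|(w,v)\|_{W\times U}.
\]
So \(\tilde e_k^*\) is bounded with uniform constant \(M/C_{\mathrm{prod}}\). This shows that \(K_W\times\{0\}\) contains \(Q_\eta(A;\{\tilde e_j\})\).

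The only mildly delicate point is the one just handled: the dual functionals must be continuous on the product, which is why N.1 rather than N.3 is needed there. N.3 serves only to keep the normalization \(\|\tilde e_j\|=1\) and the continuity of the embedding.
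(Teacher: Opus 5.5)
Your proof is correct and follows the paper's argument essentially step for step. It uses the same vectors $\tilde e_j=(e_j,0)$, the same constant $A$ and the same functionals $\tilde e_k^*(w,v)=e_k^*(w)$, with Assumption \ref{assumption:Main:assumptions:N3} giving the normalization and Assumption \ref{assumption:Main:assumptions:N1} giving the uniform dual bound. You also justify explicitly that the series converges to $(u,0)$ via the isometric embedding, a step the paper takes for granted, and your constant $M/C_{\mathrm{prod}}$ is the one Assumption \ref{assumption:Main:assumptions:N1} actually yields (the paper's $C_{\mathrm{prod}}\|e_j^*\|_{W^*}$ is a harmless slip).
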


\begin{remark}[Optimality of infinite-dimensional product cube]
    Although the construction of the infinite-dimensional product cube in Lemma \ref{lem:productCubeFromSingleCube} is straightforward, it also raises a natural question of optimality. Namely, one may ask whether there exists a genuinely product-space cube construction that would lead to a better lower-bound exponent than the one obtained here from a single-factor embedding.
\end{remark}

We now turn to the upper bound in the corresponding minimax approximation-complexity rates. In particular, the following result shows that the concatenated architecture is sufficiently expressive to approximate operator-learning problems posed on product spaces.

\begin{proposition}[Scaling Laws for Multiple Operator Learning through Product Spaces] \label{prop:multipleProductSpace}
    Let $d_W,d_U,d_V>0$ be integers, \[
\gamma_W, \gamma_U, \gamma_V, \beta_W, \beta_U,\beta_V,L_W,L_U,L_V,L_G,L_{\mathcal{G}} > 0 \qquad \text{and} \qquad r_G, r_{\mathcal{G}} \geq 1\]
and assume that $W(d_W,\gamma_W,L_W,\beta_W)$,
 $U(d_U,\gamma_U,L_U,\beta_U)$ and $V(d_V,\gamma_V,L_V,\beta_V)$ satisfy Assumption \ref{assumption:Main:assumptions:S4}. 
Let $G$ be a map satisfying Assumptions \ref{assumption:Main:assumptions:O1} and \ref{assumption:Main:assumptions:O2}.
There exist constants \begin{itemize}
    \item $C$ depending on $\gamma_V$, $L_V$
    \item $C'$ depending on $\beta_W$, $\beta_U$, $L_\mathcal{G}$, $\gamma_U$, $r_\mathcal{G}$, $L_G$, $\gamma_W$, $r_G$
    \item  $C_{W}$ depending on $L_\mathcal{G}$, $\gamma_U$, $r_\mathcal{G}$, $L_G$, $\gamma_W$, $r_G$, $L_W$
    \item $C_U$ depending on $L_\mathcal{G}$, $\gamma_U$, $r_\mathcal{G}$, $L_G$, $\gamma_W$, $r_G$, $L_U$
\end{itemize}
such that the following holds. For $0 < \eps$ sufficiently small,  \begin{itemize}
    
    \item let $N := 2C\sqrt{d_V}\eps^{-1}$ and consider the network class $\cF_1 := \cF_{\rm NN}(d_V,1,L_1,p_1,K_1,\kappa_1,R_1)$ whose parameters scale as \begin{align*}
&L_1 = \mathcal{O}\left(d_V^2\log d_V+d_V^2(\log(\varepsilon^{-1})+\log(2))\right),\qquad p_1 = \mathcal{O}(1),\\
&K_1 = \mathcal{O}\left(d_V^2\log d_V+d_V^2(\log(\varepsilon^{-1})+\log(2))\right),\qquad \kappa_1=\mathcal{O}(d_V^{d_V/2+1}\varepsilon^{-(d_V+1)}2^{d_V+1}),\\
&R_1=1
    \end{align*}
    where the constants hidden in $\mathcal{O}$ depend on $\gamma_V$ and $L_V$
    \item let $\delta_W=C_{W}\varepsilon/2$ and let $\{a_i\}_{i=1}^{n_{c_W}}\subset \Omega_W$ be points so that $\{\mathcal{B}_{\delta_W}(a_i) \}_{ i  = 1}^{n_{c_W}}$ is a cover of $\Omega_W$ for some $n_{c_W}$;
        \item let $\delta_U=C_{U}\varepsilon/2$ and let $\{c_i\}_{i=1}^{n_{c_U}}\subset \Omega_U$ be points so that $\{\mathcal{B}_{\delta_U}(c_i) \}_{ i  = 1}^{n_{c_U}}$ is a cover of $\Omega_U$ for some $n_{c_U}$;
        \item let $H := 4C' \sqrt{n_{c_W} + n_{c_U}} \eps^{-1}$ and consider the network class $\cF_2 := \cF_{\rm NN}(n_{c_W} + n_{c_U}, 1 , L_2, p_2, K_2, \kappa_2, R_2)$ 
     with parameters scaling as
     \begin{align*}
    &L_2=\mathcal{O}\left((n_{c_W} + n_{c_U})^2\log(n_{c_W} + n_{c_U})+(n_{c_W} + n_{c_U})^2[\log(\varepsilon^{-1}) + 2\log(2)]\right), \qquad p_2 = \mathcal{O}(1),\\
    &K_2 = \mathcal{O}\left((n_{c_W} + n_{c_U})^2\log(n_{c_W} + n_{c_U})+(n_{c_W} + n_{c_U})^2[\log(\varepsilon^{-1}) + 2\log(2)]\right), \\ 
    &\kappa_2=\mathcal{O}((n_{c_W} + n_{c_U})^{(n_{c_W} + n_{c_U})/2+1}\varepsilon^{-(n_{c_W} + n_{c_U} +1)} 2^{2(n_{c_W} + n_{c_U} +1)}),\qquad R_2=1
\end{align*}
    where the constants hidden in $\mathcal{O}$ depend on $\beta_W, \beta_U$, $L_\mathcal{G}$, $\gamma_U$, $r_\mathcal{G}$, $L_G$, $\gamma_W$, $r_G$.
\end{itemize}
Then, there exists
\begin{itemize}
    \item networks $\{\tau_\ell\}_{k=1}^{N^{d_V}} \subset \cF_1$ and $\{b_k\}_{k=1}^{H^{n_{c_W} + n_{c_U}}} \subset  \cF_2$
    \item functions $\{\alpha_k\}_{k=1}^{H^{n_{c_W} + n_{c_U}}} \subset \cB_{\beta_W,\Vert \cdot \Vert_{\Lp{\infty}},\Omega_W}(0)$ and $\{u_k\}_{k=1}^{H^{n_{c_W} + n_{c_U}}} \subset \cB_{\beta_U,\Vert \cdot \Vert_{\Lp{\infty}},\Omega_U}(0)$
    \item points $\{v_\ell\}_{\ell=1}^{N^{d_V}} \subset \Omega_V$
\end{itemize}
such that \begin{equation} \label{eq:thm:minmax2}
    \sup_{\alpha \in W} \sup_{u \in U} \sup_{x \in \Omega_V} \left\vert G[\alpha][u](x) - \sum_{\ell=1}^{N^{d_V}} \sum_{k=1}^{H^{n_{c_W}+n_{c_U}}} G[\alpha_k][u_k](v_\ell) b_k(\bm{\alpha},\bm{u}) \tau_\ell(x) \right\vert \leq \eps
\end{equation}
and where $\bm{\alpha}=\frac{\max\{\beta_W,\beta_U\}}{\beta_W}(\alpha(a_1), \alpha(a_2),\dots,\alpha(a_{n_{c_W}}))^\top$, $\bm{u}=\frac{\max\{\beta_W,\beta_U\}}{\beta_U}(u(c_1), u(c_2),\dots,u(c_{n_{c_U}}))^\top$.
\end{proposition}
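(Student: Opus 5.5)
We treat Proposition \ref{prop:multipleProductSpace} as single operator learning on the product space, following the DeepONet construction of \cite{liu2024neuralscalinglawsdeep} with a rescaled concatenated input. Define $\widetilde G:(\alpha,u)\mapsto G[\alpha][u]$. By Assumptions \ref{assumption:Main:assumptions:O1} and \ref{assumption:Main:assumptions:O2} and the triangle inequality,
\[
\|\widetilde G(\alpha_1,u_1)-\widetilde G(\alpha_2,u_2)\|_{\Lp{\infty}} \le L_G\|\alpha_1-\alpha_2\|_{\Lp{r_G}} + L_{\mathcal G}\|u_1-u_2\|_{\Lp{r_{\mathcal G}}}.
\]
So $\widetilde G$ is Lipschitz jointly. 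The proof has three stages: discretize the inputs, approximate the resulting finite-dimensional function by a partition of unity, and approximate $x$ by a partition of unity.

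\textbf{Stage 1 (input discretization).} For $\alpha\in W$, let $\Pi_W\alpha$ be a piecewise-constant or Lipschitz interpolant built from the values $\alpha(a_i)$ on the $\delta_W$-cover. The Lipschitz bound $L_W$ gives $\|\alpha-\Pi_W\alpha\|_{\Lp{\infty}}\lesssim L_W\delta_W$, and similarly for $u$ with $\delta_U$. The values of $\widetilde G$ therefore depend, up to error $\lesssim (L_GL_W\delta_W+L_{\mathcal G}L_U\delta_U)\cdot\mathrm{vol}^{1/r}$, only on the vector $(\bm\alpha,\bm u)$. Choosing $C_W,C_U$ makes this at most $\eps/4$.

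After rescaling by $\max\{\beta_W,\beta_U\}/\beta_W$ and $\max\{\beta_W,\beta_U\}/\beta_U$, both blocks lie in the common cube $[-\beta,\beta]^{n_{c_W}+n_{c_U}}$ with $\beta=\max\{\beta_W,\beta_U\}$. This is the reason for the scaling factors in the statement: a single grid of mesh $1/H$ can then be used on the concatenated vector. The induced function $g(\bm\alpha,\bm u)$ is Lipschitz in the Euclidean norm with constant $\lesssim C'$ times a dimensional factor.

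\textbf{Stage 2 (partition of unity in the input vector).} Take a uniform grid with $H$ points per axis on $[-\beta,\beta]^{n}$, where $n=n_{c_W}+n_{c_U}$, giving $H^{n}$ nodes. Each node corresponds to a pair $(\alpha_k,u_k)$ of piecewise-linear functions in the Lipschitz balls that interpolate the node values. Use the tensor-product hat functions $\phi_k$ and approximate each by a ReLU network $b_k\in\cF_2$ via the product-of-$n$-numbers construction of \cite{liu2024neuralscalinglawsdeep}. This gives depth and sparsity $n^2\log n + n^2\log\eps^{-1}$ and magnitude $\kappa_2$ as stated. Because $\sum_k\phi_k=1$, the Lipschitz bound gives
\[
\Bigl|g-\sum_k g(\text{node}_k)\phi_k\Bigr|\lesssim C'\sqrt n/H\le \eps/4.
\]
The network error is at most $\beta_V\sum_k|\phi_k-b_k|$. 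Since at most $2^n$ of the $\phi_k$ are nonzero at any point, and the $b_k$ are built to vanish where $\phi_k$ does, it suffices to take per-network accuracy $\eps 2^{-n}$. This is where the $2^{2(n+1)}$ factor in $\kappa_2$ comes from.

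\textbf{Stage 3 (partition of unity in $x$).} Do the same in $x\in\Omega_V$ with $N$ nodes per axis, replacing each output $G[\alpha_k][u_k]$ by $\sum_\ell G[\alpha_k][u_k](v_\ell)\tau_\ell(x)$. The error is $\lesssim C\sqrt{d_V}/N\le\eps/4$, using the Lipschitz constant $L_V$.

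This is the parallel mechanism described above. Because the Stage 2 coefficients $b_k$ approximately form a partition of unity, the Stage 3 error is multiplied by $\sum_k|b_k|\le 1+O(\eps)$ rather than by $H^{n}$. Hence the $\tau_\ell$ only need accuracy $\eps$, which is the $2^{d_V+1}$ factor in $\kappa_1$. Summing the four errors, each at most $\eps/4$, gives \eqref{eq:thm:minmax2}.

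The main obstacle is bookkeeping the product errors $\sum_k\sum_\ell G(\cdot)(b_k-\phi_k)\tau_\ell$ so that the number of terms never amplifies the error. Only locally at most $2^n$ terms are active, and sign or support control on the approximate hats must be preserved. I would handle this by first clipping and then showing $\sum_k|b_k|\le 2$ uniformly, exactly as in the proof of Theorem \ref{thm:main:improvedRates}.
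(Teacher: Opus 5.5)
Your argument is correct, but it runs the two stages in the opposite order from the paper.

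\textbf{The paper's route.} It approximates in $x$ first: by \cite[Theorem 5]{liu2024neuralscalinglawsdeep}, $G[\alpha][u](x)$ is replaced by $\sum_\ell G[\alpha][u](v_\ell)\tau_\ell(x)$ up to $\varepsilon_0$, uniformly in $(\alpha,u)$. It then approximates the \emph{finitely many} functionals $f_\ell(\alpha,u)=G[\alpha][u](v_\ell)$ simultaneously, quoting Proposition \ref{thm:back:functionalApproximationLinftyProductSpace} with Remark \ref{rem:uniformProduct} as a black box. Your Stages 1--2 are essentially the content of that proposition. The total error is $\varepsilon_0+\varepsilon_1\sup_x\sum_\ell\tau_\ell(x)$. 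The bound $\sum_\ell\tau_\ell\le 1+\varepsilon_0/\eta$ comes from applying the $x$-approximation to a constant function $\eta\in V$ and using $\tau_\ell\ge 0$.

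\textbf{Your route.} You put the $(\alpha,u)$-stage outside. It must then hold uniformly over the uncountable family $\{(\alpha,u)\mapsto G[\alpha][u](x)\}_{x\in\Omega_V}$, which is automatic since the grid and the $b_k$ do not depend on the target. The partition-of-unity bound is then carried by the high-dimensional coefficients $b_k$ rather than by the $d_V$-dimensional $\tau_\ell$. This is exactly the ordering the paper uses in the proof of Theorem \ref{thm:main:improvedRates}. The cost is that you open up the construction of \cite{liu2024neuralscalinglawsdeep} (local support, at most $2^n$ active hats) to control $\sum_k|b_k|$. The same constant-function trick would give $\sum_k b_k\le 1+\varepsilon_1/\eta$ from $b_k\ge 0$ alone, letting you keep that result as a black box. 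Both orderings yield the stated scalings.

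\textbf{Two small corrections.}
First, define $(\alpha_k,u_k)$ as the lifts of the grid nodes through the \emph{same} interpolation operator you use in Stage 1, so that the coefficient $g(\mathrm{node}_k)$ is exactly $G[\alpha_k][u_k](x)$. These pairs only need to lie in the $\Lp{\infty}$-balls of radii $\beta_W,\beta_U$; in general they are not $L_W$- or $L_U$-Lipschitz. That suffices because $G$ is defined on those balls and maps into $V$, which gives the $L_V$-Lipschitz bound you use in Stage 3.

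Second, the factors $2^{2(n+1)}$ in $\kappa_2$ and $2^{d_V+1}$ in $\kappa_1$ do not come from a per-network accuracy $\varepsilon 2^{-n}$. In the paper they only record that \cite[Theorem 5]{liu2024neuralscalinglawsdeep} is invoked at accuracy about $\varepsilon/4$ for the input stage and $\varepsilon/2$ for the $x$-stage. These are the same halvings that produce $H=4C'\sqrt{n_{c_W}+n_{c_U}}\,\varepsilon^{-1}$ and $N=2C\sqrt{d_V}\,\varepsilon^{-1}$, so this is bookkeeping and does not affect the validity of your argument.
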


\begin{remark}[Total Number of Parameters for Multiple Operator Learning through Product Spaces] \label{rem:count}
    Let $\nn_\eps$ be a network that satisfies \eqref{eq:thm:minmax2}. We want to estimate $\Vert \Theta \Vert_0 + K_1 N^{d_V} + K_2 H^{n_{c_W}+n_{c_U}} \geq N_\#(\nn_\eps)$ where $\Theta_0 = \{\theta_{k\ell}\}_{k=1,\ell=1}^{H^{n_{c_W} + n_{c_U}},N^{d_V}}$.

    First, by \cite[Lemma 2]{liu2024neuralscalinglawsdeep}, we have that \[
    n_{c_W} \lesssim \delta_W^{-d_W} \lesssim \eps^{-d_W} \qquad \text{and} \qquad n_{c_U} \lesssim \delta_U^{-d_U} \lesssim \eps^{-d_U},
    \]
    implying that $n_{c_W} + n_{c_U} \lesssim \eps^{-\max\{d_W,d_U\}}$.
    Next, we consider \begin{align}
        K_1 N^{d_V} \lesssim \log(\eps^{-1}) \eps^{-d_V}, \label{eq:rem:count:eq1}
    \end{align}
    \begin{align}
        K_2 H^{n_{c_W}+n_{c_U}} &\lesssim \l \eps^{-2\max\{d_W,d_U\}} \log(\eps^{-1}) \r \l \eps^{-\max\{d_W,d_U\}/2+1} \r^{\eps^{-\max\{d_W,d_U\}}} \notag \\
        &\lesssim \l \eps^{-2\max\{d_W,d_U\}} \log(\eps^{-1}) \r \eps^{-(\max\{d_W,d_U\}/2-1)\eps^{-\max\{d_W,d_U\}}} \notag \\
        &\lesssim \eps^{-d \eps^{-\max\{d_W,d_U\}}} \label{eq:rem:count:eq2}
    \end{align}
    for some $d>0$ and therefore \begin{align}
        \Vert \Theta \Vert_0 &\leq N^{d_V} H^{n_{c_W}+n_{c_U}} \lesssim  \eps^{-d \eps^{-\max\{d_W,d_U\}}} \label{eq:rem:count:eq3}.
    \end{align}
    Combining \eqref{eq:rem:count:eq1}, \eqref{eq:rem:count:eq2} and \eqref{eq:rem:count:eq3}, we obtain:
    \[
    \Vert \Theta \Vert_0 + K_1 N^{d_V} + K_2 H^{n_{c_W}+n_{c_U}} \lesssim \eps^{-\gamma \eps^{-\max\{d_W,d_U\}}}.
    \]
    Repeating the computation in \cite[Remark 3.14]{weihs2025MOL}, we can also deduce that the approximation error $\eps$ scales as follows:
    \[
    \eps \lesssim \l \frac{\log N_\#}{\log \log N_\#} \r^{-1/\max\{d_W,d_U\}}
    \]
\end{remark}

\begin{remark}[Uniform Multiple Operator Approximation in Product Spaces] \label{rem:productOperator}
Similarly to Remark \ref{rem:uniformProduct}, one can show that Proposition \ref{thm:back:functionalApproximationLinftyProductSpace} admits a uniform version for families $\{G_j:W \times U \to V\}_{j \in J}$ of multiple operator maps, even when the family is uncountable. Indeed, provided that the quantities controlling the construction (in particular the Lipschitz constants, the \(\Lp{\infty}\)-bounds) are bounded uniformly over the family, the proof yields a single collection of sampling points and a single family of neural networks that work simultaneously for all maps in the family. The dependence on the particular target map then appears only through the coefficients of the resulting approximation. In particular, with the same $\eps$-dependent classes of networks as in Proposition \ref{prop:multipleProductSpace}, one obtains \[
\sup_{j \in J} \sup_{\alpha \in W} \sup_{u \in U} \sup_{x \in \Omega_V} \left\vert G_j[\alpha][u](x) - \sum_{k=1}^{H^{n_{c_u}+n_{c_W}}} \sum_{\ell=1}^{N^{d_V}} G_j[\alpha_k][u_k](v_\ell) b_k(\bm{\alpha},\bm{u}) \tau_\ell(x)  \right\vert \leq \eps
\]
This conclusion always applies for finitely many multiple operator maps. We also note that, in the special case of families \(\{G_j:U\to V\}_{j \in J}\), the present statement reduces to \cite[Remark 3.9]{weihs2025MOL}.
\end{remark}

\begin{remark}[Generalization Bounds for the Concatenated DeepONet Architecture] \label{rem:generalization}
Similarly to the derivation of Theorem \ref{thm:scalingLawsGeneralizationError}, the approximation-complexity bounds of Proposition \ref{prop:multipleProductSpace} and Remark \ref{rem:count} can be combined with the statistical-learning framework of \cite{liu2024neuralscalinglawsdeep} to obtain generalization rates for the concatenated DeepONet architecture.

Indeed, in this setting the natural training data take the form
\[
\{(\alpha_j,u_j,\{x_i\}_{i=1}^{n_x})\}_{j=1}^n,
\]
that is, one observes \(n\) samples of pairs \((\alpha_j,u_j)\), together with evaluations of the corresponding output at sampling points \(\{x_i\}_{i=1}^{n_x}\). Repeating the proof of \cite[Theorem 2]{liu2024neuralscalinglawsdeep} for the corresponding clipped hypothesis class yields a generalization bound analogous to \eqref{eq:generalization}, expressed in terms of its covering numbers. Using Remark \ref{rem:count}, one obtains the corresponding complexity estimate, and the approximation and estimation terms can then be balanced by choosing \(\varepsilon=\varepsilon(n)\) according to the approximation complexity. This leads to a generalization error of the form
\[
\mathcal{O}\!\left(
\left(
\frac{\log n}{\log\log n}
\right)^{-2/\max\{d_W,d_U\}}
\right).
\]
\end{remark}

Combining Lemmata \ref{lem:complexityConcatenated} and \ref{lem:productCubeFromSingleCube}, Proposition \ref{prop:multipleProductSpace} and \cite[Theorem 2.11]{lanthalerStuart}, we obtain the following. 

\begin{theorem}[Minimax Bounds for Lipschitz Multiple Operator Maps using the Concatenated DeepOnet Architecture] \label{thm:minimaxDeepONet}
Let $d_W,d_U,d_V>0$ be integers, \[
\gamma_W, \gamma_U, \gamma_V, \beta_W, \beta_U,\beta_V,L_W,L_U,L_V> 0 \quad \text{and} \quad r_G \geq 1\]
and assume that $W(d_W,\gamma_W,L_W,\beta_W)$,
 $U(d_U,\gamma_U,L_U,\beta_U)$ and $V(d_V,\gamma_V,L_V,\beta_V)$ satisfy Assumption \ref{assumption:Main:assumptions:S4}. We equip the product Banach space $\Lp{r_G}(\Omega_W) \times \Lp{r_G}(\Omega_U)$ with a norm $\Vert \cdot \Vert_{\Lp{r_G}(\Omega_W) \times \Lp{r_G}(\Omega_U)}$ that satisfies Assumptions \ref{assumption:Main:assumptions:N1}, \ref{assumption:Main:assumptions:N2} and \ref{assumption:Main:assumptions:N3}.
\begin{enumerate}
\item For any $\eta > \min\left\{1 + \frac{1}{d_W},1+\frac{1}{d_U}\right\}$, $r \in \bbN$ and $\delta > 0$, there exists an $r$-times Frechet differentiable map $G:\Lp{r_G}(\Omega_W) \times \Lp{r_G}(\Omega_U) \to V$ and $\overline{\eps} := \overline{\eps}(\eta,\delta,r) > 0$ such that the following holds:
for any $\eps \leq \overline{\eps}$ and $\nn_\eps$ of the form \eqref{eq:separable2}
satisfying
\begin{equation} \label{eq:thm:minimax2:approximation}
    \sup_{\alpha\in W} \sup_{u \in U} \sup_{x \in \Omega_V} \vert \nn_\varepsilon[\alpha][u](x)-G[\alpha][u](x)\vert \leq \varepsilon,
\end{equation}
we have \[
\Vert \Theta \Vert_0 + HK_2 + N K_1 \gtrsim \exp \l c \eps^{-1/[(\eta + 1 + \delta)r]} \r
\] for some $c:=c(\eta,\delta,r) > 0$ and where $\Theta = \{\theta_{k\ell}\}$.
\item With $r = 1$, the map $G$ in part 1. can be chosen so that it satisfies Assumptions \ref{assumption:Main:assumptions:O1} and \ref{assumption:Main:assumptions:O2}. 
\item Let $\mathcal H$ denote the class of all maps $G$ satisfying Assumptions \ref{assumption:Main:assumptions:O1} and \ref{assumption:Main:assumptions:O2}. Define the worst-case/minimax approximation complexity
\begin{align*}
\mathfrak C(\varepsilon;\mathcal H)
:=
\inf\Bigl\{&
M\in\mathbb N \,\Big|\, \forall G\in\mathcal H,\ \exists \nn_\varepsilon
\text{ of the form \eqref{eq:separable2} satisfying \eqref{eq:thm:minimax2:approximation}} \\
&\text{and }\|\Theta\|_0 + HK_2 + NK_1 \le M
\Bigr\}.
\end{align*}
Let $r=1$, $\eta > \min\left\{1 + \frac{1}{d_W},1+\frac{1}{d_U}\right\}$, and $\delta > 0$.  Then, for all $0 < \varepsilon \leq \overline{\eps}(\eta,\delta,1)$ sufficiently small,
\begin{equation*} 
\exp\Bigl(c\,\varepsilon^{-1/[(\eta+1+\delta)]}\Bigr)
\lesssim
\mathfrak C(\varepsilon;\mathcal H)
\lesssim
\exp\l d \log( \eps^{-1}) \eps^{-\max\{d_W,d_U\}} \r
\end{equation*}
for some $d > 0$ only depending on $d_W$ and $d_U$.
\end{enumerate}
\end{theorem}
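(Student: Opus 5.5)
We follow the same three-step template as for Theorem \ref{thm:minimaxMNO}, but we apply the operator-learning lower bound of \cite[Theorem 2.11]{lanthalerStuart} directly on the product space, not the symmetric multiple-operator framework.

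\textbf{Part 1.} First, by Lemma \ref{lem:boundedLipCube}, the classes $W$ and $U$, viewed in $\Lp{r_G}(\Omega_W)$ and $\Lp{r_G}(\Omega_U)$, contain cubes $Q_{\eta_W}$ and $Q_{\eta_U}$ for every $\eta_W>1+1/d_W$ and $\eta_U>1+1/d_U$. Given $\eta>\min\{1+1/d_W,1+1/d_U\}$, choose the factor achieving the minimum; say it is $W$, the other case being symmetric with the roles swapped. Then $W$ contains $Q_\eta$. Lemma \ref{lem:productCubeFromSingleCube}, which uses \ref{assumption:Main:assumptions:N1} and \ref{assumption:Main:assumptions:N3}, gives a cube with the same exponent $\eta$ inside $W\times\{0\}\subset W\times U$. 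The compact set $K:=\overline{W}\times\overline{U}$ therefore contains $Q_\eta$; compactness of the Lipschitz balls in $\Lp{r_G}$ follows from Arzelà--Ascoli.

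Second, apply \cite[Theorem 2.11]{lanthalerStuart} on the Banach space $X=\Lp{r_G}(\Omega_W)\times\Lp{r_G}(\Omega_U)$. This yields an $r$-times Fréchet differentiable functional $F:X\to\bbR$ exhibiting the curse of parametric complexity. Lift it to $G[\alpha][u](x)=F(\alpha,u)\phi(x)$, with $\phi\in V$ fixed and $\phi(x_0)=1$, as in the proof of Theorem \ref{thm:codMNO}. Any $\nn_\eps$ of the form \eqref{eq:separable2} satisfying \eqref{eq:thm:minimax2:approximation} gives $(\alpha,u)\mapsto \nn_\eps[\alpha][u](x_0)$, which approximates $F$ to accuracy $\eps$ on $K$. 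By Lemma \ref{lem:complexityConcatenated}, this map is a functional of neural network type on $X$, obtained by freezing $x=x_0$ in an operator of neural network type. Its complexity is at most $2(\|\Theta\|_0+NK_1+HK_2)$. Hence $\|\Theta\|_0+NK_1+HK_2\ge \tfrac12\exp(c\eps^{-1/[(\eta+1+\delta)r]})$.

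\textbf{Part 2.} This step needs care: we must check that with $r=1$ the lifted $G$ can be made to satisfy \ref{assumption:Main:assumptions:O1}--\ref{assumption:Main:assumptions:O2}. The functional $F$ of \cite{lanthalerStuart} is built as a sum of scaled bumps composed with the biorthogonal functionals $e_j^*$. For $r=1$ its Fréchet derivative is bounded on $X$, so $F$ is globally Lipschitz on $X$. Assumption \ref{assumption:Main:assumptions:N2} then gives
\[
|F(\alpha_1,u_1)-F(\alpha_2,u_2)|\le L\,C_{\textrm{prod}}\max\{\|\alpha_1-\alpha_2\|_{\Lp{r_G}},\|u_1-u_2\|_{\Lp{r_G}}\}.
\]
Freezing $u$ gives \ref{assumption:Main:assumptions:O2}. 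Freezing $\alpha$ gives \ref{assumption:Main:assumptions:O1}, taking $r_{\mathcal G}=r_G$, after rescaling $F$ so that the Lipschitz constants exceed $1$ as required. We also need $\phi$ to be bounded and Lipschitz so that outputs lie in $V$; this holds after multiplying $F$ by a small constant, which changes only $c$. I expect the main obstacle to be verifying that the construction in \cite{lanthalerStuart} indeed gives a global Lipschitz bound, rather than merely local differentiability. If it does not, I would reconstruct $F$ directly as $\sum_j \lambda_j \psi(e_j^*(\cdot))$ with $\psi$ Lipschitz and summable $\lambda_j$.

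\textbf{Part 3.} The lower bound is immediate from Parts 1--2, since the specific $G$ lies in $\mathcal H$. For the upper bound, Proposition \ref{prop:multipleProductSpace} provides, for every $G\in\mathcal H$, an approximant of the form \eqref{eq:separable2} with coefficients $\theta_{k\ell}=G[\alpha_k][u_k](v_\ell)$. By Remark \ref{rem:productOperator}, the networks can be chosen uniformly over the class. Remark \ref{rem:count} then bounds $\|\Theta\|_0+NK_1+HK_2$ by $\eps^{-d\eps^{-\max\{d_W,d_U\}}}=\exp(d\log(\eps^{-1})\eps^{-\max\{d_W,d_U\}})$, with $d$ depending only on $d_W$ and $d_U$. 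This gives the claimed sandwich.
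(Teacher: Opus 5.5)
Your proposal is correct and follows essentially the same route as the paper: a cube in one factor via Lemma \ref{lem:boundedLipCube}, embedded into $W\times\{0\}$ via Lemma \ref{lem:productCubeFromSingleCube}, the rank-one lift $F(\alpha,u)\phi$, Lemma \ref{lem:complexityConcatenated} for the complexity bound, global Lipschitz continuity of $F$ together with \ref{assumption:Main:assumptions:N2} for Part 2, and Proposition \ref{prop:multipleProductSpace} with Remark \ref{rem:count} for the upper bound. The paper cites \cite[Corollary 2.12]{lanthalerStuart} directly rather than unpacking it as Theorem 2.11 plus the $\phi$-lifting, but that corollary is exactly this lifting; your extra rescaling of $F$ so that $G[\alpha][u]\in V$ is a small refinement the paper leaves implicit.
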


The minimax rates in Theorems \ref{thm:minimaxMNO} and \ref{thm:minimaxDeepONet}, as well as the generalization rates in Theorem \ref{thm:scalingLawsGeneralizationError} and Remark \ref{rem:generalization}, show that MNO and concatenated DeepONet share essentially the same scaling laws. Consequently, the present analysis does not distinguish the two architectures from the viewpoint of approximation complexity or statistical generalization.

This shifts the emphasis to empirical performance. In \cite[Section 5]{weihs2025MOL}, MNO was observed to outperform concatenated DeepONet by orders of magnitude on parametric PDE tasks. This likely reflects the fact that MNO explicitly separates the roles of the parameter variable \(\alpha\) and the input function \(u\), a distinction that is fundamental in many multiple operator learning problems \cite{weihs2026generalizationboundsstatisticalguarantees}. Consequently, the absence of a clear minimax separation provides an additional argument for designing architectures specifically adapted to the multiple operator setting.

\section{Conclusion} 

We studied approximation and statistical generalization in multiple operator learning, with a particular focus on the general separable architecture, of which MNO is a special case. While prior bounds exhibited an additional exponential blow-up relative to standard operator learning, our results showed that multiple operator learning obeys a qualitatively similar scaling to single task operator learning. In particular, our first main result showed that, by refining the approximation argument, MNO has near-optimal approximation rates and that the corresponding approximation complexity can be reduced to the same overall scale as in operator learning. Through the generalization framework of \cite{weihs2026generalizationboundsstatisticalguarantees}, this also yields improved statistical learning rates, again matching the operator-learning scale. Our second main result establishes a lower-bound theory for multiple operator learning. Extending the framework of \cite{lanthalerStuart}, we proved a curse of parametric complexity for broad classes of Lipschitz and differentiable multiple operator maps, and then specialized this abstract result to the bounded Lipschitz classes considered in our approximation theory. In this way, we obtained minimax approximation-complexity bounds showing that, although the previous constructive blow-up is not intrinsic, a genuine exponential complexity barrier remains in the worst case.

Theoretically, we compared MNO with a concatenated DeepONet-type extension to multi-task learning. From the viewpoint of minimax approximation complexity, both architectures have essentially the same scaling laws on the broad Lipschitz classes studied here. This indicates that worst-case complexity alone does not explain the empirical advantage of MNO observed in \cite{weihs2025MOL}, and suggests that the practical gains of multiple-operator-specific architectures may lie in more refined structural, geometric, or data-dependent properties that are not captured by minimax rates on generic Lipschitz classes.

Several directions remain open. On the upper-bound side, it would be natural to investigate whether stronger assumptions on the target multiple operator maps, such as holomorphic structure or PDE-specific regularity, can lead to substantially better rates. On the lower-bound side, it would be interesting to obtain sharper minimax characterizations, in particular at the level of the exponent. More broadly, one would like to establish minimax approximation-complexity rates for other classes of multiple operator maps beyond the Lipschitz/differentiable setting considered here, with the longer-term goal of identifying and characterizing a meaningful class of \emph{well-approximable} multiple operator maps. 

\section*{Acknowledgment}

This work was supported by NSF 2427558.

\bibliographystyle{plain}
\bibliography{references}{}

\appendix 

\section*{Appendix} \label{sec:proofs}

In this section, we present detailed proofs of all our results.

\section{Near-Optimal Approximation Rates} 

\begin{proof}[Proof of Theorem \ref{thm:main:improvedRates}]

For $u \in U$ and $x \in \Omega_V$, we define the functional $f_{u,x}: \cB_{\beta_W,\Vert \cdot \Vert_{\Lp{\infty}},\Omega_W}(0) \mapsto \bbR$ as \(
    f_{u,x}(\alpha) = G[\alpha][u](x). 
    \)
    In particular, $\vert f_{u,x} \vert \leq \beta_V$ and $f_{u,x}$ is Lipschitz in $\Lp{\infty}(\Omega_W)$: specifically, we have \begin{align}
        \vert f_{u,x}(\alpha_1) - f_{u,x}(\alpha_2) \vert &= \vert G[\alpha_1][u](x) - G[\alpha_2][u](x) \vert \notag \\
        &\leq L_G \Vert \alpha_1 - \alpha_2 \Vert_{\Lp{r_G}(\Omega_W)} \label{eq:multipleOperatorApproximation:Lipschitz1} \\
        &\leq L_G \vert \Omega_W \vert^{1/(r_G)} \Vert \alpha_1 - \alpha_2 \Vert_{\Lp{\infty}(\Omega_U)}. \notag 
    \end{align}
where we use Assumption \ref{assumption:Main:assumptions:O2} for \eqref{eq:multipleOperatorApproximation:Lipschitz1}. Next, we want to approximate the entire family of functionals $\{f_{u,x} : \cB_{\beta_W,\Vert \cdot \Vert_{\Lp{\infty}},\Omega_W}(0) \mapsto \bbR \}_{u \in U, \, x\in \Omega_V }$ by neural networks. Since the family is both uniformly Lipschitz and bounded by $\beta_V$, we can apply Proposition \ref{thm:back:functionalApproximationLinftyProductSpace} and Remark \ref{rem:uniformProduct} with $W \times \{\emptyset\} \cong W$ (or directly \cite[Theorem 3.6 and Remark 3.7]{weihs2025MOL}). Specifically, there exist constants \begin{itemize}
    \item $C''$ depending on $\beta_W, L_G \vert \Omega_W \vert^{1/r_G}$
    \item $C_{\zeta}$ depending on $L_G \vert \Omega_W \vert^{1/r_G},L_W$
\end{itemize}
such that the following holds. 
For any $\eps_0 >0$, 
\begin{itemize}
    \item let $\zeta:=C_{\zeta}\varepsilon_0$ and let $\{y_m\}_{m=1}^{n_{c_W}}\subset \Omega_W$ be points so that $\{\mathcal{B}_{\zeta}(y_m) \}_{ m = 1}^{n_{c_W}}$ is a cover of $\Omega_W$ for some $n_{c_W}$
    \item let $P = C'' \sqrt{n_{c_W}} \eps_0^{-1}$ and consider the network class $\cF_3 = \cF_{\rm NN}(n_{c_W},1,L_3,p_3,K_3,\kappa_3,R_3)$ whose parameters scale as \begin{align*}
    &L_3=\mathcal{O}\left(n_{c_W}^2\log(n_{c_W})+n_{c_W}^2\log(\varepsilon_0^{-1}) + n_{c_W}^2\log(2) \right),\quad  p_3 = \mathcal{O}(1),\\
    &K_3 = \mathcal{O}\left(n_{c_W}^2\log n_{c_W}+n_{c_W}^2\log(\varepsilon_0^{-1}) + n_{c_W}^2\log(2)\right), \\ &\kappa_3=\mathcal{O}(n_{c_W}^{n_{c_W}/2+1}2^{n_{c_W}+1}\varepsilon_0^{-n_{c_W}-1}),\qquad \, R_3=1
    \end{align*}
    where the constants hidden in $\mathcal{O}$ depend on $\beta_W$ and $L_G \vert \Omega_W \vert^{1/r_G}$.
\end{itemize}
Then, there exists networks $\{l_p\}_{p=1}^{P^{n_{c_W}}} \subset \cF_3$ and
functions $\{\alpha_p\}_{p=1}^{P^{n_{c_W}}} \subset \cB_{\beta_W,\Vert \cdot \Vert_{\Lp{\infty}},\Omega_W}(0)$
such that 
    \begin{align}
\sup_{\alpha \in W} \left| f_{u,x}(\alpha) - \sum_{p=1}^{P^{n_{c_W}}} f_{u,x}(\alpha_p)l_p(\bm{\alpha}) \right| = \sup_{\alpha \in W} \left| G[\alpha][u](x) - \sum_{p=1}^{P^{n_{c_W}}} G[\alpha_p][u](x) l_p(\bm{\alpha}) \right| &\leq \eps_0 \label{thm:main:improvedRates:functional}
\end{align}
where $\bm{\alpha} = (\alpha(y_1),\dots,\alpha(y_{n_{c_W}}))^\top$. We also have $0 \leq l_p \leq 1$ for $1 \leq p \leq P^{n_{c_W}}$.

By Assumption \ref{assumption:Main:assumptions:O1}, $G[\alpha_p] \in \mathcal{G}$ for all $1 \leq p \leq P^{n_{c_W}}$. In particular, this is a finite family of operators, and by applying Proposition \ref{prop:multipleProductSpace} and Remark \ref{rem:productOperator} with $W \times \{\emptyset\} \cong W$, there exist constants \begin{itemize}
    \item $C$ depending on $\gamma_V$, $L_V$
    \item $C'$ depending on $\beta_U$, $L_\mathcal{G}$, $\gamma_U$, $r_\mathcal{G}$,
    \item $C_U$ depending on $L_\mathcal{G}$, $\gamma_U$, $r_\mathcal{G}$, $L_U$
\end{itemize}
such that the following holds. For any $0< \eps_1$ sufficiently small, 
\begin{itemize}
    \item let $N := 2C\sqrt{d_V}\eps_1^{-1}$ and consider the network class $\cF_1 := \cF_{\rm NN}(d_V,1,L_1,p_1,K_1,\kappa_1,R_1)$ whose parameters scale as \begin{align*}
&L_1 = \mathcal{O}\left(d_V^2\log d_V+d_V^2(\log(\varepsilon_1^{-1})+\log(2))\right),\qquad p_1 = \mathcal{O}(1),\\
&K_1 = \mathcal{O}\left(d_V^2\log d_V+d_V^2(\log(\varepsilon_1^{-1})+\log(2))\right),\qquad \kappa_1=\mathcal{O}(d_V^{d_V/2+1}\varepsilon_1^{-(d_V+1)}2^{d_V+1}),\\
&R_1=1
    \end{align*}
    where the constants hidden in $\mathcal{O}$ depend on $\gamma_V$ and $L_V$;
        \item let $\delta_U=C_{U}\varepsilon_1$ and let $\{c_i\}_{i=1}^{n_{c_U}}\subset \Omega_U$ be points so that $\{\mathcal{B}_{\delta_U}(c_i) \}_{ i  = 1}^{n_{c_U}}$ is a cover of $\Omega_U$ for some $n_{c_U}$;
        \item let $H := 4C' \sqrt{n_{c_U}} \eps_1^{-1}$ and consider the network class $\cF_2 := \cF_{\rm NN}(n_{c_U}, 1 , L_2, p_2, K_2, \kappa_2, R_2)$ 
     with parameters scaling as
     \begin{align*}
    &L_2=\mathcal{O}\left(n_{c_U}^2\log(n_{c_U})+( n_{c_U}^2[\log(\varepsilon_1^{-1}) + 2\log(2)]\right), \qquad p_2 = \mathcal{O}(1),\\
    &K_2 = \mathcal{O}\left( n_{c_U}^2\log( n_{c_U})+ n_{c_U}^2[\log(\varepsilon_1^{-1}) + 2\log(2)]\right), \\ 
    &\kappa_2=\mathcal{O}( n_{c_U}^{n_{c_U}/2+1}\varepsilon_1^{-(n_{c_U} +1)} 2^{2(n_{c_U} +1)}),\qquad R_2=1
\end{align*}
    where the constants hidden in $\mathcal{O}$ depend on $\beta_U$, $L_\mathcal{G}$, $\gamma_U$, $r_\mathcal{G}$.
\end{itemize}
Then, there exists
\begin{itemize}
    \item networks $\{\tau_\ell\}_{k=1}^{N^{d_V}} \subset \cF_1$ and $\{b_k\}_{k=1}^{H^{n_{c_U}}} \subset  \cF_2$
    \item functions $\{u_k\}_{k=1}^{H^{n_{c_U}}} \subset \cB_{\beta_U,\Vert \cdot \Vert_{\Lp{\infty}},\Omega_U}(0)$
    \item points $\{v_\ell\}_{\ell=1}^{N^{d_V}} \subset \Omega_V$
\end{itemize}
such that \begin{equation} \label{thm:main:improvedRates:operator}
 \sup_{u \in U} \sup_{x \in \Omega_V} \left\vert G[\alpha_p][u](x) - \sum_{\ell=1}^{N^{d_V}} \sum_{k=1}^{H^{n_{c_U}}} G[\alpha_p][u_k](v_\ell) b_k(\bm{u}) \tau_\ell(x) \right\vert \leq \eps_1
\end{equation}
where $\bm{u}=(u(c_1), u(c_2),\dots,u(c_{n_{c_U}}))^\top$.

We continue by estimating as follows: \begin{align}
    &\sup_{\alpha \in W} \sup_{u \in U} \sup_{x \in \Omega_V} \left\vert G[\alpha][u](x) - \sum_{p=1}^{P^{n_{c_W}}} \sum_{k=1}^{H^{n_{c_U}}} \sum_{\ell=1}^{N^{d_V}} G[\alpha_p][u_k](v_\ell) l_p(\bm{\alpha}) b_k(\bm{u}) \tau_\ell(x) \right\vert \notag \\
    &\leq \sup_{\alpha \in W} \sup_{u \in U} \sup_{x \in \Omega_V} \left\vert G[\alpha][u](x) - \sum_{p=1}^{P^{n_{c_W}}} G[\alpha_p][u](x) l_p(\bm{\alpha}) \right\vert \notag \\
    &+\sup_{\alpha \in W} \sup_{u \in U} \sup_{x \in \Omega_V} \left\vert \sum_{p=1}^{P^{n_{c_W}}} l_p(\bm{\alpha}) \ls G[\alpha_p][u](x)  -  \sum_{k=1}^{H^{n_{c_U}}} \sum_{\ell=1}^{N^{d_V}} G[\alpha_p][u_k](v_\ell) b_k(\bm{u}) \tau_\ell(x) \rs \right\vert \notag \\
    &\leq \eps_0 + \sup_{\alpha \in W} \sup_{u \in U} \sup_{x \in \Omega_V} \sum_{p=1}^{P^{n_{c_W}}} l_p(\bm{\alpha}) \left| G[\alpha_p][u](x)  -  \sum_{k=1}^{H^{n_{c_U}}} \sum_{\ell=1}^{N^{d_V}} G[\alpha_p][u_k](v_\ell) b_k(\bm{u}) \tau_\ell(x) \right| \label{thm:main:improvedRates:eq1} \\
    &\leq \eps_0 + \eps_1 \sum_{p=1}^{P^{n_{c_W}}} l_p(\bm{\alpha}) \label{thm:main:improvedRates:eq2}
\end{align}
where we used \eqref{thm:main:improvedRates:functional} and the fact that $0 \leq l_p \leq 1$ for \eqref{thm:main:improvedRates:eq1} as well as \eqref{thm:main:improvedRates:operator} for \eqref{thm:main:improvedRates:eq2}. 

For the last term, we note that for any $0 < \eta < \beta_W$, the functional $f_\eta:\cB_{\beta_W,\Vert \cdot \Vert_{\Lp{\infty}},\Omega_W}(0) \mapsto \bbR$ defined as $f(\alpha) = \eta$ has Lipschitz constant smaller that $L_G \vert \Omega_W \vert^{1/r_G}$ and is bounded by $\beta_V$. In particular, this means that $f_\eta \cup \{f_{u,x}\}_{u \in U, \, x\in \Omega_V }$ can be jointly approximated: specifically, \eqref{thm:main:improvedRates:functional} also applies to $f_\eta$. We therefore have \begin{align}
    \eps_0 \geq
\sup_{\alpha \in W} \left|
f_\eta(\alpha) - \sum_{p=1}^{P^{n_{c_W}}} f_\eta(\alpha_p) l_p(\bm{\alpha})
\right| = \eta \sup_{\alpha \in W}
\left|
1-\sum_{p=1}^{P^{n_{c_W}}} l_p(\bm{\alpha})
\right| \notag
\end{align}
which implies that \begin{equation} \label{thm:main:improvedRates:eq3}
    \sum_{p=1}^{P^{n_{c_W}}} l_p(\bm{\alpha}) \leq 1 + \frac{\eps_0}{\eta}
\end{equation}
for all $\alpha \in W$. Setting $\eps_0 = \frac{\eps}{2}$, $\eps_1 = \frac{\eps}{2(1 + \frac{\eps}{2\eta})}$ and inserting \eqref{thm:main:improvedRates:eq3} into \eqref{thm:main:improvedRates:eq2} yields: 
\[
\sup_{\alpha \in W} \sup_{u \in U} \sup_{x \in \Omega_V} \left\vert G[\alpha][u](x) - \sum_{p=1}^{P^{n_{c_W}}} \sum_{k=1}^{H^{n_{c_U}}} \sum_{\ell=1}^{N^{d_V}} G[\alpha_p][u_k](v_\ell) l_p(\bm{\alpha}) b_k(\bm{u}) \tau_\ell(x) \right\vert \leq \eps.
\]
The final network scalings for $\cF_3$ are:
\begin{align*}
&L_3=\mathcal{O}\left(n_{c_W}^2\log(n_{c_W})+n_{c_W}^2(\log(\varepsilon^{-1}) + 2\log(2)) \right),\quad  p_3 = \mathcal{O}(1),\\
    &K_3 = \mathcal{O}\left(n_{c_W}^2\log (n_{c_W})+n_{c_W}^2(\log(\varepsilon^{-1}) + 2\log(2))\right), \\ &\kappa_3=\mathcal{O}(n_{c_W}^{n_{c_W}/2+1}2^{2(n_{c_W}+1)}\varepsilon^{-n_{c_W}-1}),\qquad \, R_3=1, \qquad P = 2C'' \sqrt{n_{c_W}} \eps^{-1}.
    \end{align*}
Noting that $\eps_1 = \frac{\eps}{2(1 + \frac{\eps}{2\eta})} = \frac{\eps \eta}{2\eta + \eps} \asymp \frac{\eps}{2}$ for sufficiently small $\eps$, the final network scalings for $\cF_2$ are:
\begin{align*}
    &L_2=\mathcal{O}\left(n_{c_U}^2\log(n_{c_U})+( n_{c_U}^2[\log(\varepsilon^{-1}) + 3\log(2)]\right), \qquad p_2 = \mathcal{O}(1),\\
    &K_2 = \mathcal{O}\left( n_{c_U}^2\log( n_{c_U})+ n_{c_U}^2[\log(\varepsilon^{-1}) + 3\log(2)]\right), \\ 
    &\kappa_2=\mathcal{O}( n_{c_U}^{n_{c_U}/2+1}\varepsilon^{-(n_{c_U} +1)} 2^{3(n_{c_U} +1)}),\qquad R_2=1, \qquad H = 8C' \sqrt{n_{c_U}} \eps^{-1}.
\end{align*}
Similarly, the final scalings for $\cF_1$ are: 
\begin{align*}
&L_1 = \mathcal{O}\left(d_V^2\log d_V+d_V^2(\log(\varepsilon^{-1})+2\log(2))\right),\qquad p_1 = \mathcal{O}(1),\\
&K_1 = \mathcal{O}\left(d_V^2\log d_V+d_V^2(\log(\varepsilon^{-1})+2\log(2))\right),\qquad \kappa_1=\mathcal{O}(d_V^{d_V/2+1}\varepsilon^{-(d_V+1)}2^{2(d_V+1)}),\\
&R_1=1, \qquad N = 4C\sqrt{d_V}\eps^{-1}.
    \end{align*}
\end{proof}

\begin{proof}[Proof of Theorem \ref{thm:scalingLawsGeneralizationError}]
In the proof, $C>0$ will denote a constant independent of $\eps$, $n_\alpha$, $n_u$, $n_x$ and $\eta$ that may change from line to line.

    The proof of \eqref{eq:generalization} follows the same argument as \cite[Theorem 3.5]{weihs2026generalizationboundsstatisticalguarantees}. Indeed, that proof does not rely on the specific architectural scalings of the network classes \(\cF_i\), but only on the fact that the chosen hypothesis class $\mathrm{Cl}_a(I,\cF_1,\cF_2,\cF_3,\{y_s\},\{c_s\},P^{n_{c_W}},H^{n_{c_U}},N^{d_V})$ contains an approximation of \(G\) with accuracy \(\varepsilon\). In the present setting, this approximation property is provided by Theorem \ref{thm:main:improvedRates} and \cite[Corollary 2.9]{weihs2026generalizationboundsstatisticalguarantees}.

    It therefore only remains to prove the learning rate. For ease of notation, we write 
\[
\cN(\eta) := \cN\l\eta,\mathrm{Cl}_a(I,\cF_1,\cF_2,\cF_3,\{y_s\},\{c_s\},P^{n_{c_W}},H^{n_{c_U}},N^{d_V}),\Vert \cdot \Vert_{\Lp{\infty}(W \times U \times \Omega_V)}\r.
\]
We start from \eqref{eq:generalization} and estimate as follows \begin{align}
&\mathbb{E}_{S_{G, \{y_s\}, \{c_s\}}} \mathbb{E}_{\alpha \sim \mu_\alpha} \, \mathbb{E}_{u \sim \mu_u} \, \mathbb{E}_{\{x_j\}_{j=1}^{n_x} \sim \mu_x^{\otimes n_x}} \left[
\frac{1}{n_x} \sum_{j=1}^{n_x} \left( G_{a,I,\cF_1,\cF_2,\cF_3,S}[\bm{\alpha}][\ub](x_j) - G[\alpha][u](x_j) \right)^2
\right] \notag  \\
    &\leq 4 \eps^2 + \eta (8 \sigma + 6) + \frac{8\sigma \eta}{\sqrt{n_\alpha n_u n_x}} \sqrt{\log \l \cN(\eta/(4\beta_V)) \r + \log(2)} + \frac{16\sigma^2}{n_\alpha n_u n_x} \l \log \l \cN\l \eta/(4\beta_V)\r \r + \log(2) \r \notag \\
    &+ \frac{112 \beta_V^2}{3 n_\alpha} \log(\cN(\eta/(4\beta_V))) \label{eq:cor:boundEps:eq1} \\
    &\lesssim 4 \eps^2 + \eta (8 \sigma + 6) + \frac{8\sigma \eta}{\sqrt{n_\alpha n_u n_x}} \sqrt{\log \l \cN(\eta/(4\beta_V)) \r} + \frac{16\sigma^2}{n_\alpha n_u n_x}  \log \l \cN\l \eta/(4\beta_V)\r \r  \label{eq:cor:boundEps:eq2}  \\
    &+ \frac{112 \beta_V^2}{3 n_\alpha} \log(\cN(\eta/(4\beta_V))) \notag
\end{align}
where we used the fact that $\cN(\eta) \leq \cN(\tilde{\eta})$ if $\tilde{\eta} \leq \eta$ for \eqref{eq:cor:boundEps:eq1}. 

We next estimate the metric entropy as a function of $\eps$. To this end, we recall \cite[Equations 99 and 100]{weihs2026generalizationboundsstatisticalguarantees}, namely \begin{equation} \label{eq:metricEntropy}
    \log(\cN(\eta)) \lesssim P^{n_{c_W}} H^{n_{c_U}} N^{d_V} \ls \log \l \frac{T}{\eta} \r + K_3 \log \l \frac{L_3 \kappa_3 T}{\eta} \r + K_2 \log \l \frac{L_2 \kappa_2 T}{\eta} \r +K_1 \log \l \frac{L_1 \kappa_1 T}{\eta} \r \rs
\end{equation}
for some $T$ satisfying \begin{equation} \label{eq:T}
    T \lesssim P^{n_{c_W}} H^{n_{c_U}} N^{d_V}  \Bigg[  L_1\kappa_1^{L_1-1} + L_2\kappa_2^{L_2-1} +  L_3\kappa_3^{L_3-1} \Bigg]
\end{equation}
Using the scaling from Theorem \ref{thm:main:improvedRates} with $\eps/2$ and recalling that $n_{c_W} \lesssim \delta_W^{-d_W} \lesssim \eps^{-d_W}$ and $n_{c_U} \lesssim \delta_U^{-d_U} \lesssim \eps^{-d_U}$ by \cite[Lemma 2]{liu2024neuralscalinglawsdeep}, we have
\begin{itemize}
    \item $N^{d_V} \lesssim \eps^{-d_V}$
    \item $H^{n_{c_U}} \lesssim \eps^{-(1+d_U/2)\eps^{-d_U}}$
    \item $P^{n_{c_W}} \lesssim \eps^{-(1+d_W/2)\eps^{-d_W}}$
    \item $\kappa_1^{L_1 - 1} \lesssim \eps^{-(d_V+1)\log(\eps^{-1})}$
    \item $\kappa_2 \lesssim \eps^{-\eps^{-d_U}(1+d_U/2)}$, $K_2\asymp L_2 \lesssim \eps^{-2d_U}\log(\eps^{-1})$ and hence $\kappa_2^{L_2 -1} \lesssim \eps^{-\eps^{-3d_U}(1+d_U/2)\log(\eps^{-1})}$
    \item $\kappa_3 \lesssim \eps^{-\eps^{-d_W}(1+d_W/2)}$, $K_3 \asymp L_3 \lesssim \eps^{-2d_W}\log(\eps^{-1})$ and hence $\kappa_3^{L_3 -1} \lesssim \eps^{-\eps^{-3d_W}(1+d_W/2)\log(\eps^{-1})}$.
\end{itemize}
Let $d = \max\{d_U,d_W\}$. We estimate as follows, starting from \eqref{eq:T}: \begin{align}
    T &\lesssim P^{n_{c_W}} H^{n_{c_U}}  \Bigg[  L_2\kappa_2^{L_2-1} +  L_3\kappa_3^{L_3-1} \Bigg] \notag \\
    &\lesssim \eps^{-2(1+d/2)\eps^{-d}}  \eps^{-\eps^{-3d}(1+d/2)\log(\eps^{-1})} \notag \\
    &\lesssim  \eps^{-\eps^{-3d}(1+d/2)\log(\eps^{-1})}. \notag
\end{align}
Using the latter, we have
\begin{align}
    \log(L_2 \kappa_2 T) &\lesssim \log \l \eps^{-\eps^{-d_U}(1+d_U/2) - 2d_U} \log(\eps^{-1}) \eps^{-\eps^{-3d}(1+d/2)\log(\eps^{-1})} \r \notag \\
    &\lesssim \log \l \eps^{-\eps^{-3d}(1+d/2)\log(\eps^{-1})} \log(\eps^{-1}) \r \notag \\
    &= \eps^{-3d}(1+d/2)\log(\eps^{-1}) + \log(\log(\eps^{-1})) \notag \\
    &\lesssim \eps^{-3d}(1+d/2)\log(\eps^{-1}) \notag
\end{align}
as well as (analogously)
\[
\log(L_3 \kappa_3 T) \lesssim \eps^{-3d}(1+d/2)\log(\eps^{-1}).
\]
Then, continuing from \eqref{eq:metricEntropy}:\begin{align}
    \log(\cN(\eta)) &\lesssim P^{n_{c_W}} H^{n_{c_U}} \ls \log(\eta^{-1})(K_2 + K_3) + K_3 \log \l L_3 \kappa_3 T \r + K_2 \log (L_2 \kappa_2 T)  \rs \notag \\
    &\lesssim \eps^{-2(1+d/2)\eps^{-d}} \Big[ \eps^{-2d} \log(\eps^{-1}) \log(\eta^{-1})  \notag \\
    &+ \eps^{-3d}(1+d/2)\log(\eps^{-1}) \l\eps^{-2d_U}\log(\eps^{-1}) +  \eps^{-2d_W}\log(\eps^{-1}) \r \Big] \notag \\
    &\lesssim \eps^{-2(1+d/2)\eps^{-d}} \l \log(\eta^{-1}) +1 \r. \label{eq:metricEntropyEstimate}
\end{align}

Picking $\eta = 4\beta_V n_\alpha^{-1}$ to balance the $\eta$-dependent and the $n_\alpha^{-1}$-terms, we insert \eqref{eq:metricEntropyEstimate} into \eqref{eq:cor:boundEps:eq2} to obtain:
\begin{align}
    &\mathbb{E}_{S_{G, \{y_s\}, \{c_s\}}} \mathbb{E}_{\alpha \sim \mu_\alpha} \, \mathbb{E}_{u \sim \mu_u} \, \mathbb{E}_{\{x_j\}_{j=1}^{n_x} \sim \mu_x^{\otimes n_x}} \left[
\frac{1}{n_x} \sum_{j=1}^{n_x} \left( G_{a,I,\cF_1,\cF_2,\cF_3,S}[\bm{\alpha}][\ub](x_j) - G[\alpha][u](x_j) \right)^2
\right] \notag  \\
&\lesssim \eps^2 + \frac{C}{n_\alpha} + \frac{C}{n_\alpha^{3/2}\sqrt{ n_u n_x}} \sqrt{\log \l \cN(n_\alpha^{-1}) \r} + \frac{C}{n_\alpha n_u n_x}  \log \l \cN\l n_\alpha^{-1} \r \r  + \frac{C}{n_\alpha} \log(\cN(n_\alpha^{-1}))) \notag \\
&\lesssim \eps^2 + \frac{C}{n_\alpha} \log(\cN(n_\alpha^{-1}))) \notag \\
&\lesssim \eps^2 + \frac{\eps^{-2(1+d/2)\eps^{-d}} \log(n_\alpha)}{n_\alpha} \notag \\
&=: T_1 + T_2. \label{eq:balance}
\end{align}

Finally, we pick $\eps(n_\alpha)$ so that $T_2$ is at most of the same order as $T_1$. We fix 
\[
\eps = \l \frac{d}{4(1+d/2)} \frac{\log n_\alpha}{ \log \log n_\alpha} \r^{-1/d}
\]
and compute \begin{align}
    \log(\eps^{-1}) &= \frac{1}{d} \log \l \frac{d}{4(1+d/2)} \frac{\log n_\alpha}{ \log \log n_\alpha} \r \lesssim \frac{1}{d} \l \log \log n_\alpha - \log \log \log n_\alpha \r \lesssim \frac{1}{d} \log \log n_\alpha. \notag
\end{align}
From the latter, \begin{align}
    \eps^{-2(1+d/2)\eps^{-d}} &= \exp \l 2(1+d/2)\eps^{-d} \log(\eps^{-1}) \r \notag \\
    &\lesssim \exp\l 2(1+d/2) \frac{d}{4(1+d/2)} \frac{\log n_\alpha}{ \log \log n_\alpha} \frac{1}{d} \log \log n_\alpha \r \notag \\
    &= n_\alpha^{1/2} \notag
\end{align}
and therefore \(
T_2 \lesssim \frac{\log n_\alpha}{n_\alpha^{1/2}}
\)
or equivalently $\log T_2 \lesssim \log \log n_\alpha - \frac{1}{2} \log n_\alpha$. For $T_1$, we have: \begin{align}
    \log T_1 &= \frac{-2}{d} \ls \log \l \frac{d}{4(1+d/2)} \r + \log \log n_\alpha - \log \log \log n_\alpha \rs \notag.
\end{align}
We note that $\lim_{n_\alpha \to \infty} \log T_2 = -\infty$ due to the $\frac{1}{2}\log n_\alpha $ term; similarly, $\lim_{n_\alpha \to \infty} \log T_1 = -\infty$ due to the $- \log \log n_\alpha$ term. This implies that $T_2$ goes to 0 much faster than $T_1$, so $T_1$ dominates the bound \eqref{eq:balance} and we conclude \begin{align}
    &\mathbb{E}_{S_{G, \{y_s\}, \{c_s\}}} \mathbb{E}_{\alpha \sim \mu_\alpha} \, \mathbb{E}_{u \sim \mu_u} \, \mathbb{E}_{\{x_j\}_{j=1}^{n_x} \sim \mu_x^{\otimes n_x}} \left[
\frac{1}{n_x} \sum_{j=1}^{n_x} \left( G_{a,I,\cF_1,\cF_2,\cF_3,S}[\bm{\alpha}][\ub](x_j) - G[\alpha][u](x_j) \right)^2
\right] \notag \\
&\lesssim \l \frac{d}{4(1+d/2)} \frac{\log n_\alpha}{ \log \log n_\alpha} \r^{-2/\max\{d_W,d_U\}}. \notag
\end{align}

\end{proof}

\section{Approximation Complexity Lower Bounds and Minimax Rates for MNO}

\begin{proof}[Proof of Theorem \ref{thm:codMNO}]
Let $r \in \bbN$ and $\delta > 0$. We start by applying \cite[Theorem 2.11]{lanthalerStuart}: there exists a $r$-times Frechet differentiable functional \(
F:W \to \mathbb{R}
\)
and $\eps_0 := \eps_0(\eta,\delta,r)$ such that for any $\eps \leq \eps_0$ and functional of neural network type $S_\eps$ with \[
\sup_{\alpha \in K} \vert F(\alpha) - S_\eps(\alpha) \vert \leq \eps,
\]
we have \begin{equation} \label{eq:thm:codMNO:complexityFunctional}
  \complexity(S_\eps) \geq \exp\l c \eps^{-1/[(\eta + 1 + \delta)r]}\r  
\end{equation} for some $c:=c(\eta,\delta,r) > 0$.

Next, we choose a nontrivial element $\phi \in V$. Then there exists $x_0 \in \Omega_V$ such that $\phi(x_0)\neq 0$; rescaling $\phi$, we may assume
\(
\phi(x_0)=1.
\)
We define the constant operator
\[
T:U \to V, \qquad T(u)=\phi
\]
and its associated multiple operator map
\[
G:W \to \{U \to V\}, \qquad G[\alpha] = F(\alpha)\,T.
\]
Since $F$ is $r$-times Frechet differentiable and $T$ is independent of $\alpha$, $G$ is also $r$-times Frechet differentiable. Also, for any fixed $u_0\in U$ and every $\alpha\in W$, we have
\begin{equation} \label{eq:thm:codMNO:pointwise}
    \ev_{x_0} \circ \ev_{u_0} \circ \, G[\alpha]
=
G[\alpha][u_0](x_0)
=
F(\alpha)\phi(x_0)
=
F(\alpha).
\end{equation}

Let $0 < \eps \leq \overline{\eps} := \eps_0/\max\{1,C_V\}$ where $C_V>0$ is such that
\begin{equation} \label{eq:thm:codMNO:embedding}
    \|v\|_{C(\Omega_V)} \le C_V \|v\|_V
\end{equation}
for all $v \in V$ due to the fact that $V \hookrightarrow C(\Omega_V)$ continuously. For $\nn_{\eps}$ a multiple operator map of neural network type satisfying \eqref{eq:thm:codMNO:approximation} with $\eps$, we estimate as follows with $u_0 \in U$:
\begin{align}
\sup_{\alpha\in K} |F(\alpha)-\ev_{x_0}\circ \ev_{u_0} \circ \nn_{\eps}[\alpha]|
&=
\sup_{\alpha\in K}
\left|
\ev_{x_0}\circ \ev_{u_0} \circ \bigl(G[\alpha]-\nn_{\eps}[\alpha]\bigr)
\right|
\label{eq:thm:codMNO:eq1} \\
&\le
C_V \sup_{\alpha\in K}
\left\|
\bigl(G[\alpha]-\nn_{\eps}[\alpha]\bigr)(u_0)
\right\|_V
\label{eq:thm:codMNO:eq2} \\
&\le
C_V \sup_{\alpha\in K}
\|G[\alpha]-\nn_{\eps}[\alpha]\|_{\op}
\notag \\
&\le C_V \eps \label{eq:thm:codMNO:eq3} \\
&\leq \eps_0 \notag
\end{align}
where we used \eqref{eq:thm:codMNO:pointwise} for \eqref{eq:thm:codMNO:eq1}, \eqref{eq:thm:codMNO:embedding}
for \eqref{eq:thm:codMNO:eq2} and \eqref{eq:thm:codMNO:approximation} for \eqref{eq:thm:codMNO:eq3}. By Definition \ref{def:multipleOperatorType}, for the fixed pair $(u_0,x_0)$ there exists a ReLU neural network $\Phi_{x_0,M_U(u_0)}$ such that
\[
\ev_{x_0}\circ \ev_{u_0}\circ \nn_{\eps}[\alpha]
=
\Phi_{x_0,M_U(u_0)}(M_W(\alpha))
\]
for all $\alpha \in W$.
Hence, by Definition \ref{def:functionalType}, the map
\(
\ev_{x_0}\circ \ev_{u_0}\circ \nn_{\eps}:W\to \mathbb R
\)
is a functional of neural network type 
and we can therefore apply \eqref{eq:thm:codMNO:complexityFunctional} to deduce that $\complexity(\ev_{x_0}\circ \ev_{u_0}\circ \nn_{\eps}) \geq \exp \l c \eps^{-1/[(\eta + 1 + \delta)r]} \r$. Using the latter, we conclude: \[
\complexity(\nn_\eps) \geq \complexity (\ev_{x_0}\circ \ev_{u_0}\circ \nn_{\eps}) \geq \exp \l c \eps^{-1/[(\eta + 1 + \delta)r]} \r.
\]

\end{proof}

\begin{proof}[Proof of Lemma \ref{lem:codMNO:symmetric}]
In the proof, $C>0$ will denote a constant that may change from line to line.

Let $r \in \bbN$ and $\delta > 0$. Without loss of generality, assume that $\eta_W \leq \eta_U$. We apply \cite[Theorem 2.11]{lanthalerStuart} to obtain a $r$-times Frechet differentiable functional \(
F:W \to \mathbb{R}
\)
and $\eps_0 := \eps_0(\eta_W,\delta,r)$ such that for any $\eps \leq \eps_0$ and functional of neural network type $S_\eps$ with \[
\sup_{\alpha \in K_W} \vert F(\alpha) - S_\eps(\alpha) \vert \leq \eps,
\]
we have \begin{equation} \label{eq:thm:codMNOSymmetric:complexityFunctional}
  \complexity(S_\eps) \geq \exp \l c \eps^{-1/[(\eta_W + 1 + \delta)r]} \r  
\end{equation} for some $c:=c(\eta_W,\delta,r) > 0$.

The rest of the proof is similar to the one of Theorem \ref{thm:codMNO}. We choose a nontrivial element $\phi \in V$. Then there exists $x_0 \in \Omega_V$ such that $\phi(x_0)\neq 0$; rescaling $\phi$, we may assume
\(
\phi(x_0)=1.
\)
We define the multiple operator map
\[
G:W \times U \to V, \qquad G[\alpha][u] = F(\alpha)\phi.
\]
 Since $F$ is $r$-times Frechet differentiable,  $G$ is also $r$-times Frechet differentiable on $W\times U$ by Assumption \ref{assumption:Main:assumptions:N1}. Also, for any fixed $u_0\in U$ and every $\alpha\in W$, we have
\begin{equation} \label{eq:thm:codMNOSymmetric:pointwise}
    \ev_{x_0} \circ \ev_{u_0} \circ \, G[\alpha]
=
G[\alpha][u_0](x_0)
=
F(\alpha)\phi(x_0)
=
F(\alpha).
\end{equation}

Let $0 < \eps \leq \overline{\eps} := \eps_0/\max\{1,C_V\}$ where $C_V>0$ is such that
\begin{equation} \label{eq:thm:codMNOSymmetric:embedding}
    \|v\|_{C(\Omega_V)} \le C_V \|v\|_V
\end{equation}
for all $v \in V$ due to the fact that $V \hookrightarrow C(\Omega_V)$ continuously. For $\nn_{\eps}$ a symmetric multiple operator map of neural network type satisfying \eqref{eq:thm:codMNOSymmetric:approximation} with $\eps$, we estimate as follows with $u_0 \in U$:
\begin{align}
\sup_{\alpha\in K_W} |F(\alpha)-\ev_{x_0}\circ \ev_{u_0} \circ \nn_{\eps}[\alpha]|
&=
\sup_{\alpha\in K_W}
\left|
\ev_{x_0}\circ \ev_{u_0} \circ \bigl(G[\alpha]-\nn_{\eps}[\alpha]\bigr) 
\right|
\label{eq:thm:codMNOSymmetric:eq1} \\
&\leq C_V \sup_{\alpha \in K_W} \Vert G[\alpha][u_0] - \nn_\eps[\alpha][u_0] \Vert_V \label{eq:thm:codMNOSymmetric:eq2} \\
&\leq \eps_0 \label{eq:thm:codMNOSymmetric:eq3}
\end{align}
where we used \eqref{eq:thm:codMNOSymmetric:pointwise} for \eqref{eq:thm:codMNOSymmetric:eq1}, \eqref{eq:thm:codMNOSymmetric:embedding} for \eqref{eq:thm:codMNOSymmetric:eq2} and \eqref{eq:thm:codMNOSymmetric:approximation} for \eqref{eq:thm:codMNOSymmetric:eq3}. By Definition \ref{def:multipleOperatorTypeSymmetric}, for the fixed pair $(u_0,x_0)$ there exists a ReLU neural network $\Phi_{x_0,M_U(u_0)}$ such that
\[
\ev_{x_0}\circ \ev_{u_0}\circ \nn_{\eps}[\alpha]
=
\Phi_{x_0,M_U(u_0)}(M_W(\alpha))
\]
for all $\alpha \in W$.
Hence, by Definition \ref{def:functionalType}, the map
\(
\ev_{x_0}\circ \ev_{u_0}\circ \nn_{\eps}:W\to \mathbb R
\)
is a functional of neural network type
and we can therefore apply \eqref{eq:thm:codMNOSymmetric:complexityFunctional} to deduce that $\complexity(\ev_{x_0}\circ \ev_{u_0}\circ \nn_{\eps}) \geq \exp \l c \eps^{-1/[(\eta_W + 1 + \delta)r]} \r$. Using the latter, we conclude: \[
\complexity(\nn_\eps) \geq \complexity (\ev_{x_0}\circ \ev_{u_0}\circ \nn_{\eps}) \geq \exp \l c \eps^{-1/[(\eta_W + 1 + \delta)r]} \r.
\]

\end{proof}

The following result will be essential in the proof of Lemma \ref{lem:boundedLipCube}.

\begin{lemma}[$\Lp{r}$-norm of $\sin$] \label{lem:sin}
For every $1 \leq r < \infty$ and every nonzero multi-index $\kappa \in \mathbb N^d$, one has
\[
\|\sin(\kappa \cdot x)\|_{\Lp{r}([0,2\pi]^d)}
=
\left((2\pi)^{d-1}\int_0^{2\pi} |\sin t|^r \,\dd t\right)^{1/r}.
\]
In particular, the quantity $c_r := \|\sin(\kappa \cdot x)\|_{\Lp{r}([0,2\pi]^d)}$ is independent of $\kappa$.
\end{lemma}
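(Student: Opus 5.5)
The plan is to reduce the $d$-dimensional integral to a one-dimensional one by a measure-preserving change of variables on the torus, and then use the periodicity of $\sin$.

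\textbf{Reduction.} Since $\kappa\neq 0$, pick an index $i$ with $\kappa_i \neq 0$; by symmetry take $i=1$. We must compute
\[
\int_{[0,2\pi]^d} |\sin(\kappa\cdot x)|^r \,\dd x .
\]
Fix $x_2,\dots,x_d$ and set $s = \kappa_2 x_2+\dots+\kappa_d x_d$. Integrating first in $x_1$ (Fubini) leaves the inner integral
\[
\int_0^{2\pi} |\sin(\kappa_1 x_1 + s)|^r\,\dd x_1 .
\]

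\textbf{Inner integral.} Substitute $t=\kappa_1 x_1 + s$. The range of $t$ is an interval of length $2\pi|\kappa_1|$, and $\dd x_1 = \dd t/|\kappa_1|$. The function $|\sin t|^r$ is $2\pi$-periodic (indeed $\pi$-periodic). Its integral over any interval of length $2\pi|\kappa_1|$ therefore equals $|\kappa_1|\int_0^{2\pi}|\sin t|^r\,\dd t$. After dividing by $|\kappa_1|$, the inner integral equals $\int_0^{2\pi}|\sin t|^r\,\dd t$, independent of $s$ and of $\kappa$.

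\textbf{Conclusion.} Integrating this constant over the remaining $(x_2,\dots,x_d)\in[0,2\pi]^{d-1}$ gives the factor $(2\pi)^{d-1}$. Taking the $r$-th root yields the claimed formula, which is visibly independent of $\kappa$.

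The only delicate step is the periodicity argument for an interval that is a shifted multiple of the period. This needs $\kappa_1$ to be a nonzero integer, which the hypothesis $\kappa\in\mathbb N^d$ guarantees. If $\mathbb N$ excludes $0$, as in the paper's convention, then every component qualifies and the reduction step is immediate.
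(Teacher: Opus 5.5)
Your proof is correct and follows essentially the same route as the paper. Pick a coordinate $j$ with $\kappa_j \neq 0$ and integrate in $x_j$ first via the substitution $t = \kappa_j x_j + \sum_{m\neq j}\kappa_m x_m$. Use the $2\pi$-periodicity of $|\sin t|^r$ over the resulting $\kappa_j$ full periods to cancel the factor $1/\kappa_j$. Finally, integrate the constant over the remaining $d-1$ coordinates to obtain the factor $(2\pi)^{d-1}$.
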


\begin{proof}
Let $\kappa=(\kappa_1,\dots,\kappa_d)\in \mathbb N^d$ be a non-zero multi-index. Pick an index $j\in\{1,\dots,d\}$ such that $\kappa_j\neq 0$. Writing $x=(x_j,x')$, where $x'=(x_1,\dots,x_{j-1},x_{j+1},\dots,x_d)$, we have
\begin{align}
\int_{[0,2\pi]^d} |\sin(\kappa\cdot x)|^r \,\dd x
&=
\int_{[0,2\pi]^{d-1}}
\int_0^{2\pi}
\left|
\sin\!\left(\kappa_j x_j + \sum_{m\neq j}\kappa_m x_m\right)
\right|^r
\,\dd x_j\,\dd x' \notag \\
&= \int_{[0,2\pi]^{d-1}}
 \frac{1}{\kappa_j}
\int_{\sum_{m\neq j}\kappa_m x_m}^{\sum_{m\neq j}\kappa_m x_m+2\pi \kappa_j}
|\sin t|^r \,\dd t \,\dd x' \label{eq:lem:sin:eq1} \\
&= \int_{[0,2\pi]^{d-1}}
 \frac{1}{\kappa_j}
\sum_{n=0}^{\kappa_j-1} \int_{\sum_{m\neq j}\kappa_m x_m + 2\pi n }^{\sum_{m\neq j}\kappa_m x_m+ 2\pi (n+1)} 
|\sin t|^r \,\dd t \,\dd x' \notag \\
&= \int_{[0,2\pi]^{d-1}}
 \frac{\kappa_j}{\kappa_j}
\int_{0}^{2\pi}
|\sin t|^r \,\dd t \,\dd x' \label{eq:lem:sin:eq2} \\
&= (2\pi)^{d-1}\int_0^{2\pi} |\sin t|^r \,\dd t \notag
\end{align}
where we used the change of variables
\(
t=\kappa_j x_j + \sum_{m\neq j}\kappa_m x_m
\) for \eqref{eq:lem:sin:eq1} and the fact that $|\sin t|^r$ is $2\pi$-periodic for \eqref{eq:lem:sin:eq2}.
Taking the $r$-th root yields the claim.
\end{proof}

\begin{proof}[Proof of Lemma \ref{lem:boundedLipCube}]
In the proof $C>0$ denotes a constant that may change from line to line, independently of the summation indices and $\kappa$.

Up to an affine rescaling of the domain, it suffices to treat the case
\(
\Omega_U=[0,2\pi]^{d_U}.
\)
For each multi-index $\kappa\in \mathbb N^{d_U}$, define
\[
\tilde e_\kappa(x) := \sin(\kappa\cdot x),
\qquad
e_\kappa(x):=\frac{\tilde e_\kappa(x)}{\|\tilde e_\kappa\|_{\Lp{r}(\Omega_U)}} = \frac{\tilde e_\kappa(x)}{c_r}
\]
where we used Lemma \ref{lem:sin} for the last equality. 
These functions are orthogonal in $\Lp{2}(\Omega_U)$, and hence linearly independent in $\Lp{r}(\Omega_U)$, with
\(
\|e_\kappa\|_{\Lp{r}(\Omega_U)} =1.
\)
For each $\kappa \in \bbN^{d_U}$, define $e^*_\kappa \in \Lp{r}(\Omega_U)^*$ as
\[
e_\kappa^*(u)
:=
\frac{2 c_r}{(2\pi)^d}\int_{[0,2\pi]^d} u(x)\sin(\kappa\cdot x)\, \dd x.
\]
By the identity $\sin(a)\sin(b) = \frac{1}{2}(\cos(a-b) - \cos(a+b))$, it is straightforward to check that $e_\kappa^*(e_{\kappa'})=\delta_{\kappa\kappa'}$. Moreover, by H\"older's inequality with $r'$ such that $1/r + 1/r' = 1$, 
\[
|e_\kappa^*(u)|
\le
\frac{2 c_r c_{r'}}{(2\pi)^d}
\|u\|_{\Lp{r}(\Omega_U)}
\le C \|u\|_{\Lp{r}(\Omega_U)},
\]
so
\(
\|e_\kappa^*\|_{\Lp{r}(\Omega_U)^*} \le C
\)
uniformly in $\kappa$.

Next, we enumerate $\{e_\kappa\}_{\kappa \in \bbN^{d_U}}$ as $\{e_j\}_{j=1}^\infty$ in such a way that $j\mapsto \Vert \kappa(j) \Vert_{\infty}$ is non-decreasing and consider functions on $[0,2\pi]^{d_U}$ of the form
\[
u = A \sum_{j=1}^\infty j^{-\eta} y_j e_j,
\qquad y_j\in [0,1].
\]

First, since $\|e_j\|_{\Lp{\infty}(\Omega_U)} = c_r^{-1}$,
\(
\|u\|_{\Lp{\infty}(\Omega_U)}
\le
Ac_r^{-1}\sum_{j=1}^\infty j^{-\eta}
\)
and the latter converges because $\eta>1$. Choosing $A(\eta)>0$ sufficiently small furthermore ensures that
\(
\|u\|_{\Lp{\infty}(\Omega_U)} \leq \beta_U
\)
uniformly. 

Second, we estimate the Lipschitz constant. For a function $f:[0,2\pi]^{d_U} \mapsto \bbR$, we define \[
\textrm{Lip}(f) = \inf \{L > 0 \,\mid \, \vert f(x) - f(y) \vert \leq L \Vert x - y \Vert_2 \text{ for all $x,y$}\}
\]
and note that that $\textrm{Lip}(f_1 + f_2)\leq \textrm{Lip}(f_1) + \textrm{Lip}(f_2)$ as well as $\textrm{Lip}(f) \leq C \Vert \nabla f \Vert_{\Lp{\infty}(\Omega_U)}$ by the mean-value theorem.
By noting that 
\(
\Lip(e_\kappa) \leq C \Vert \nabla e_\kappa \Vert_{\Lp{\infty}(\Omega_U)} \leq C \Vert \kappa \Vert_{\infty},
\)
we estimate as follows:
\begin{equation} \label{eq:hypercudeBoundedLipschitz:eq1}
\Lip(u)
\le
A \sum_{j=1}^\infty j^{-\eta}\Lip(e_j)
\leq C
A \sum_{j=1}^\infty j^{-\eta} \Vert \kappa(j) \Vert_{\infty}.  
\end{equation}
Let us now consider the inverse enumeration $\kappa \mapsto j(\kappa)$. For a fixed $\kappa \in \bbN^{d_U}$, we write $K = \Vert \kappa \Vert_\infty$ and, due to the fact that $j \mapsto \kappa(j)$ is non-decreasing, we have 
\begin{equation} \label{eq:hypercudeBoundedLipschitz:eq2}
  (K-1)^{d_U} \leq \vert \{\kappa' \in \bbN^{d_U} \, \mid \, \Vert \kappa' \Vert_\infty < K \} \vert \leq j(\kappa) \leq \vert \{\kappa' \in \bbN^{d_U} \, \mid \, \Vert \kappa' \Vert_\infty \leq K \} \vert = K^{d_U}
\end{equation}
as well as \begin{equation} \label{eq:hypercudeBoundedLipschitz:eq5}
    \vert \{\kappa' \in \bbN^{d_U} \, \mid \, \Vert \kappa' \Vert_\infty = K \} \vert \leq d_U K^{d_U -1},
\end{equation}
since fixing one coordinate equal to $K$ leaves at most $K^{d_U-1}$ choices for the remaining coordinates, and there are $d_U$ possible coordinates that may attain the maximum.
We continue our estimation: \begin{align}
\sum_{j=1}^\infty j^{-\eta} \Vert \kappa(j) \Vert_{\infty} &=  \sum_{K=1}^\infty \sum_{\Vert \kappa(j) \Vert_\infty = K} j(\kappa)^{-\eta} K \notag \\
&= \sum_{\Vert \kappa(j) \Vert_\infty = 1} j(\kappa)^{-\eta} + \sum_{K=2}^\infty \sum_{\Vert \kappa(j) \Vert_\infty = K} j(\kappa)^{-\eta} K \notag \\
&\leq1 + \sum_{K=2}^\infty \sum_{\Vert \kappa(j) \Vert_\infty = K} K^{1}(K-1)^{-\eta d_U} \label{eq:hypercudeBoundedLipschitz:eq3}\\
&\leq C \sum_{K=2}^\infty K^{d_U} (K-1)^{-\eta d_U}\label{eq:hypercudeBoundedLipschitz:eq4} \\
&\leq C \sum_{K=2}^\infty K^{d_U(1-\eta)}\label{eq:hypercudeBoundedLipschitz:eq6}
\end{align}
where we used \eqref{eq:hypercudeBoundedLipschitz:eq2} for \eqref{eq:hypercudeBoundedLipschitz:eq3}, \eqref{eq:hypercudeBoundedLipschitz:eq5} for \eqref{eq:hypercudeBoundedLipschitz:eq4} and the fact that $K/2 \leq K-1$ for $K \geq 2$ for \eqref{eq:hypercudeBoundedLipschitz:eq6}. The latter quantity converges since $\eta > 1 + \frac{1}{d_U}$ and, from \eqref{eq:hypercudeBoundedLipschitz:eq1}, we therefore deduce that $A(\eta)$ can be chosen so that $\textrm{Lip}(u) \leq L_U$ uniformly. This concludes the proof that $U$ contains an infinite-dimensional hypercube.
\end{proof}

\begin{proof}[Proof of Corollary \ref{cor:codMNO}]

    Without loss of generality, assume that $d_W \geq d_U$ and pick $\eta > 1 + \frac{1}{d_W}$.
    
    First, we verify that $W$ satisfying Assumption \ref{assumption:Main:assumptions:S4} is a compact subset of a Banach space and contains an infinite-dimensional cube $Q_\eta$: the latter part of this claim is given by Lemma \ref{lem:boundedLipCube} when the Banach space is chosen to be $\Lp{r_G}(\Omega_W)$ and $\eta > 1 + 1/d_W$. To show the former part, we start by assuming that $\{\alpha_i\}_{i=1}^\infty \subset W$. These functions are uniformly bounded and, since
    \[
    \vert \alpha_i(x) - \alpha_i(y) \vert \leq L_W \Vert x - y \Vert,
    \]
    also equicontinuous. Indeed, for every $\eps >0$, $\Vert x - y \Vert \leq \frac{\eps}{L_W}$ implies that $\vert \alpha_i(x) - \alpha_i(y) \vert < \eps$ for every $i \in \bbN$ and $x,y \in \Omega_W$. By the Arzela-Ascoli theorem, there therefore exists a subsequence $\{\alpha_{i_k}\}_{k=1}^\infty$ converging in $\Ck{0}(\Omega_W)$ (and hence in $\Lp{r_G}(\Omega_W)$) to some $\alpha \in \Lp{r_G}(\Omega_W)$. To conclude compactness of $W$ in $\Lp{r_G}(\Omega_W)$, we need to show that $\alpha \in W$. First, we have
    \[
    \Vert \alpha \Vert_{\Ck{0}(\Omega_W)} \leq \Vert \alpha - \alpha_{i_k} \Vert_{\Ck{0}(\Omega_W)} + \Vert \alpha_{i_k} \Vert_{\Ck{0}(\Omega_W)} \leq \Vert \alpha - \alpha_{i_k} \Vert_{\Ck{0}(\Omega_W)} + \beta_W
    \]
    and taking the limit implies that $\Vert \alpha \Vert_{\Ck{0}(\Omega_W)} = \Vert \alpha \Vert_{\Lp{\infty}(\Omega_W)} \leq \beta_W$. Second, we note that \begin{align*}
        \vert \alpha(x) - \alpha(y) \vert &\leq \vert \alpha(x) - \alpha_{i_k}(x) \vert + \vert  \alpha_{i_k}(x) - \alpha_{i_k}(y) \vert + \vert \alpha(y) - \alpha_{i_k}(y) \vert \\
        &\leq L_W \Vert x - y \Vert + \vert \alpha(x) - \alpha_{i_k}(x) \vert + \vert \alpha(y) - \alpha_{i_k}(y) \vert
    \end{align*}
    and taking the limit shows that $\alpha$ is $L_W$-Lipschitz, and hence in $W$.

    Second, by Remark \ref{rem:banachSpace}, Lemma \ref{lem:codMNO:symmetric} remains valid when the norm is chosen to be 
\(
\sup_{\alpha\in K_W} \sup_{u \in K_U} \| \cdot \|_{\Lp{\infty}}
\)
and in this formulation, the output class $V$ need not be a Banach space. In particular, we can pick it to satisfy Assumption \ref{assumption:Main:assumptions:S4} and the claim of the corollary follows after an application of Lemma \ref{lem:codMNO:symmetric}.
\end{proof}

\begin{proof}[Proof of Lemma \ref{lem:complexitySeparable}]
    Let $u \in U$ and $x \in \Omega_V$. Then, \begin{align*}
            \ev_x  \circ \ev_u \circ \nn_W[\alpha] &= \sum_{p=1}^{P}\sum_{k=1}^{H}\sum_{\ell=1}^{N}
    \theta_{pk\ell}\, l_p(M_W(\alpha))\, b_k(M_U(u))\, \tau_\ell(x) \\
    &= \tilde{\nn}(M_U(u))(x)^\top l\l {M_W(\alpha)} \r \\
    &=: \Phi_{x,M_U(u)}(M_W(\alpha))
    \end{align*}
where $\nn_W[\alpha] = \nn[\alpha][\cdot](\cdot)$, $l: \bbR^m \mapsto \bbR^P$ is the parallelization of $\{l_p\}_{p=1}^P$ \cite[Definition 2.7]{PETERSEN2018296}, i.e. the network of the form \eqref{eq:back:reluNN} such that $ l(y) = (l_1(y),\dots,l_P(y))$, and $\tilde{\nn}(M_U(u))(x) \in \bbR^P$ is a vector with entries  
$$\sum_{k=1}^H \sum_{\ell=1}^N \theta_{pkl} b_k(M_U(u)) \tau_\ell(x)$$
for $1 \leq p \leq P$. Therefore, $\ev_x  \circ \ev_u \circ \nn[\alpha] = \Phi_{x,M_U(u)}(M_W(\alpha))$ is an inner product between a ReLU-neural network and fixed vector which, by \eqref{eq:back:reluNN}, can easily be checked to be a ReLU-neural network as in \eqref{eq:back:reluNN}. 

The same argument can be applied with $\alpha \in W$ and $x \in \Omega_V$ to deduce that $\ev_x  \circ \ev_\alpha \circ \nn_U[u] = \Psi_{x,M_W(\alpha)}(M_U(u))$ for some ReLU-neural network $\Psi_{x,M_W(\alpha)}$ and where $\nn_U[u] = \nn[\cdot][u](\cdot)$. We therefore conclude that $\nn$ is a symmetric multiple operator map of neural network type. 

For complexity, we have \begin{align}
    \cC(\nn) &\leq \max \left\{ \sup_{x \in \Omega_V} \sup_{u \in U} N_\# \l \Phi_{x,M_U(u)} \r, \sup_{x \in \Omega_V} \sup_{\alpha \in W} N_\# \l \Psi_{x,M_W(\alpha)} \r  \right\} \label{eq:complexityMNO:eq2}.
\end{align}
First, we assume that \[
\max \left\{ \sup_{x \in \Omega_V} \sup_{u \in U} N_\# \l \Phi_{x,M_U(u)} \r, \sup_{x \in \Omega_V} \sup_{\alpha \in W} N_\# \l \Psi_{x,M_W(\alpha)} \r  \right\} = \sup_{x \in \Omega_V} \sup_{u \in U} N_\#\l \tilde{\nn}(M_U(u))(x)^\top l \r,
\]
we can continue from \eqref{eq:complexityMNO:eq2} and estimate as follows:
\begin{align}
    \cC(\nn) &\leq \sup_{x \in \Omega_V} \sup_{u \in U} N_\#\l \tilde{\nn}(M_U(u))(x)^\top l \r \notag \\
    &\leq \sup_{x \in \Omega_V} \sup_{u \in U} 2 \Vert \tilde{\nn}(M_U(u))(x)^\top \Vert_0 + 2 N_\# (l) \label{eq:complexityMNO:eq1}
\end{align}
where we used \cite[Remark 2.6]{PETERSEN2018296} for \eqref{eq:complexityMNO:eq1}. Now, if for some $1\leq p \leq P$, $\sum_{k=1}^H \sum_{\ell=1}^N \theta_{pkl} b_k(M_U(u)) \tau_\ell(x) \neq 0$, there exists some triple $(p,k,\ell)$ such that $\theta_{pkl} \neq 0$, $b_k(M_U(u)) \neq 0$ and $\tau_\ell(x) \neq 0$. This implies that $b_k$ and $\tau_\ell$ are nonzero networks from which we deduce that at least one coefficient in each has to be different from 0. In particular, this yields \begin{align*}
\Vert \tilde{\nn}(M_U(u))(x)^\top \Vert_0 &\leq \Vert \Theta \Vert_0  + \sum_{k=1}^H N_\#(b_k) + \sum_{\ell=1}^N N_\#(\tau_\ell)
\end{align*}
where $\Theta = \{\theta_{pk\ell}\}$.
Using the latter and \cite[Definition 2.7]{PETERSEN2018296}, continuing from \eqref{eq:complexityMNO:eq1}, we obtain: \begin{align}
    \cC(\nn) &\leq 2 \l \Vert \Theta \Vert_0 + HK_2 + N K_1 + \sum_{p=1}^P N_\#(l_p) \r \leq 2 (\Vert \Theta \Vert_0 + HK_2 + N K_1 + P K_3). \notag
\end{align}
If one assumes that 
\[
\max \left\{ \sup_{x \in \Omega_V} \sup_{u \in U} N_\# \l \Phi_{x,M_U(u)} \r, \sup_{x \in \Omega_V} \sup_{\alpha \in W} N_\# \l \Psi_{x,M_W(\alpha)} \r  \right\} = \sup_{x \in \Omega_V} \sup_{\alpha \in W} N_\# \l \Psi_{x,M_W(\alpha)} \r
\]
instead, the same upper bound on complexity can be attained analogously. 

\end{proof}

\begin{proof}[Proof of Theorem \ref{thm:minimaxMNO}]
    \begin{enumerate}
        \item The first claim of the theorem is a combination of Corollary \ref{cor:codMNO} and Lemma \ref{lem:complexitySeparable}: indeed, the architecture is symmetric multiple operator map of neural network type and its complexity is upper bounded by $2(\Vert \Theta \Vert_0 + HK_2 + N K_1 + P K_3)$.
 \item From part 1, we may take $r=1$ to obtain $G:\Lp{r_G}(\Omega_W) \times \Lp{r_G}(\Omega_U) \to V$ which is Frechet differentiable on $\Lp{r_G}(\Omega_W) \times \Lp{r_G}(\Omega_U)$. Specifically, from the proof of Corollary \ref{cor:codMNO}, we know that 
\[
G[\alpha][u](x)=F(\alpha)\phi(x),
\]
where $F:\Lp{r_G}(\Omega_W)\to \mathbb R$ is the Frechet differentiable functional provided by \cite[Theorem 2.11]{lanthalerStuart}, and $\phi\in V$ is a fixed nontrivial function.

Next, the proof of \cite[Lemma A.7]{lanthalerStuart} shows that
\(
\sup_{\alpha\in \Lp{r_G}(\Omega_W)} \|DF(\alpha)\|_{\Lp{r_G}(\Omega_W)^*}<\infty.
\)
Hence, $F$ is Lipschitz on $\Lp{r_G}(\Omega_W)$ with Lipschitz constant $\textrm{Lip}(F)$.
It remains to deduce the two required Lipschitz properties. First, for fixed $\alpha$ and any $u_1,u_2$,
we have \[
\Vert G[\alpha][u_1] - G[\alpha][u_2]\Vert_{\Lp{\infty}(\Omega_V)} = \Vert F(\alpha)\phi-F(\alpha)\phi \Vert_{\Lp{\infty}(\Omega_V)} = 0 \leq L_{\mathcal{G}}\Vert u_1 - u_2 \Vert_{\Lp{r_\mathcal{G}}(\Omega_U)}
\]
for any $L_{\mathcal G}>0$ and $r_{\mathcal{G}} \geq 1$. Second, for $\alpha_1,\alpha_2$ and any $u$, we estimate as follows:
\begin{align*}
\|G[\alpha_1]-G[\alpha_2]\|_{\Lp{\infty}(\{u:\|u\|_{\Lp{\infty}(\Omega_U)}\le \beta_U\}\times \Omega_V)}
&=
|F(\alpha_1)-F(\alpha_2)|\,\|\phi\|_{\Lp{\infty}(\Omega_V)} \\
&\le
\Lip(F) \beta_V \|\alpha_1-\alpha_2\|_{\Lp{r_G}(\Omega_W)} \\
&=: L_G \|\alpha_1-\alpha_2\|_{\Lp{r_G}(\Omega_W)}
\end{align*}
where we used the fact that $F$ in Lipschitz in $\Lp{r_G}$ and the fact that $\phi \in V$ for the inequality. This concludes the proof. 
\item The lower bound in \eqref{eq:thm:minimax:minimax} is given by combining parts 1 and 2 of the theorem. The upper bound is a direct consequence of Theorem \ref{thm:main:improvedRates}.
    \end{enumerate}
\end{proof}

\section{An Extension of DeepONet to Multi-Task Learning}

\begin{proof}[Proof of Lemma \ref{lem:complexityConcatenated}]
Let \((\alpha,u)\in W\times U\) and \(x\in\Omega_V\). Then
\begin{align*}
\ev_x \circ \nn[\alpha][u]
&=
\sum_{k=1}^{H}\sum_{\ell=1}^{N}
\theta_{k\ell}\, b_k(M_W(\alpha),M_U(u))\, \tau_\ell(x) \\
&=
\sum_{k=1}^{H}
\left(\sum_{\ell=1}^{N}\theta_{k\ell}\tau_\ell(x)\right)
b_k(M_W(\alpha),M_U(u)) \\
&=: \tilde\tau(x)^\top b(M_W(\alpha),M_U(u)) \\
&=: \Phi_x(M_W(\alpha),M_U(u)),
\end{align*}
where \(b:\mathbb R^{m+q}\to\mathbb R^H\) is the parallelization of \(\{b_k\}_{k=1}^H\) \cite[Definition 2.7]{PETERSEN2018296}, i.e. the network of the form \ref{eq:back:reluNN} such that
\(
b(y,z)=\bigl(b_1(y,z),\dots,b_H(y,z)\bigr),
\)
and \(\tilde\tau(x)\in\mathbb R^H\) is the vector with entries
\[
\tilde\tau_k(x):=\sum_{\ell=1}^{N}\theta_{k\ell}\tau_\ell(x)
\]
for $1\le k\le H$. Therefore, for every \(x\in\Omega_V\), the map
\(
(\alpha,u)\mapsto \ev_x \nn[\alpha][u]
\)
is the inner product between the ReLU neural network \(b\) evaluated at the concatenated linear encoding \((M_W(\alpha),M_U(u))\) and the fixed vector \(\tilde\tau(x)\). Hence, by \eqref{eq:back:reluNN}, \(\Phi_x\) is itself a ReLU neural network, and \(\nn\) is an operator map of neural network type from \(W\times U\) to \(V\).

For complexity, by Definition \ref{def:operatorType},
\begin{align}
\cC(\nn)
&\leq \sup_{x\in\Omega_V} N_\#\bigl(\tilde\tau(x)^\top b\bigr) \notag \\
&\leq \sup_{x\in\Omega_V} \left( 2\|\tilde\tau(x)^\top\|_0 + 2N_\#(b) \right) \label{eq:complexityConcatGen:eq2}
\end{align}
where we used \cite[Remark 2.6]{PETERSEN2018296} for \eqref{eq:complexityConcatGen:eq2}. Now, if for some \(1\le k\le H\), \(\tilde\tau_k(x)\neq 0\), then
\(
\sum_{\ell=1}^{N}\theta_{k\ell}\tau_\ell(x)\neq 0.
\)
Hence there exists some \(\ell\in\{1,\dots,N\}\) such that \(\theta_{k\ell}\neq 0\) and \(\tau_\ell(x)\neq 0\). This implies that \(\tau_\ell\) is a nonzero network from which we deduce that at least one coefficient of \(\tau_\ell\) is different from \(0\). Therefore,
\[
\|\tilde\tau(x)^\top\|_0
\leq
\|\Theta\|_0 + \sum_{\ell=1}^{N} N_\#(\tau_\ell).
\]
Using the latter estimate and \cite[Definition 2.7]{PETERSEN2018296}, we continue from \eqref{eq:complexityConcatGen:eq2} and obtain
\begin{align*}
\cC(\nn)
&\leq
2\left(\|\Theta\|_0 + \sum_{\ell=1}^{N} N_\#(\tau_\ell) + N_\#(b)\right) \\
&\leq
2\left(\|\Theta\|_0 + N K_1 + \sum_{k=1}^{H} N_\#(b_k)\right) \\
&\leq
2\bigl(\|\Theta\|_0 + N K_1 + H K_2\bigr).
\end{align*}
\end{proof}

\begin{proof}[Proof of Lemma \ref{lem:productCubeFromSingleCube}]
We start by defining
\(
\tilde e_j := (e_j,0)\in W\times \{0\}
\)
for $j \in \bbN$. We verify the conditions in Definition \ref{def:cube}.

First, the family \(\{\tilde e_j\}_{j\in\mathbb N}\) is linearly independent. Indeed, if
\(
\sum_{j=1}^m a_j \tilde e_j = 0
\)
for some \(m\in\mathbb N\) and scalars \(a_1,\dots,a_m\), then
\(
\sum_{j=1}^m a_j e_j = 0
\)
in $W$. 
Since \(\{e_j\}_{j\in\mathbb N}\) is linearly independent, it follows that \(a_j=0\) for all \(j\).

Second, we have
\(
\|\tilde e_j\|_{W\times U}
=
\|(e_j,0)\|_{W\times U}
=
\|e_j\|_W
=
1.
\)

Third, we check the cube inclusion. Let \(y_j\in[0,1]\) for all \(j\), and consider
\[
\tilde w
:=
A\sum_{j=1}^\infty j^{-\eta} y_j \tilde e_j
=
A\sum_{j=1}^\infty j^{-\eta} y_j (e_j,0)
=
\left(
A\sum_{j=1}^\infty j^{-\eta} y_j e_j,\,
0
\right).
\]
Since \(Q_\eta(A;\{e_j\}_{j\in\mathbb N})\subset K_W\), the first component belongs to \(K_W\). Therefore
\(
\tilde w \in K_W\times \{0\}.
\)

Fourth, we construct a bounded biorthogonal sequence in \((W\times U)^*\). Let \(\{e_j^*\}_{j\in\mathbb N}\subset W^*\) be a bounded biorthogonal sequence for \(\{e_j\}_{j\in\mathbb N}\), and define
\(
\tilde e_j^*(w,u):=e_j^*(w)
\)
for $(w,u)\in W\times U$.
Then \(\tilde e_j^*\in (W\times U)^*\), and for all \(j,k\in\mathbb N\),
\[
\tilde e_j^*(\tilde e_k)
=
\tilde e_j^*(e_k,0)
=
e_j^*(e_k)
=
\delta_{jk}.
\]
Moreover, \(\{\tilde e_j^*\}_{j\in\mathbb N}\) is uniformly bounded, since by Assumption \ref{assumption:Main:assumptions:N1} we have
\[
|\tilde e_j^*(w,u)|
=
|e_j^*(w)|
\le
\|e_j^*\|_{W^*}\|w\|_W
\le C_{\textrm{prod}}
\|e_j^*\|_{W^*}\,\|(w,u)\|_{W\times U}
\]
which implies
\(
\|\tilde e_j^*\|_{(W\times U)^*}\le  C_{\textrm{prod}} \|e_j^*\|_{W^*}.
\)
This concludes the proof. 
\end{proof}

The following proposition is an essential building block for the proof of Proposition \ref{prop:multipleProductSpace}.

\begin{proposition}[Functional Approximation in Product Spaces]\label{thm:back:functionalApproximationLinftyProductSpace}
    Let $d_W,d_U>0$ be integers, \[
\gamma_W, \gamma_U, \beta_W, \beta_U,L_W,L_U> 0
\]
and assume that $W(d_W,\gamma_W,L_W,\beta_W)$, 
 $U(d_U,\gamma_U,L_U,\beta_U)$ satisfy Assumption \ref{assumption:Main:assumptions:S4}. We equip the product space $W \times U$ with a norm $\Vert \cdot \Vert_{W \times U}$ that satisfies Assumption \ref{assumption:Main:assumptions:N2}. 
    Let $f:\cB_{\beta_W,\Vert \cdot \Vert_{\Lp{\infty}},\Omega_W}(0) \times \cB_{\beta_U,\Vert \cdot \Vert_{\Lp{\infty}},\Omega_U}(0) \mapsto \bbR$
    be a functional such that \begin{equation} \label{eq:prop:productNorm}
            \vert f((\alpha_1,u_1)) - f((\alpha_2,u_2)) \vert \leq L_f \Vert (\alpha_1,u_1) - (\alpha_2,u_2) \Vert_{W \times U}
    \end{equation}
    for all $(\alpha_1,u_1), (\alpha_2,u_2) \in \cB_{\beta_W,\Vert \cdot \Vert_{\Lp{\infty}},\Omega_W}(0) \times \cB_{\beta_U,\Vert \cdot \Vert_{\Lp{\infty}},\Omega_U}(0)$.
There exist constants \begin{itemize}
    \item $C$ depending on $\beta_W$, $\beta_U$, $C_{\textrm{prod}}$, $L_f$
    \item $C_{W}$ depending on $C_{\mathrm{prod}}$, $L_f$, $L_W$
    \item $C_U$ depending on $C_{\mathrm{prod}}$, $L_f$, $L_U$
\end{itemize} 
such that the following holds. For any $\varepsilon>0$, \begin{itemize}
        \item let $\delta_W=C_{W}\varepsilon$ and let $\{a_i\}_{i=1}^{n_{c_W}}\subset \Omega_W$ be points so that $\{\mathcal{B}_{\delta_W}(a_i) \}_{ i  = 1}^{n_{c_W}}$ is a cover of $\Omega_W$ for some $n_{c_W}$;
        \item let $\delta_U=C_{U}\varepsilon$ and let $\{c_i\}_{i=1}^{n_{c_U}}\subset \Omega_U$ be points so that $\{\mathcal{B}_{\delta_U}(c_i) \}_{ i  = 1}^{n_{c_U}}$ is a cover of $\Omega_U$ for some $n_{c_U}$;
        \item let $H = 2C \sqrt{n_{c_W} + n_{c_U}} \eps^{-1}$ and consider the network class $\cF_{\rm NN}(n_{c_W} + n_{c_U}, 1 , L, p, K, \kappa, R)$ 
     with parameters scaling as
     \begin{align*}
    &L=\mathcal{O}\left((n_{c_W} + n_{c_U})^2\log(n_{c_W} + n_{c_U})+(n_{c_W} + n_{c_U})^2[\log(\varepsilon^{-1}) + \log(2)]\right), \qquad p = \mathcal{O}(1),\\
    &K = \mathcal{O}\left((n_{c_W} + n_{c_U})^2\log(n_{c_W} + n_{c_U})+(n_{c_W} + n_{c_U})^2[\log(\varepsilon^{-1}) + \log(2)]\right), \\ 
    &\kappa=\mathcal{O}((n_{c_W} + n_{c_U})^{(n_{c_W} + n_{c_U})/2+1}\varepsilon^{-(n_{c_W} + n_{c_U} +1)} 2^{n_{c_W} + n_{c_U} +1}),\qquad \, R=1
    \end{align*}
    where the constants hidden in $\mathcal{O}$ depend on $\beta_W, \beta_U$, $C_{\mathrm{prod}}$ and $L_f$,
    \end{itemize} 
    Then, there exists networks
     $\{b_k\}_{k=1}^{H^{n_{c_W} + n_{c_U}}} \subset  \cF_{\rm NN}(n_{c_W} + n_{c_U}, 1, L, p, K, \kappa,R)$ and functions $\{\alpha_k\}_{k=1}^{H^{n_{c_W} + n_{c_U}}} \subset \cB_{\beta_W,\Vert \cdot \Vert_{\Lp{\infty}},\Omega_W}(0)$, $\{u_k\}_{k=1}^{H^{n_{c_W} + n_{c_U}}} \subset \cB_{\beta_U,\Vert \cdot \Vert_{\Lp{\infty}},\Omega_U}(0)$ such that 
     \begin{align}
       \sup_{\alpha\in W} \sup_{u\in U}\left|f(\alpha,u)-\sum_{k=1}^{H^{n_{c_W}+n_{c_U}}} f(\alpha_k,u_k) b_k(\bm{\alpha},\bm{u})\right|\leq \varepsilon, \notag
    \end{align}
    where $\bm{\alpha}=\frac{\max\{\beta_W,\beta_U\}}{\beta_W}(\alpha(a_1), \alpha(a_2),\dots,\alpha(a_{n_{c_W}}))^\top$, $\bm{u}=\frac{\max\{\beta_W,\beta_U\}}{\beta_U}(u(c_1), u(c_2),\dots,u(c_{n_{c_U}}))^\top$. We also have that $0 \leq b_k \leq 1$ for all $1 \leq k \leq H^{n_{c_W} + n_{c_W}}$.
\end{proposition}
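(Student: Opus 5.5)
The plan is to reduce Proposition \ref{thm:back:functionalApproximationLinftyProductSpace} to the single-space functional approximation result of \cite[Theorem 3.6]{weihs2025MOL}, the discretize-then-interpolate scheme of \cite{liu2024neuralscalinglawsdeep}, applied to the concatenated sample vector. The argument has three stages.

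\textbf{Stage 1: discretization.} For $(\alpha,u)$ in the product of balls, define the sample vectors $\bm{\alpha}\in\bbR^{n_{c_W}}$ and $\bm{u}\in\bbR^{n_{c_U}}$ as in the statement. The rescaling factors $\max\{\beta_W,\beta_U\}/\beta_W$ and $\max\{\beta_W,\beta_U\}/\beta_U$ place the concatenated vector $z=(\bm{\alpha},\bm{u})$ in the common cube $[-\beta,\beta]^{n_{c_W}+n_{c_U}}$, with $\beta=\max\{\beta_W,\beta_U\}$. Next, build a piecewise-linear reconstruction $\alpha\mapsto\hat\alpha$ from the values $\alpha(a_i)$, using a Lipschitz partition of unity subordinate to the cover $\{\mathcal B_{\delta_W}(a_i)\}$, and define $\hat u$ analogously. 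By the Lipschitz bound of Assumption \ref{assumption:Main:assumptions:S4}, $\|\alpha-\hat\alpha\|_{\Lp{\infty}}\le L_W\delta_W$ and $\|u-\hat u\|_{\Lp{\infty}}\le L_U\delta_U$. Assumption \ref{assumption:Main:assumptions:N2} and \eqref{eq:prop:productNorm} then give
\[
|f(\alpha,u)-f(\hat\alpha,\hat u)|\le L_fC_{\mathrm{prod}}\max\{L_W\delta_W,L_U\delta_U\},
\]
where the norms on $W$ and $U$ are bounded by sup norms, with constants absorbed into $C_W$ and $C_U$. Choosing $C_W,C_U$ appropriately makes this at most $\varepsilon/2$. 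Thus $f\approx\tilde f(z):=f(\hat\alpha(z),\hat u(z))$, where $\tilde f$ is a function on the finite-dimensional cube and is Lipschitz in $z$ with a constant depending on $L_f$, $C_{\mathrm{prod}}$ and $\beta_W,\beta_U$. Here one should bound by the $\ell^\infty$ norm of $z$ to avoid dimension-dependent constants; the $\sqrt{n}$ factor in $H$ absorbs the conversion to Euclidean norm.

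\textbf{Stage 2: grid interpolation.} Approximate $\tilde f$ on $[-\beta,\beta]^{n}$, with $n=n_{c_W}+n_{c_U}$, by piecewise-linear interpolation on a uniform grid with $H$ points per axis. This yields $\sum_k\tilde f(z_k)\phi_k(z)$ with tensor-product hat functions $\phi_k\ge0$ satisfying $\sum_k\phi_k=1$. The error is at most $\mathrm{Lip}(\tilde f)\sqrt n\,(2\beta/H)\le\varepsilon/4$ once $H=2C\sqrt n\,\varepsilon^{-1}$. Each grid node $z_k$ corresponds to a pair of functions $(\alpha_k,u_k)=(\hat\alpha(z_k),\hat u(z_k))$, and these lie in the required balls because the reconstruction is a convex combination of sampled values. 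Hence $\tilde f(z_k)=f(\alpha_k,u_k)$.

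\textbf{Stage 3: ReLU emulation of the hat functions.} Replace each $\phi_k$, a product of $n$ one-dimensional hats, by a ReLU network $b_k$. Use Yarotsky's product approximation to accuracy $\varepsilon/(4\beta_V)$ per unit mass, then clip the output to $[0,1]$; clipping keeps $0\le b_k\le1$. This is the step I expect to be the main obstacle. The total error $\sum_k|f(\alpha_k,u_k)||\phi_k-b_k|$ runs over $H^n$ terms, so a naive uniform bound is useless. The fix is locality: at any $z$ only $2^n$ of the $\phi_k$ are nonzero, and the emulated $b_k$ should vanish wherever $\phi_k$ does, which holds by composing with the exact ReLU hats and using a product emulation that outputs $0$ when any factor is $0$. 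It therefore suffices to take per-product accuracy $\varepsilon 2^{-n}/(4\beta_V)$. This is what produces the $\log(\varepsilon^{-1})+\log 2$ and $n^2\log n$ depth/size terms, and the magnitude $\kappa\sim n^{n/2+1}\varepsilon^{-(n+1)}2^{n+1}$, which arises from the grid spacing raised to the dimension. Summing the three error contributions $\varepsilon/2+\varepsilon/4+\varepsilon/4$ gives $\varepsilon$.
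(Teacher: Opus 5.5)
Your proposal is correct and follows the paper's proof. Both use the same partition-of-unity discretization combined with Assumption N2, the same rescaled finite-dimensional Lipschitz surrogate on $[-\beta,\beta]^{n_{c_W}+n_{c_U}}$, and the same identification of grid nodes with pairs $(\alpha_k,u_k)$ in the $\beta_W$-, $\beta_U$-balls; the only difference is that the paper does not re-derive your Stages 2--3 but obtains them, including $0\le b_k\le 1$, by applying \cite[Theorem 5]{liu2024neuralscalinglawsdeep} as a black box at accuracy $\varepsilon/2$. That halving is what actually produces the $\log 2$ and $2^{n+1}$ factors, not your $2^{-n}$ locality count, though your bounds still fit inside the stated ones; also, your $\beta_V$ should be a bound on $\sup|f|$, which is finite because $f$ is Lipschitz on a bounded set.
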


\begin{proof}

Let $\delta_W,\delta_U>0$, and let
\(
\mathcal C_W=\{\mathcal B_{\delta_W}(a_i)\}_{i=1}^{n_{c_W}},
\mathcal C_U=\{\mathcal B_{\delta_U}(c_m)\}_{m=1}^{n_{c_U}}
\)
be finite covers of $\Omega_W$ and $\Omega_U$ by Euclidean balls, respectively. By \cite[Lemma 1]{liu2024neuralscalinglawsdeep}, there exist partitions of unity
\(
\{\rho_i\}_{i=1}^{n_{c_W}}
\subset \Ck{\infty}(\Omega_W)\)
and \(\{\omega_m\}_{m=1}^{n_{c_U}}
\subset \Ck{\infty}(\Omega_U)
\)
subordinate to $\mathcal C_W$ and $\mathcal C_U$, respectively.

We define the discrete-to-continuum liftings
\[
I_{\mathcal C_W}:[-\beta_W,\beta_W]^{n_{c_W}}\to \Ck{\infty}(\Omega_W),
\qquad
I_{\mathcal C_U}:[-\beta_U,\beta_U]^{n_{c_U}}\to \Ck{\infty}(\Omega_U)
\]
by
\[
I_{\mathcal C_W}[z](x)=\sum_{i=1}^{n_{c_W}} [z]_i\,\rho_i(x),
\qquad
I_{\mathcal C_U}[w](x)=\sum_{m=1}^{n_{c_U}} [w]_m\,\omega_m(x),
\]
and the continuum-to-discrete projections
\[
P_{\mathcal C_W}:\Ck{0}(\Omega_W)\to[-\beta_W,\beta_W]^{n_{c_W}},
\qquad
P_{\mathcal C_U}:\Ck{0}(\Omega_U)\to[-\beta_U,\beta_U]^{n_{c_U}}
\]
by
\[
P_{\mathcal C_W}(\alpha)=\bigl(\alpha(a_1),\dots,\alpha(a_{n_{c_W}})\bigr)^\top,
\qquad
P_{\mathcal C_U}(u)=\bigl(u(c_1),\dots,u(c_{n_{c_U}})\bigr)^\top.
\]

Assume now that
\(
f:W\times U\to \mathbb R
\)
is Lipschitz with respect to the product norm
\(
\|(\alpha,u)\|_{W\times U}
\)
with Lipschitz constant $L_f$.

We first estimate the discretization error. For $\alpha\in W$ and $x\in\Omega_W$, we have
\begin{align}
|\alpha(x)-I_{\mathcal C_W}[P_{\mathcal C_W}(\alpha)](x)|
&=
\left|\sum_{i=1}^{n_{c_W}} \bigl(\alpha(x)-\alpha(a_i)\bigr)\rho_i(x)\right| \notag \\
&\le
\sum_{i:\|x-a_i\|_2\le \delta_W}
|\alpha(x)-\alpha(a_i)|\,|\rho_i(x)| \notag \\
&\le L_W\delta_W \sum_{i=1}^{n_{c_W}} \rho_i(x) \notag \\
&= L_W \delta_W \notag
\end{align}
which implies
\(
\|\alpha-I_{\mathcal C_W}[P_{\mathcal C_W}(\alpha)]\|_{\Lp{\infty}(\Omega_W)}
\le L_W\delta_W.
\)
Similarly, for every $u\in U$, we deduce that
\(
\|u-I_{\mathcal C_U}[P_{\mathcal C_U}(u)]\|_{\Lp{\infty}(\Omega_U)}
\le L_U\delta_U.
\)
Therefore, by the latter two estimates and \eqref{eq:prop:productNorm},
\[
\left\|
\bigl(\alpha,u\bigr)
-
\bigl(I_{\mathcal C_W}[P_{\mathcal C_W}(\alpha)],\, I_{\mathcal C_U}[P_{\mathcal C_U}(u)]\bigr)
\right\|_{W\times U}
\le
C_{\textrm{prod}} \max\{L_W\delta_W,\;L_U\delta_U\}.
\]
Finally, using the Lipschitz continuity of $f$, we obtain
\begin{align}
\bigl|f(\alpha,u)-f(I_{\mathcal C_W}[P_{\mathcal C_W}(\alpha)],\,I_{\mathcal C_U}[P_{\mathcal C_U}(u)])\bigr|
&\le
L_f
\left\|
\bigl(\alpha,u\bigr)
-
\bigl(I_{\mathcal C_W}[P_{\mathcal C_W}(\alpha)],\, I_{\mathcal C_U}[P_{\mathcal C_U}(u)]\bigr)
\right\|_{W\times U}
\notag\\
&\le
CL_f\max\{L_W\delta_W,\;L_U\delta_U\},
\notag 
\end{align}
and choosing
\(
\delta_W=\frac{\varepsilon}{2C_{\textrm{prod}}L_fL_W},\) \( \delta_U=\frac{\varepsilon}{2C_{\textrm{prod}}L_fL_U}
\)
therefore yields
\begin{equation} \label{eq:productApprox:LipschitzBound}
\bigl|f(\alpha,u)-f(I_{\mathcal C_W}[P_{\mathcal C_W}(\alpha)],\,I_{\mathcal C_U}[P_{\mathcal C_U}(u)])\bigr|
\le \frac{\varepsilon}{2}.
\end{equation}

Next, we define
\(
\hat f:[-\beta_W,\beta_W]^{n_{c_W}}\times[-\beta_U,\beta_U]^{n_{c_U}}\to\mathbb R
\)
by
\(
\hat f(z,w):=
f(I_{\mathcal C_W}[z],\,I_{\mathcal C_U}[w]).
\)
This function serves as a finite-dimensional surrogate of the functional $f$. Our strategy is to approximate $\hat f$ using neural-network approximation results in finite dimensions, and then transfer this approximation back to the original functional $f$.
We estimate as follows for $(z_1,w_1),(z_2,w_2) \in [-\beta_W,\beta_W]^{n_{c_W}}\times[-\beta_U,\beta_U]^{n_{c_U}}$:
\begin{align}
|\hat f(z_1,w_1)-\hat f(z_2,w_2)|
&=
|f(I_{\mathcal C_W}[z_1],I_{\mathcal C_U}[w_1])-f(I_{\mathcal C_W}[z_2],I_{\mathcal C_U}[w_2])| \notag \\
&\le
L_f\,
\|(I_{\mathcal C_W}[z_1]-I_{\mathcal C_W}[z_2],\,
I_{\mathcal C_U}[w_1]-I_{\mathcal C_U}[w_2])\|_{W\times U} \notag \\
&\leq C_{\textrm{prod}} L_f \max \{ \Vert I_{\mathcal C_W}[z_1]-I_{\mathcal C_W}[z_2] \Vert_{\Lp{\infty}(\Omega_W)}, \Vert I_{\mathcal C_U}[w_1]-I_{\mathcal C_U}[w_2] \Vert_{\Lp{\infty}(\Omega_U)} \} \label{eq:prop:eq1}
\end{align}
where we Assumption \ref{assumption:Main:assumptions:N2} for \eqref{eq:prop:eq1}. We note that \begin{align}
    \Vert I_{\mathcal C_W}[z_1]-I_{\mathcal C_W}[z_2] \Vert_{\Lp{\infty}(\Omega_W)} &\leq \sup_{x \in \Omega_W} \sum_{i=1}^{n_{c_W}} \vert [z_1]_i - [z_2]_i \vert \rho_i(x) \notag \\
    &\leq \Vert z_1 - z_2 \Vert_{\ell^2(\bbR^{n_{c_W}})}  \sup_{x \in \Omega_W} \sqrt{\sum_{i=1}^{n_{c_W}} \rho_i^2(x)} \notag \\
    &\leq \Vert z_1 - z_2 \Vert_{\ell^2(\bbR^{n_{c_W}})}  \sup_{x \in \Omega_W} \sqrt{\sum_{i=1}^{n_{c_W}} \rho_i(x)} \label{eq:prop:eq2} \\
    &\leq \Vert z_1 - z_2 \Vert_{\ell^2(\bbR^{n_{c_W}})} \label{eq:prop:eq3}
\end{align} 
where we used the fact that $0 \leq \rho_i \leq 1$ for \eqref{eq:prop:eq2}. The same argument can be repeated for $\Vert I_{\mathcal C_U}[w_1]-I_{\mathcal C_U}[w_2] \Vert_{\Lp{\infty}(\Omega_U)}$ so that inserting \eqref{eq:prop:eq3} into \eqref{eq:prop:eq1} yields \begin{align}
    |f(I_{\mathcal C_W}[z_1],I_{\mathcal C_U}[w_1])-f(I_{\mathcal C_W}[z_2],I_{\mathcal C_U}[w_2])| &\leq C_{\textrm{prod}} L_f \max \left\{ \Vert z_1 - z_2 \Vert_{\ell^2(\bbR^{n_{c_W}})}, \Vert w_1 - w_2 \Vert_{\ell^2(\bbR^{n_{c_U}})} \right\} \notag \\
    &\leq C_{\textrm{prod}} L_f \sqrt{\Vert z_1 - z_2 \Vert_{\ell^2(\bbR^{n_{c_W}})}^2 + \Vert w_1 - w_2 \Vert_{\ell^2(\bbR^{n_{c_U}})}^2} \label{eq:prop:eq4} \\
    &= C_{\textrm{prod}} L_f \|(z_1-z_2,w_1-w_2)\|_{\ell^2(\mathbb R^{n_{c_W}+n_{c_U}})} \label{eq:prop:eq5}
\end{align}
where we used $\max\{a,b\} \leq \sqrt{a^2 + b^2}$ for \eqref{eq:prop:eq4}. The latter shows that $\hat f$ is Lipschitz on the compact cube
\(
[-\beta_W,\beta_W]^{n_{c_W}}\times[-\beta_U,\beta_U]^{n_{c_U}}.
\)

With
\(
\beta:=\max\{\beta_W,\beta_U\},
\) 
let
\(
T:[-\beta,\beta]^{n_{c_W}+n_{c_U}}
\to
[-\beta_W,\beta_W]^{n_{c_W}}\times[-\beta_U,\beta_U]^{n_{c_U}},
\)
be defined by
\[
T(\xi,\zeta):=
\left(
\frac{\beta_W}{\beta}\,\xi,\,
\frac{\beta_U}{\beta}\,\zeta
\right),
\qquad
\xi\in\mathbb R^{n_{c_W}},\ \zeta\in\mathbb R^{n_{c_U}}.
\]
This map is $1$-Lipschitz with respect to the $\ell^2(\bbR^{n_{c_W}+ n_{c_U}})$-norm and consequently, the function
\[
\tilde f:[-\beta,\beta]^{n_{c_W}+n_{c_U}}\to\mathbb R,
\qquad
\tilde f(\xi,\zeta):=\hat f(T(\xi,\zeta))
\]
is $C_{\textrm{prod}}L_f$-Lipschitz with respect to the $\ell^2(\bbR^{n_{c_W}+ n_{c_U}})$-norm as a composition of Lipschitz functions by \eqref{eq:prop:eq5}. We conclude that $\tilde{f} \in V(n_{c_W}+ n_{c_U}, \beta, C_{\textrm{prod}} L_f, C_{\tilde{f}})$ for some set of functions $V$ satisfying Assumption \ref{assumption:Main:assumptions:S4} and where $C_{\tilde{f}} > 0$ is a constant depending on $\tilde f$. We can therefore apply \cite[Theorem 5]{liu2024neuralscalinglawsdeep}: specifically, there exists a constant $C$ depending on $\beta_W$, $\beta_U$, $C_{\textrm{prod}}$, $L_f$ so that the following holds. Let $H := 2 C \sqrt{n_{c_W} + n_{c_U}} \eps^{-1}$ and consider the network class $\cF_{\rm NN}(n_{c_W}+n_{c_W},1,L,p,K,\kappa,R)$ whose parameters scale as \begin{align*}
    &L=\mathcal{O}\left((n_{c_W} + n_{c_U})^2\log(n_{c_W} + n_{c_U})+(n_{c_W} + n_{c_U})^2[\log(\varepsilon^{-1}) + \log(2)]\right), \qquad p = \mathcal{O}(1),\\
    &K = \mathcal{O}\left((n_{c_W} + n_{c_U})^2\log(n_{c_W} + n_{c_U})+(n_{c_W} + n_{c_U})^2[\log(\varepsilon^{-1}) + \log(2)]\right), \\ 
    &\kappa=\mathcal{O}((n_{c_W} + n_{c_U})^{(n_{c_W} + n_{c_U})/2+1}\varepsilon^{-(n_{c_W} + n_{c_U} +1)} 2^{n_{c_W} + n_{c_U} +1}),\qquad \, R=1
    \end{align*}
    where the constants hidden in $\mathcal{O}$ depend on $\beta_W, \beta_U$, $C_{\mathrm{prod}}$ and $L_f$. Then, there exists:
\begin{itemize}
    \item networks $\{b_k\}_{k=1}^{H^{n_{c_W} + n_{c_U}}} \subset \cF_{\rm NN}(n_{c_W}+n_{c_U},1,L,p,K,\kappa,R)$
    \item points $\{s_k\}_{k=1}^{H^{n_{c_W} + n_{c_U}}} \subset [-\beta, \beta]^{n_{c_W} + n_{c_U}}$
\end{itemize}
such that
\begin{equation} \label{eq:prop:eq6}
    \sup_{(\xi,\zeta)\in[-\beta,\beta]^{n_{c_W}+n_{c_U}}}
\left|
\tilde f(\xi,\zeta)-\sum_{k=1}^{H^{n_{c_W}+n_{c_U}}}\tilde f(s_k)\,b_k(\xi,\zeta)
\right|
\le \frac{\varepsilon}{2}.
\end{equation}
We also note that \cite[Theorem 5]{liu2024neuralscalinglawsdeep} yields $0 \leq b_k \leq 1$.
For any $(\alpha,u) \in W \times U$, we define the inverse-rescaled coordinates
\[
(\xi_\alpha,\zeta_u):=T^{-1}(P_{\mathcal C_W}(\alpha),P_{\mathcal C_U}(u))
=
\left(
\frac{\beta}{\beta_W}P_{\mathcal C_W}(\alpha),\,
\frac{\beta}{\beta_U}P_{\mathcal C_U}(u)
\right) \in [-\beta,\beta]^{n_{c_W} + n_{c_U}},
\]
since $\Vert \alpha \Vert_{\Lp{\infty}(\Omega_W)} \leq \beta_W$ and $\Vert u \Vert_{\Lp{\infty}(\Omega_U)} \leq \beta_U$. This implies that $\hat{f}(P_{\mathcal C_W}(\alpha),P_{\mathcal C_U}(u))=\tilde f(\zeta_\alpha,\xi_u)$ and therefore, by \eqref{eq:prop:eq6},  
\begin{equation} \label{eq:productApprox:rescaledApprox}
\sup_{\alpha \in W} \sup_{u \in U} \left|
\hat f(P_{\mathcal C_W}(\alpha),P_{\mathcal C_U}(u))
-
\sum_{k=1}^{H^{n_{c_W}+n_{c_U}}}
\tilde f(s_k)\,b_k(\xi_\alpha,\zeta_u)
\right|
\le
\frac{\varepsilon}{2}.
\end{equation}

Combining \eqref{eq:productApprox:LipschitzBound} and \eqref{eq:productApprox:rescaledApprox}, we conclude as follows:
\begin{align*}
&\sup_{\alpha\in W}\sup_{u\in U}
\left|
f(\alpha,u)
-
\sum_{k=1}^{H^{n_{c_W}+n_{c_U}}}
\tilde f(s_k)\,b_k\!\left(
\frac{\beta}{\beta_W}P_{\mathcal C_W}(\alpha),\,
\frac{\beta}{\beta_U}P_{\mathcal C_U}(u)
\right)
\right| \\
&\le
\sup_{\alpha\in W}\sup_{u\in U}
|f(\alpha,u)-\hat f(P_{\mathcal C_W}(\alpha),P_{\mathcal C_U}(u))| \\
&+
\sup_{\alpha\in W}\sup_{u\in U}
\left|
\hat f(P_{\mathcal C_W}(\alpha),P_{\mathcal C_U}(u))
-
\sum_{k=1}^{H^{n_{c_W}+n_{c_U}}}
\tilde f(s_k)\,b_k\!\left(
\frac{\beta}{\beta_W}P_{\mathcal C_W}(\alpha),\,
\frac{\beta}{\beta_U}P_{\mathcal C_U}(u)
\right)
\right| \\
&\le \frac{\varepsilon}{2}+\frac{\varepsilon}{2}
=\varepsilon.
\end{align*}
Finally, for each sampling point 
\(
s_k=(z_k,w_k)\in[-\beta,\beta]^{n_{c_W}}\times[-\beta,\beta]^{n_{c_U}},
\)
we have \[
\tilde f(s_k) = \hat f\l  \frac{\beta_W}{\beta} z_k , \frac{\beta_U}{\beta} w_k \r  = f\l \frac{\beta_W}{\beta} I_{\mathcal C_W}[z_k] , \frac{\beta_U}{\beta} I_{\mathcal C_U}[w_k]  \r =: f(\alpha_k,u_k)
\]
where $\alpha_k \in \cB_{\beta_W,\Vert \cdot \Vert_{\Lp{\infty}},\Omega_W}(0)$ and $u_k \in \cB_{\beta_U,\Vert \cdot \Vert_{\Lp{\infty}},\Omega_U}(0)$, 
which yields the claimed representation.

\end{proof}

\begin{remark}[Uniform Functional Approximation in Product Spaces] \label{rem:uniformProduct}
    We can extend Proposition \ref{thm:back:functionalApproximationLinftyProductSpace} to families of functionals. The same argument as in \cite[Remark 3.7]{weihs2025MOL} shows that the construction is uniform over any family \(\{f_j\}_{j\in J}\) of functionals defined using the same spaces \(W\) and \(U\), provided the constants entering the proof are bounded uniformly in \(j\). More precisely, let us assume that
\[
\sup_{j\in J} L_{f_j}<\infty
\qquad\text{and}\qquad
\sup_{j\in J} C_{\hat f_j} = \sup_{j \in J} \sup_{\alpha \in W} \sup_{u \in U} \vert f_j(\alpha,u) \vert <\infty,
\]
where \(L_{f_j}\) is the Lipschitz constant of \(f_j\), and \(C_{\hat f_j}\) denotes the uniform bound placing the associated finite-dimensional surrogate \(\hat f_j\) in the same approximation class as in the proof of Proposition \ref{thm:back:functionalApproximationLinftyProductSpace}. Then there exist sampling pairs
\(
\{(\alpha_k,u_k)\}_{k=1}^{H^{n_{c_W}+n_{c_U}}}
\)
and neural networks
\(
\{b_k\}_{k=1}^{H^{n_{c_W}+n_{c_U}}}
\)
in the same \(\varepsilon\)-dependent network class as in Proposition \ref{thm:back:functionalApproximationLinftyProductSpace}, such that
\[
\sup_{j\in J}\sup_{\alpha\in W}\sup_{u\in U}
\left|
f_j(\alpha,u)
-
\sum_{k=1}^{H^{n_{c_W}+n_{c_U}}}
f_j(\alpha_k,u_k)\,b_k(\bm\alpha,\bm u)
\right|
\le \varepsilon.
\]
In particular, the conclusion always applies to any finite family of functionals.
\end{remark}

\begin{proof}[Proof of Proposition \ref{prop:multipleProductSpace}]
By Assumption, for all $(\alpha,u) \in W \times U$, the functions $x \mapsto G[\alpha][u](x)$ are in $V$. Consequently, we can apply \cite[Theorem 5]{liu2024neuralscalinglawsdeep}. Specifically, there exists a constant $C$ depending on $\gamma_V$, $L_V$ so that the following holds. For $\eps_0 > 0$, let $N := C \sqrt{d_V} \eps_0^{-1}$ and consider the network class $\cF_1 := \cF_{\rm NN}(d_V,1,L_1,p_1,K_1,\kappa_1,R_1)$ whose parameters scale as \begin{align*}
&L_1 = \mathcal{O}\left(d_V^2\log d_V+d_V^2\log(\varepsilon_0^{-1})\right),\quad p_1 = \mathcal{O}(1),\quad K_1 = \mathcal{O}\left(d_V^2\log d_V+d_V^2\log(\varepsilon_0^{-1})\right),\\
&\kappa_1=\mathcal{O}(d_V^{d_V/2+1}\varepsilon_0^{-(d_V+1)}),\qquad \qquad \quad R_1=1
    \end{align*}
where the constants hidden in $\mathcal{O}$ depend on $\gamma_V$ and $L_V$. Then, there exists
\begin{itemize}
    \item networks $\{\tau_\ell\}_{k=1}^{N^{d_V}} \subset \cF_1$
    \item points $\{v_\ell\}_{\ell=1}^{N^{d_V}} \subset \Omega_V$
\end{itemize}
such that
\begin{equation} \label{eq:prop:operator:eq1}
    \sup_{x \in \Omega_V}
\left|
G[\alpha][u](x)-\sum_{\ell=1}^{N^{d_V}} G[\alpha][u](v_\ell)\,\tau_\ell(x)
\right|
\le \eps_0.
\end{equation}
Notably, we also recall from the proof of \cite[Theorem 5]{liu2024neuralscalinglawsdeep} that $0\leq \tau_\ell(x) \leq 1$ (where the last inequality follows by definition of the network class $\cF_1$ with $R_2=1$).

Next, we consider the $N^{d_V}$ functionals $f_\ell: \cB_{\beta_W,\Vert \cdot \Vert_{\Lp{\infty}},\Omega_W}(0) \times \cB_{\beta_U,\Vert \cdot \Vert_{\Lp{\infty}},\Omega_U}(0) \mapsto \bbR$ defined by $f_\ell(\alpha,u) = G[\alpha][u](v_\ell)$. For $(\alpha_1,u_1), (\alpha_2,u_2) \in \cB_{\beta_W,\Vert \cdot \Vert_{\Lp{\infty}},\Omega_W}(0) \times \cB_{\beta_U,\Vert \cdot \Vert_{\Lp{\infty}},\Omega_U}(0)$, we estimate as follows: \begin{align}
    \vert f_\ell(\alpha_1,u_1) - f_\ell(\alpha_2,u_2) \vert &= \vert G[\alpha_1][u_1](v_\ell) - G[\alpha_2][u_2](v_\ell) \vert \notag \\
    &\leq \vert G[\alpha_1][u_1](v_\ell) - G[\alpha_1][u_2](v_\ell) \vert + \vert G[\alpha_1][u_2](v_\ell) - G[\alpha_2][u_2](v_\ell) \vert \notag \\
    &\leq \Vert G[\alpha_1][u_1] - G[\alpha_1][u_2] \Vert_{\Lp{\infty}(\Omega_V)} + \Vert G[\alpha_1] - G[\alpha_2] \Vert_{\Lp{\infty}(\{ u:\Omega_U \mapsto \bbR \spaceBar \Vert u \Vert_{\Lp{\infty}} \leq \beta_U \} \times \Omega_V)} \notag \\
    &\leq L_{\mathcal{G}} \Vert u_1 - u_2 \Vert_{\Lp{r_\mathcal{G}}(\Omega_U)} + L_G \Vert \alpha_1 - \alpha_2 \Vert_{\Lp{r_G}(\Omega_W)} \label{eq:prop:operator:eq2} \\
    &\leq \max\{L_{\mathcal{G}}\vert \Omega_U \vert^{1/r_\mathcal{G}},L_{G}\vert \Omega_W \vert^{1/r_G}\} \max\{ \Vert u_1 - u_2 \Vert_{\Lp{\infty}(\Omega_U)},\Vert \alpha_1 - \alpha_2 \Vert_{\Lp{\infty}(\Omega_W)}\} \label{eq:prop:operator:eq3}
\end{align}
where we used Assumptions \ref{assumption:Main:assumptions:O1} and \ref{assumption:Main:assumptions:O2} for \eqref{eq:prop:operator:eq2}. We note that the condition \eqref{eq:prop:operator:eq3} (instead of Lipschitz continuity with respect to the product norm \eqref{eq:prop:productNorm}---this implies that we replace $L_f C_{\textrm{prod}}$ by $C_{\textrm{prod}}$) is sufficient in the proof of Proposition \ref{thm:back:functionalApproximationLinftyProductSpace}. We therefore apply the latter (in conjunction with Remark \ref{rem:uniformProduct}). Specifically, there exist constants \begin{itemize}
    \item $C'$ depending on $\beta_W$, $\beta_U$, $L_\mathcal{G}$, $\gamma_U$, $r_\mathcal{G}$, $L_G$, $\gamma_W$, $r_G$
    \item  $C_{W}$ depending on $L_\mathcal{G}$, $\gamma_U$, $r_\mathcal{G}$, $L_G$, $\gamma_W$, $r_G$, $L_W$
    \item $C_U$ depending on $L_\mathcal{G}$, $\gamma_U$, $r_\mathcal{G}$, $L_G$, $\gamma_W$, $r_G$, $L_U$
\end{itemize}
such that the following holds. For any $\eps_1 > 0$, \begin{itemize}
        \item let $\delta_W=C_{W}\varepsilon_1$ and let $\{a_i\}_{i=1}^{n_{c_W}}\subset \Omega_W$ be points so that $\{\mathcal{B}_{\delta_W}(a_i) \}_{ i  = 1}^{n_{c_W}}$ is a cover of $\Omega_W$ for some $n_{c_W}$;
        \item let $\delta_U=C_{U}\varepsilon_1$ and let $\{c_i\}_{i=1}^{n_{c_U}}\subset \Omega_U$ be points so that $\{\mathcal{B}_{\delta_U}(c_i) \}_{ i  = 1}^{n_{c_U}}$ is a cover of $\Omega_U$ for some $n_{c_U}$;
        \item let $H = 2C' \sqrt{n_{c_W} + n_{c_U}} \eps_1^{-1}$ and consider the network class $\cF_2 := \cF_{\rm NN}(n_{c_W} + n_{c_U}, 1 , L_2, p_2, K_2, \kappa_2, R_2)$ 
     with parameters scaling as
     \begin{align*}
    &L_2=\mathcal{O}\left((n_{c_W} + n_{c_U})^2\log(n_{c_W} + n_{c_U})+(n_{c_W} + n_{c_U})^2[\log(\varepsilon_1^{-1}) + \log(2)]\right), \qquad p_2 = \mathcal{O}(1),\\
    &K_2 = \mathcal{O}\left((n_{c_W} + n_{c_U})^2\log(n_{c_W} + n_{c_U})+(n_{c_W} + n_{c_U})^2[\log(\varepsilon_1^{-1}) + \log(2)]\right), \\ 
    &\kappa_2=\mathcal{O}((n_{c_W} + n_{c_U})^{(n_{c_W} + n_{c_U})/2+1}\varepsilon_1^{-(n_{c_W} + n_{c_U} +1)} 2^{n_{c_W} + n_{c_U} +1}),\qquad \, R_2=1
    \end{align*}
    where the constants hidden in $\mathcal{O}$ depend on $\beta_W, \beta_U$, $L_\mathcal{G}$, $\gamma_U$, $r_\mathcal{G}$, $L_G$, $\gamma_W$, $r_G$.
    \end{itemize} 
    Then, there exists networks
     $\{b_k\}_{k=1}^{H^{n_{c_W} + n_{c_U}}} \subset  \cF_2$ and functions $$\{\alpha_k\}_{k=1}^{H^{n_{c_W} + n_{c_U}}} \subset \cB_{\beta_W,\Vert \cdot \Vert_{\Lp{\infty}},\Omega_W}(0), \, \{u_k\}_{k=1}^{H^{n_{c_W} + n_{c_U}}} \subset \cB_{\beta_U,\Vert \cdot \Vert_{\Lp{\infty}},\Omega_U}(0)$$ such that 
     \begin{align}
       \sup_{1 \leq \ell\leq N^{d_V}} \sup_{\alpha\in W} \sup_{u\in U}\left|f_\ell(\alpha,u)-\sum_{k=1}^{H^{n_{c_W}+n_{c_U}}} f_\ell(\alpha_k,u_k) b_k(\bm{\alpha},\bm{u})\right|\leq \varepsilon_1,
        \label{eq:prop:operator:eq4}
    \end{align}
    where $\bm{\alpha}=\frac{\max\{\beta_W,\beta_U\}}{\beta_W}(\alpha(a_1), \alpha(a_2),\dots,\alpha(a_{n_{c_W}}))^\top$, $\bm{u}=\frac{\max\{\beta_W,\beta_U\}}{\beta_U}(u(c_1), u(c_2),\dots,u(c_{n_{c_U}}))^\top$.

We continue by estimating as follows: \begin{align}
    &\sup_{\alpha \in W} \sup_{u \in U} \sup_{x \in \Omega_V} \left\vert G[\alpha][u](x) - \sum_{\ell=1}^{N^{d_V}} \sum_{k=1}^{H^{n_{c_W}+n_{c_U}}} G[\alpha_k][u_k](v_\ell) b_k(\bm{\alpha},\bm{u}) \tau_\ell(x) \right\vert \notag \\
    &\leq \sup_{\alpha \in W} \sup_{u \in U} \sup_{x \in \Omega_V} \left\vert G[\alpha][u](x) - \sum_{\ell=1}^{N^{d_V}} G[\alpha][u](v_\ell) \tau_\ell(x) \right\vert \notag \\
    &+ \sup_{\alpha \in W} \sup_{u \in U} \sup_{x \in \Omega_V} \left\vert \sum_{\ell=1}^{N^{d_V}} \ls G[\alpha][u](v_\ell) \tau_\ell(x) - \sum_{k=1}^{H^{n_{c_W}+n_{c_U}}} G[\alpha_k][u_k](v_\ell) b_k(\bm{\alpha},\bm{u}) \tau_\ell(x) \rs \right\vert \notag \\
    &\leq \eps_0 + \sup_{\alpha \in W} \sup_{u \in U} \sup_{x \in \Omega_V}  \sum_{\ell=1}^{N^{d_V}}  \tau_\ell(x) \left\vert G[\alpha][u](v_\ell) - \sum_{k=1}^{H^{n_{c_W}+n_{c_U}}} G[\alpha_k][u_k](v_\ell) b_k(\bm{\alpha},\bm{u}) \right\vert \label{eq:prop:operator:eq5} \\
    &\leq \eps_0 + \eps_1 \sup_{x\in \Omega_V} \sum_{\ell=1}^{N^{d_V}} \tau_\ell(x) \label{eq:prop:operator:eq6}
\end{align}
where we used \eqref{eq:prop:operator:eq1} and the fact that $0 \leq \tau_\ell(x) \leq 1$ for \eqref{eq:prop:operator:eq5} and \eqref{eq:prop:operator:eq4} for \eqref{eq:prop:operator:eq6}. For the last term, we note that for any $0 < \eta < \beta_V$, the constant function equal to $\eta$ is included in $V$. In particular, by \eqref{eq:prop:operator:eq1}, we have \begin{align}
    \eps_0 \geq \sup_{x \in \Omega_V}
\left|
\eta-\eta\sum_{\ell=1}^{N^{d_V}} \tau_\ell(x)
\right| = \eta \sup_{x \in \Omega_V}
\left|
1-\sum_{\ell=1}^{N^{d_V}} \tau_\ell(x)
\right| \notag
\end{align}
which implies that \begin{equation} \label{eq:prop:operator:eq7}
    \sum_{\ell=1}^{N^{d_V}}\tau_\ell(x) \leq 1 + \frac{\eps_0}{\eta}
\end{equation}
for all $x \in \Omega_V$.
Setting $\eps_0 = \frac{\eps}{2}$, $\eps_1 = \frac{\eps}{2(1 + \frac{\eps}{2\eta})}$ and inserting \eqref{eq:prop:operator:eq7} into \eqref{eq:prop:operator:eq6} yields: \begin{align}
    \sup_{\alpha \in W} \sup_{u \in U} \sup_{x \in \Omega_V} \left\vert G[\alpha][u](x) - \sum_{\ell=1}^{N^{d_V}} \sum_{k=1}^{H^{n_{c_W}+n_{c_U}}} G[\alpha_k][u_k](v_\ell) b_k(\bm{\alpha},\bm{u}) \tau_\ell(x) \right\vert &\leq \eps. \notag
\end{align}
The final network scalings for $\cF_1$ are:
\begin{align*}
&L_1 = \mathcal{O}\left(d_V^2\log d_V+d_V^2(\log(\varepsilon^{-1})+\log(2))\right),\qquad p_1 = \mathcal{O}(1),\\
&K_1 = \mathcal{O}\left(d_V^2\log d_V+d_V^2(\log(\varepsilon^{-1})+\log(2))\right),\qquad \kappa_1=\mathcal{O}(d_V^{d_V/2+1}\varepsilon^{-(d_V+1)}2^{d_V+1}),\\
&R_1=1, \qquad N= 2C\sqrt{d_V}\eps^{-1}.
    \end{align*}
Noting that $\eps_1 = \frac{\eps}{2(1 + \frac{\eps}{2\eta})} = \frac{\eps \eta}{2\eta + \eps} \asymp \frac{\eps}{2}$ for sufficiently small $\eps$, the final network scalings for $\cF_2$ are:
\begin{align*}
    &L_2=\mathcal{O}\left((n_{c_W} + n_{c_U})^2\log(n_{c_W} + n_{c_U})+(n_{c_W} + n_{c_U})^2[\log(\varepsilon^{-1}) + 2\log(2)]\right), \qquad p_2 = \mathcal{O}(1),\\
    &K_2 = \mathcal{O}\left((n_{c_W} + n_{c_U})^2\log(n_{c_W} + n_{c_U})+(n_{c_W} + n_{c_U})^2[\log(\varepsilon^{-1}) + 2\log(2)]\right), \\ 
    &\kappa_2=\mathcal{O}((n_{c_W} + n_{c_U})^{(n_{c_W} + n_{c_U})/2+1}\varepsilon^{-(n_{c_W} + n_{c_U} +1)} 2^{2(n_{c_W} + n_{c_U} +1)}),\, R_2=1, \, H = 4C' \sqrt{n_{c_W} + n_{c_U}} \eps^{-1}.
\end{align*}
\end{proof}

\begin{proof}[Proof of Theorem \ref{thm:minimaxDeepONet}]
    \begin{enumerate}
        \item From the proof of Corollary \ref{cor:codMNO}, we know that $W$ and $U$ are compact sets of $\Lp{r_G}(\Omega_W)$ and $\Lp{r_G}(\Omega_U)$, respectively. With $\eta > \min\left\{1 + \frac{1}{d_W},1+\frac{1}{d_U}\right\}$, by Lemma \ref{lem:boundedLipCube}, at least one of $W$ and $U$ will contain a hypercube $Q_\eta$. Without loss of generality, we assume that $W$ does so. By Lemma \ref{lem:productCubeFromSingleCube}, we can embed $Q_\eta$ into the product space, i.e. we obtain that $W \times \{0\} \subset W \times U$ will contain a hypercube $\tilde{Q}_\eta$. We note that $W \times U$ is also compact as product of compact spaces.

        Let $r \in \bbN$ and $\delta > 0$. By the above, all the conditions to apply \cite[Corollary 2.12]{lanthalerStuart} on the product Banach space $\Lp{r_G}(\Omega_W) \times \Lp{r_G}(\Omega_U)$ are satisfied. In particular, we deduce the existence of $\overline{\eps}$ and $c > 0$ such that for $0< \eps \leq \overline{\eps}$ and any operator of neural network type $\nn_\eps$ satisfying \eqref{eq:thm:minimax2:approximation}, we have $\cC(\nn_\eps) \geq \exp(c\eps^{-1/(\eta + 1 + \delta)r})$. 

        We conclude by noting that any network of the form \eqref{eq:separable2} is an operator of neural network type on the product space $\Lp{r_G}(\Omega_W) \times \Lp{r_G}(\Omega_U)$ by Lemma \ref{lem:complexityConcatenated}. The latter also implies that \[
        \Vert \Theta \Vert_0 + HK_2 + NK_1 \gtrsim \cC(\nn_\eps) \geq \exp(c\eps^{-1/(\eta + 1 + \delta)r}).
        \]
        
 \item From part 1, we may take $r=1$ to obtain $G:\Lp{r_G}(\Omega_W) \times \Lp{r_G}(\Omega_U) \to V$ which is Frechet differentiable on $\Lp{r_G}(\Omega_W) \times \Lp{r_G}(\Omega_U)$. Specifically, from the proof of \cite[Corollary 2.12]{lanthalerStuart}, we know that 
\[
G[\alpha][u](x)=F(\alpha,u)\phi(x),
\]
where $F:\Lp{r_G}(\Omega_W)\times \Lp{r_G}(\Omega_U)\to \mathbb R$ is the Frechet differentiable functional provided by \cite[Theorem 2.11]{lanthalerStuart}, and $\phi\in V$ is a fixed nontrivial function.

Next, the proof of \cite[Lemma A.7]{lanthalerStuart} shows that
\[
\sup_{\alpha\in \Lp{r_G}(\Omega_W)} \sup_{u\in \Lp{r_G}(\Omega_U)} \|DF(\alpha,u)\|_{(\Lp{r_G}(\Omega_W)\times \Lp{r_G}(\Omega_U))^*}<\infty.
\]
Hence, $F$ is Lipschitz on $\Lp{r_G}(\Omega_W) \times \Lp{r_G}(\Omega_U)$ with Lipschitz constant $\textrm{Lip}(F)$.
It remains to deduce the two required Lipschitz properties. First, for fixed $\alpha$ and any $u_1,u_2$,
we have \begin{align}
    \Vert G[\alpha][u_1] - G[\alpha][u_2]\Vert_{\Lp{\infty}(\Omega_V)} &= \Vert F(\alpha,u_1)\phi-F(\alpha,u_2)\phi \Vert_{\Lp{\infty}(\Omega_V)} \notag \\
    &\leq \mathrm{Lip}(F)\Vert \phi \Vert_{\Lp{\infty}(\Omega_V)} \Vert (\alpha,u_1) - (\alpha,u_2) \Vert_{\Lp{r_G}(\Omega_W) \times \Lp{r_G}(\Omega_U)} \notag \\
    &\leq C_{\textrm{prod}} \mathrm{Lip}(F) \beta_V \Vert u_1 - u_2 \Vert_{\Lp{r_G}(\Omega_U)} \label{eq:thm:minimax2:norm}
\end{align}
where we used Assumption \ref{assumption:Main:assumptions:N2} for \eqref{eq:thm:minimax2:norm}.
We can therefore define $L_{\mathcal G} = C_{\textrm{prod}} \textrm{Lip}(F) \beta_V >0 $ and $r_\mathcal{G} = r_G$. Second, for $\alpha_1,\alpha_2$ and any $u$, we estimate as follows:
\begin{align}
\|G[\alpha_1]-G[\alpha_2]\|_{\Lp{\infty}(\{u:\|u\|_{\Lp{\infty}(\Omega_U)}\le \beta_U\}\times \Omega_V)}
&=
\sup_{u \in U} |F(\alpha_1,u)-F(\alpha_2,u)|\,\|\phi\|_{\Lp{\infty}(\Omega_V)} \notag \\
&\le
\Lip(F) \beta_V \|(\alpha_1,u)-(\alpha_2,u)\|_{\Lp{r_G}(\Omega_W)\times \Lp{r_G}(\Omega_U)} \notag \\
&\le
\Lip(F) C_{\textrm{prod}} \beta_V \|\alpha_1-\alpha_2\|_{\Lp{r_G}(\Omega_W)} \label{eq:thm:minimax2:norm2}\\
&=: L_G \|\alpha_1-\alpha_2\|_{\Lp{r_G}(\Omega_W)} \notag
\end{align}
where we used Assumption \ref{assumption:Main:assumptions:N2} for \eqref{eq:thm:minimax2:norm2}. This concludes the proof. 
\item The lower bound in \eqref{eq:thm:minimax:minimax} is given by combining parts 1 and 2 of the theorem. The upper bound is a direct consequence of Proposition \ref{prop:multipleProductSpace} and Remark \ref{rem:count}.
    \end{enumerate}
\end{proof}

\end{document}